%% file: THV_v5_AAAI.tex
\documentclass[letterpaper]{article} 
\usepackage[preprint]{aaai2027}  
\usepackage[hyphens]{url}  
\usepackage{graphicx} 
\usepackage{natbib}  
\usepackage{caption} 
\usepackage{algorithm}
\usepackage{algorithmic}
\usepackage{cuted}
\usepackage{newfloat}
\usepackage{listings}
\DeclareCaptionStyle{ruled}{labelfont=normalfont,labelsep=colon,strut=off} 
\floatstyle{ruled}
\newfloat{listing}{tb}{lst}{}
\floatname{listing}{Listing}

\usepackage{booktabs}

\usepackage{amsmath}
\usepackage{amsthm}
\usepackage{amsfonts}
\usepackage{amssymb}
\usepackage{comment}
\usepackage{subcaption}

\newtheorem*{remark}{Remark}
\newtheorem{theorem}{Theorem}
\newtheorem{lemma}{Lemma}
\newtheorem{corollary}{Corollary}

\newtheorem{fact}{Fact}

\newcommand{\ttn}[1]{\textcolor{black}{#1}}

\newcommand{\faithful}{$\bullet$}    
\newcommand{\extension}{$\circ$}     

\title{Top-$k$ Pareto Bandits: Hypervolume Regret for Multi-Objective Slate Selection}

\author{
    Nicolas Gutowski\textsuperscript{\rm 1},   Fabien Chhel\textsuperscript{\rm 2,1}, Alexandre Letard\textsuperscript{\rm 4,3,1},  Sylvain Lamprier\textsuperscript{\rm 1}\\
}
\affiliations{
    \textsuperscript{\rm 1}Université d'Angers, LERIA, SFR MATHSTIC,  Angers, France\\
    \textsuperscript{\rm 2}ESEO, ERIS, Angers, France\\
    \textsuperscript{\rm 3}ESAIP, Saint-Barthélémy-d'Anjou, France\\
    \textsuperscript{\rm 4}AlphaEdge, Saint-Barthélémy-d'Anjou, France\\

    \{nicolas.gutowski, sylvain.lamprier\}@univ-angers.fr \\
    fabien.chhel@eseo.fr, alexandre.letard@alphaedge-ai.com 
}

\begin{document}

\maketitle

\begin{abstract}
We consider a stochastic multi-objective bandit problem where, at each round, the agent selects a slate of $k$ arms and observes their $d$-dimensional reward vectors under semi-bandit feedback. We do not aim at identifying a single optimal arm; instead, we consider the problem of maintaining a small set of actions that jointly approximate the Pareto frontier. We formalize this objective through the dominated hypervolume induced by the selected subset of arms, and define an $\alpha$-approximate hypervolume regret with respect to the best size-$k$
subset achievable in hindsight, where $\alpha = 1 - 1/e$ reflects the approximation guarantee of greedy maximization for monotone submodular functions. To address this problem, we introduce \textit{THV-UCB}, an
optimistic algorithm that selects arms greedily based on optimistic estimates of their marginal hypervolume contributions. We establish a gap-free regret bound $\tilde{O}(d\sqrt{nkT})$ that holds on every instance, together with a gap-dependent bound $\tilde{O}(nk^{2.5}/\Delta_{\min})$ that becomes polylogarithmic in $T$ once the arms are sufficiently well separated.
Our results provide theoretical support for using small subsets to approximate Pareto fronts in various multi-objective applications.
\end{abstract}

\section{Introduction}
\label{sec:intro}

Most real-world decision problems involve balancing several conflicting criteria, and the corresponding paradigm of Multi-Objective Optimization (MOO) seeks not a single optimum but a Pareto-optimal set of trade-off solutions~\cite{Tian_2022_ACM,Fromer_2023_Patterns}. This challenge extends to the multi-armed bandit (MAB) framework where each pull yields not a scalar but a vector of rewards that captures competing criteria: accuracy vs.\ diversity in recommender systems~\cite{Letard_2024_ESWA,Ezzahra_2025_IJMIR}, profit vs.\ inventory in market making~\cite{Vicente_2026_ESWA}, efficacy vs.\ toxicity in clinical trials~\cite{Kone2023RelaxedPareto, Kone_2025b_PMLR}.

Three complementary aims drive contemporary research: 1)~approximating the Pareto front faithfully and uniformly~\cite{Kone2023RelaxedPareto,Kone2025aConstrainedPareto,Kone_2025b_PMLR,Shahverdikondori2025Group}, 2)~doing so under the noisy, sample-limited feedback that characterizes online and bandit settings to minimize Pareto regret~\cite{Mandow2023ChebyshevMAB, xu2023pareto, Cao2025PreferenceBandit, Huyuk_2021_ML,Xue_2025_AAAI}, and 3)~measuring progress through unary, preference-free indicators such as the dominated hypervolume~\cite{Guerreiro_2021_ACM}.

This third line formalizes MOO progress through the dominated hypervolume (HV)~\cite{Guerreiro_2021_ACM}, the only preference-free indicator that is strictly Pareto-compliant. Hypervolume maximization emerges as a more natural quality criterion in scenarios where the coverage of the Pareto front matters most. HV has been frequently used as a training signal in Pareto Set Learning~\cite{Zhang_2023_NeurIPS, Zhang2023HypervolumeRegret} and in multi-objective reinforcement learning~\cite{Liu_2025_AAAI, Liu_2025_ICLR, Lee_2026_ESWA, Roepke_2025_AAMAS, Song_2026_IEEE, Vicente_2026_ESWA, Letard_2024_ESWA}.

Previous works, however, either operate in continuous black-box domains, query a single point per  round, do not scale with the number of objective $d > 2$, lack theoretical foundations or focus on a specific kind of problem (e.g. concave or convex), lacking applicability and robustness. 




To the best of our knowledge, no previous work has addressed these challenges in a cross-domain online setting. We bridge the three main lines of research in the multi-objective optimization field by introducing \textit{THV-UCB}. This algorithm leverages optimistic reward vectors to greedily maximize marginal hypervolume gains, while utilizing coordinate-wise confidence boxes to safely prune dominated arms and perform initial forced exploration.

Hence, we formalize a stochastic multi-objective bandit problem where, at each round $t$, the agent selects a slate $S_t$ of $k$ arms from a set of $n$ candidates and observes their $d-$dimensional reward vectors under semi-bandit feedback. The performance of $S_t$ is evaluated by the dominated hypervolume it covers relative to a reference point, with our theoretical analysis comparing this performance against the optimal hypervolume achievable by any subset of size $k$.

Our main contributions can be summarized as follows :

\begin{enumerate}
    \item We introduce the Top-$k$ Pareto bandit setting and define an $\alpha$-approximation hypervolume regret with respect to the best size-$k$ subset of the Pareto frontier, more suited for many real-world scenarios ;
    \item We extend previous competitive works \cite{Drugan2013MOBandits, Deb2002NSGA2, yahyaa2015thompson, Mandow2023ChebyshevMAB, auer2002finite, paria2020flexible, zhang2020random, Zhang2023HypervolumeRegret} from Pareto optimization and scalarization methods to this setting and empirically evaluate them for hypervolume maximization in top-$k$ semi-bandit setting considering 1)~four synthetic fronts (linear, convex, concave and clusters) ; 2)~$d \in \mathbb{Z} \cap [2,5] $ conflicting objectives (dimensions of the Pareto front) and associated top-$k \in \mathbb{Z} \cap [3,6] $ (slates-length - arms to be selected at each round) ; 
    \item We propose \textsc{THV-UCB}, an optimistic algorithm using coordinate-wise $\ell_\infty$ confidence boxes and greedy selection on optimistic marginal HV gains. The construction differs from the random HV scalarizations of~\cite{zhang2020random,Zhang_2024_NeurIPS} by directly exploiting the submodularity of HV in a discrete $k$-armed slate setting. Empirically, \textit{THV-UCB} achieves the \emph{lowest cumulative $\alpha$-regret} and the \emph{highest hypervolume} in all four front geometries and dimensions by a margin increasing with $d$. Theoretically, we prove a gap-free  regret bound~$\tilde{O}\!\left(d\sqrt{nkT}\right)$, and a gap-dependent  regret bound~$\tilde{O}\!\left(\frac{nk^{2.5}}{\Delta_{\min}}\right)$ that are polylogarithmic in $T$. 
\end{enumerate}

The paper is organized as follows: Related Work reviews prior literature. Problem Setting depicts the problem setting and regret definition, while Top-$k$ HyperVolume UCB presents the proposed algorithm THV-UCB. Regret Analysis exposes our theoretical analysis of the method and establishes upper bounds on the regret. Finally, Experiments describes our experimental evaluation.

\section{Related Work}
\label{sec:related}

\paragraph{Multi-Objective Optimization (MOO).} 
MOO has a long history of study, with a continued stream of recent work
\cite{Tian_2022_ACM, Ghanbarzadeh_2026_AIR, Jiju_2025_IJAM, Yifan_2025_PCS, He_2026_ACM, Ezzahra_2025_IJMIR}, extending many research fields. Comparable recent works in deep reinforcement learning made use of hypervolume both as a quality criterion and an optimization mean to reduce the computation cost of learning the whole Pareto Front \cite{Zhang_2023_NeurIPS, CAI_2023_NeurIPS, Jinbiao_2023_NeurIPS, Lee_2026_ESWA, Vicente_2026_ESWA}. More specifically, HV-driven Pareto Set Learning methods maximizes HV via gradient descent on a neural preference-conditioned model~\cite{Zhang_2023_NeurIPS, Zhang2023HypervolumeRegret}, while HV-based MORL embeds HV in policy optimization~\cite{Roepke_2025_AAMAS, Song_2026_IEEE}. However, as stated by Zhang et al. \cite{Zhang_2024_NeurIPS}, a major drawback of gradient-based methods for hypervolume maximization is the high computational complexity in obtaining the hypervolume gradient. While showing good results up to $d=4$ objectives, these approaches remain unpractical for an online setup.

\paragraph{Multi-Objective Multi-Armed Bandits (MOMAB).} Multi-objective multi-armed bandits extend the classical bandit framework to vector-valued rewards and Pareto-based notions of optimality. Early work introduced the setting and adapted UCB/TS principles to the multi-objective scenario considering Pareto regret~\cite{Drugan2013MOBandits, Yahyaa2014KG, yahyaa2015thompson, Roijers2017InteractiveTS, xu2023pareto}.
Following them, several works extend regret-minimizing MOMAB through various scalarization techniques like : Chebyshev~\cite{Mandow2023ChebyshevMAB}, lexicographic priorities~\cite{Huyuk_2021_ML}, lexicographic linear bandits~\cite{Xue_2025_AAAI}, and preference-aware customization~\cite{Cao2025PreferenceBandit}.
A parallel line studies \emph{pure exploration} goals, such as identifying feasible arms or approximating the Pareto set up to a relaxation tolerance, including $\varepsilon$-relaxed Pareto set identification, constrained variants, and best-group  Identification~\cite{KatzSamuels2018Feasible, Kone2023RelaxedPareto, Kone2025aConstrainedPareto, Shahverdikondori2025Group}. These works typically aim to recover a large fraction of the Pareto set rather than maintaining a small representative \emph{subset} of fixed size $k$.
Scalarization methods, considering either a single aggregated utility function \cite{BusaFekete2017Gini, roijers2013survey, Roijers2017InteractiveTS,  Mandow2023ChebyshevMAB} or several different trade-offs \cite{zhang2020random, Letard_2024_ESWA, Zhang2023HypervolumeRegret, Cao2025PreferenceBandit, Liu_2025_AAAI, Liu_2025_ICLR}, are often most convenient for the online bandit setup. However these methods involve a fixed distribution of preferences~\cite{roijers2013survey,Roijers2017InteractiveTS, Zhang2023HypervolumeRegret} and do not directly address \emph{set-level} hypervolume of a \emph{size-$k$ slate} under semi-bandit feedback which is complementary to our goal of \emph{preference-free} coverage of the frontier.

\paragraph{Dominated Hypervolume (HV) in MOMAB.} To the best of our knowledge, little work made use of the dominated hypervolume as a learning target in MOMAB problems. In black-box multi-objective optimization, random hypervolume scalarizations provide provable guarantees for exploring Pareto trade-offs~\cite{zhang2020random}  with $\tilde{O}(\sqrt T)$ HV-regret bounds for UCB/TS Bayesian optimization. Later, these bounds were further refined by~\cite{Zhang2023HypervolumeRegret}, who establish an optimal hypervolume regret bound  of $O(T^{-1/k})$ 
The problems studied by Kone et al. \cite{Kone2023RelaxedPareto, Kone2025aConstrainedPareto} and~\cite{Zhang2023HypervolumeRegret} are the closest to ours. Nevertheless, despite the added diversity in arm selection decomposition methods are mainly relevant for Pareto Front Identification, with lower performance for hypervolume maximization (see the Experiments section).
In terms of concept, the closest existing methods to our proposed THV-UCB algorithm are the HV-based approaches proposed by \cite{Zhang_2023_NeurIPS} and \cite{Zhang_2024_NeurIPS} from MORL literature. However, these works do not adress the discrete $k$-armed slate bandit setting we study.

\section{Problem Setting}
\label{sec:model}

\paragraph{Arms, horizon, and rewards.}
We consider $n$ arms indexed by $[n]=\{1,\dots,n\}$ over a horizon of $T$ rounds, with $[T]=\{1,\dots,T\}$.
Pulling arm $i$ at round $t$ yields a $d$-dimensional random reward vector $\mathbf{X}_{i,t}\in[0,1]^d$ with unknown mean
$\boldsymbol{\mu}_i = \mathbb{E}[\mathbf{X}_{i,t}]$.
We assume $(\mathbf{X}_{i,t})_{t\in[T]}$ are independent across $t$ and $i$, and
each coordinate is $\eta$-sub-Gaussian (used for concentration); $\eta$
also serves as the confidence parameter of the algorithm.

\paragraph{Top-$k$ actions and semi-bandit feedback.}
At each round $t\in[T]$, the agent selects a subset $S_t\subseteq[n]$ of size $|S_t|=k$ (a \emph{slate}) and observes the vectors $\{\mathbf{X}_{i,t}: i\in S_t\}$ (semi-bandit feedback).

\paragraph{Dominance and Pareto front.}
For $\mathbf{a},\mathbf{b}\in\mathbb{R}^d$, write $\mathbf{a}\preceq \mathbf{b}$ if $a_j\le b_j$ for all $j\in[d]$, and $\mathbf{a}\prec \mathbf{b}$ if in addition at least one inequality is strict.
Arm $i$ \emph{Pareto-dominates} arm $j$ if $\boldsymbol{\mu}_i \succeq \boldsymbol{\mu}_j$ and $\boldsymbol{\mu}_i\neq \boldsymbol{\mu}_j$.
The Pareto front $\mathcal{P}$ is the set of undominated mean vectors $\{\boldsymbol{\mu}_i:i\in[n]\}$.

\paragraph{Reference point and dominated hypervolume.}
Fix a reference point $\mathbf{r}\in\mathbb{R}^d$ such that $\mathbf{r}\preceq \boldsymbol{\mu}_i$ for all $i\in[n]$ (e.g., $\mathbf{r}=\mathbf{0}$ when rewards lie in $[0,1]^d$).
For any subset $S\subseteq[n]$, its dominated hypervolume is
\begin{equation}
\label{eq:hv-def}
\mathrm{HV}(S)
=
\lambda_d\!\left(
\bigcup_{i\in S}
[r_1,\mu_{i1}] \times \cdots \times [r_d,\mu_{id}]
\right)
\end{equation}
where $\lambda_d$ is the $d$-dimensional Lebesgue measure. \ttn{The dominated hypervolume is a standard performance indicator in multi-objective optimization~\cite{Zitzler2003PerformanceAssessment}.}\footnote{%
Any equivalent definition of dominated hypervolume can be used.
Our analysis only relies on monotonicity and submodularity of $\mathrm{HV}(\cdot)$ as a set function over mean vectors.}


\paragraph{Performance metric: (approximation) hypervolume regret.}
We evaluate performance in terms of pseudo-regret with respect to the
mean rewards~\cite{Bubeck2012RegretAO}. Let
$S^\star \in \arg\max_{|S|=k} \mathrm{HV}(S)$ denote a best subset of
size $k$ in hindsight, and $V^\star = \mathrm{HV}(S^\star)$ its
hypervolume. The (ideal) instantaneous regret and cumulative regret are
\[
r_t = V^\star - \mathrm{HV}(S_t), \quad
R_T = \sum_{t=1}^{T} r_t.
\label{eq:regret}
\]
Since maximizing a monotone submodular function under a cardinality
constraint is NP-hard and typically addressed by greedy selection, which
achieves approximation factor $\alpha = 1 - 1/e$, our guarantees are
stated for the $\alpha$-approximation regret
\begin{equation}
\bar{r}_t = \alpha V^\star - \mathrm{HV}(S_t), \quad
\bar{R}_T = \sum_{t=1}^{T} \bar{r}_t.
\label{eq:regret-alpha}
\end{equation}
This benchmark cleanly separates computational approximation (the
$\alpha$ factor, unavoidable for any polynomial-time algorithm) from
statistical learning (the gap between $\mathrm{HV}(S_t)$ and what the
algorithm could achieve with known means).

\paragraph{Additional notation.}
We write $\log$ for the natural logarithm and $\tilde{O}(\cdot)$ to hide polylogarithmic factors.
Vectors are bold lowercase, sets uppercase, and $\|\cdot\|_\infty$ denotes the $\ell_\infty$ norm.
For any arm $i$, let $N_i(t)$ be the number of times $i$ has been selected up to (and including) round $t$, and let $\widehat{\boldsymbol{\mu}}_i(t)$ denote its empirical mean vector.
Table~\ref{tab:notation} in the appendix.

\section{\textit{Top-$k$} HyperVolume UCB (\textit{THV-UCB})}
\label{sec:alg}

We now present \textit{THV-UCB}, an optimistic algorithm that maintains coordinate-wise UCB boxes for each arm's mean vector and constructs at every round a size-$k$ subset by greedily maximizing the \emph{optimistic} marginal hypervolume gain.

\paragraph{UCB boxes (coordinate-wise optimism).}
At round $t$, for every arm $i$ and objective $j$, we form an upper confidence bound 
\[
u_{i,j}(t) = \widehat{\mu}_{i,j}(t-1) \;+\; \beta_i(t)
\]
with
\[
\beta_i(t) = \sqrt{\frac{2\eta \log\!( nd\, t^2/\delta)}{\max\{1,N_i(t-1)\}}}
\]
where $\eta>0$ is a confidence parameter and $\delta\in(0,1)$ is the target failure probability.
This yields an optimistic vector $\mathbf{u}_i(t) = (u_{i,1}(t),\dots,u_{i,d}(t))$.
Optionally, since rewards lie in $[0,1]^d$, we clip $\mathbf{u}_i(t)$ coordinate-wise to $[0,1]$.

\paragraph{THV-UCB: Greedy Subset Construction.} 

Algorithm~\ref{alg:thvucb} describes our proposed method \textit{THV-UCB}\footnote{The code of \textit{THV-UCB} is available in our GitHub repository \url{https://github.com/ngutowski/topk-pareto-bandits}}, which aims to maximize the hypervolume defined in~\eqref{eq:hv-def} via a greedy construction of the subset $S_t$ based on optimistic estimates. After an initialization phase, the algorithm computes UCB-based confidence intervals for each arm and applies a safe pruning step to form a candidate set $A_t$ of arms that are not confidently dominated, i.e., arms that may still contribute to an optimal solution.

The optimistic hypervolume $HV_t^{\mathrm{UCB}}(S)$ is obtained by replacing the unknown means $\mu_{i,j}$ in~\eqref{eq:hv-def} with their UCB counterparts $u_{i,j}(t)$. For any set $S \subseteq [n]$ and arm $i \notin S$, we define the marginal UCB hypervolume gain as
$
\Delta^{\mathrm{UCB}}_{\mathrm{HV},t}(i \mid S)
= HV_t^{\mathrm{UCB}}(S \cup \{i\}) - HV_t^{\mathrm{UCB}}(S).
$

The subset $S_t$ is then constructed greedily: starting from $S=\emptyset$, arms are sequentially added by maximizing $\Delta^{\mathrm{UCB}}_{\mathrm{HV},t}(i \mid S)$. This procedure yields a $(1 - 1/e)$-approximation of $
\max_{|S|=k} HV_t^{\mathrm{UCB}}(S),
$ 
by standard results on monotone submodular maximization. The selected arms in $S_t$ are then pulled and the statistics updated.

From a computational perspective, computing $\Delta^{\mathrm{UCB}}_{\mathrm{HV},t}(i \mid S)$ for all candidates at each greedy step leads to a per-round complexity of $O (k n \cdot \mathrm{costHV})$, where $\mathrm{costHV}$ denotes the cost of updating the hypervolume. In practice, incremental updates, dominance pruning, and the regime $k \ll n$ make the greedy selection efficient.

\begin{algorithm}[ht!]
\caption{\textit{THV-UCB}: Top-$k$ HyperVolume UCB}
\label{alg:thvucb}
\begin{algorithmic}[1]
\STATE \textbf{Input:} subset size $k$, reference point $\mathbf{r}$, confidence parameter $\eta$, failure level $\delta$, horizon $T$, minimum pulls $m_0 \ge 1$ (default: $m_0=2$)
\STATE \textbf{Init:} $N_i(0) \gets 0$, $\widehat{\boldsymbol{\mu}}_i(0) \gets \mathbf{0}$ for all $i \in [n]$; set $t \gets 1$
\WHILE{$\exists\, i \in [n]$ s.t. $N_i(t-1) < m_0$}
    \STATE $S_t \gets$ the $k$ arms with smallest $N_i(t-1)$ (ties broken arbitrarily)
    \STATE Play all arms in $S_t$, observe $\{\mathbf{X}_{i,t}: i\in S_t\}$, update $N_i(t)$, $\widehat{\boldsymbol{\mu}}_i(t)$
    \STATE $t \gets t+1$
\ENDWHILE
\FOR{$t$ to $T$}
    \FOR{each arm $i\in[n]$}
        
        \STATE  $\beta_i(t) \gets \sqrt{\dfrac{2\eta \log(nd\,t^2/\delta)}{\max\{1,N_i(t-1)\}}}$
        
        \STATE $\mathbf{u}_i(t) \gets \widehat{\boldsymbol{\mu}}_i(t-1) + \beta_i(t)\,\mathbf{1}_d$
        \STATE $\mathbf{L}_i(t) \gets \widehat{\boldsymbol{\mu}}_i(t-1) - \beta_i(t)\,\mathbf{1}_d$
    \ENDFOR
    \STATE \textit{Safe pruning:}
    \STATE \hspace{2.8em} $A_t \leftarrow \{ i \in [n] : \forall\, j \neq i,\ \mathbf{u}_i(t) \npreceq \mathbf{L}_j(t) \}$
    
    \STATE \hspace{2.8em} if $|A_t| < k$, set $A_t \gets [n]$
    \STATE $S \gets \emptyset$
    \WHILE{$|S|<k$}
        \STATE Select $i^\star \in \arg\max_{i\in A_t\setminus S} \Delta^{\mathrm{UCB}}_{\mathrm{HV},t}(i\mid S)$
        \STATE $S \gets S \cup \{i^\star\}$
    \ENDWHILE
    \STATE Play all arms in $S$, observe $\{\mathbf{X}_{i,t}: i\in S\}$, update $N_i(t)$, $\widehat{\boldsymbol{\mu}}_i(t)$ for $i\in S$
\ENDFOR
\end{algorithmic}
\end{algorithm}

\section{Regret Analysis}\label{sec:theoretical_analysis}

We establish two complementary guarantees on the $\alpha$-approximation hypervolume regret of THV-UCB: a \emph{gap-free} bound, valid on every instance regardless of how close the arms are to each other, and a \emph{gap-dependent} bound, which becomes polylogarithmic in $T$ as soon as the instance is well separated. Since maximizing hypervolume under a cardinality constraint is NP-hard, THV-UCB relies on greedy maximization of a monotone submodular optimistic objective, and both guarantees are stated  for the $\alpha$-regret $\bar{R}_T$ with $\alpha = 1 - 1/e$ .

\paragraph{Benchmark and separation quantity.}
Recall $S^\star \in \arg\max_{|S|=k}\mathrm{HV}(S)$ and $V^\star = \mathrm{HV}(S^\star)$ defined in (2). For the gap-dependent analysis, we introduce a second, purely proof-internal benchmark: let $G^\star = \{i^\star_1,\dots,i^\star_k\}$ be built by greedy maximization of the \emph{true} marginal gains $\Delta(i \mid S) = \mathrm{HV}(S\cup\{i\}) - \mathrm{HV}(S)$, i.e.\ $i^\star_\ell \in \arg\max_{i \notin G^\star_{\ell-1}} \Delta(i \mid G^\star_{\ell-1})$, with $G^\star_\ell = G^\star_{\ell-1} \cup \{i^\star_\ell\}$ and $G^\star_0 = \emptyset$. Assuming the greedy maximizer is unique at every stage, define the stage-$\ell$ gaps and the minimum gap $\Delta_\ell(i) = \Delta(i^\star_\ell \mid G^\star_{\ell-1}) - \Delta(i \mid G^\star_{\ell-1}),\Delta_{\min} = \min_{\ell \in [k]} \min_{i \neq i^\star_\ell} \Delta_\ell(i)$.

Note that $G^\star$ never appears in the regret definition: the regret is always measured against $V^\star$; $G^\star$ only serves to track the algorithm's stage-wise greedy progress in the analysis.

We write $\mathcal{E}$ for the event on which all coordinate-wise confidence intervals are valid simultaneously; by a standard sub-Gaussian concentration argument and a union bound (Lemma~\ref{lem:conc}, see the appendix)
, $\mathbb{P}(\mathcal{E}) \geq 1 - \delta$. Finally, $C_d$ denotes the coordinate-wise Lipschitz constant of the hypervolume; one may take $C_d \leq d$ when rewards lie in $[0,1]^d$ and $r = 0$. (Lemma~\ref{lem:lip}, see the appendix).

\begin{theorem}[Gap-free bound (short version)]\label{thm:gapfree}
Assume each reward coordinate is $\eta$-sub-Gaussian and bounded in $[0,1]$. On $\mathcal{E}$,
\[
\bar{R}_T \;=\; O\!\left( C_d \sqrt{n k T \log T} \right).
\]
\end{theorem}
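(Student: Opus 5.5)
The plan is the standard combinatorial semi-bandit decomposition: optimism together with submodular greedy, then Lipschitz transfer and a sum over pull counts. Work throughout on $\mathcal{E}$. Since every $\mu_i\in[0,1]^d$, each instantaneous regret is at most $V^\star\le 1$. The initialization phase therefore costs at most $\lceil n m_0/k\rceil$ rounds, i.e. $O(n)$ regret, which is lower order. After initialization, fix a round $t$ and write $\mathrm{HV}^{\mathrm{UCB}}_t$ for the optimistic hypervolume.

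\emph{Step 1 (optimism and greedy).} On $\mathcal{E}$ we have $\mathbf{u}_i(t)\succeq\boldsymbol{\mu}_i$ coordinate-wise. Hypervolume is monotone in each point, so $\mathrm{HV}^{\mathrm{UCB}}_t(S)\ge \mathrm{HV}(S)$ for every $S$.

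I must also check that safe pruning never removes all of $S^\star$'s usefulness. On $\mathcal{E}$, if $\mathbf{u}_i\preceq \mathbf{L}_j$ then $\boldsymbol{\mu}_i\preceq\boldsymbol{\mu}_j$, so arm $i$ is weakly dominated by arm $j$. Replacing $i$ by $j$ in any set cannot decrease HV. Iterating this argument gives a set $S'\subseteq A_t$ (or $A_t=[n]$) with $|S'|\le k$ and $\mathrm{HV}(S')\ge V^\star$. If $|S'|<k$, pad it, which is harmless by monotonicity.

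$\mathrm{HV}^{\mathrm{UCB}}_t$ is monotone submodular over $A_t$. The greedy guarantee then gives
\[
\mathrm{HV}^{\mathrm{UCB}}_t(S_t)\ \ge\ \alpha\max_{S\subseteq A_t,|S|=k}\mathrm{HV}^{\mathrm{UCB}}_t(S)\ \ge\ \alpha\,\mathrm{HV}^{\mathrm{UCB}}_t(S')\ \ge\ \alpha V^\star .
\]
Hence $\bar r_t\le \mathrm{HV}^{\mathrm{UCB}}_t(S_t)-\mathrm{HV}(S_t)$.

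\emph{Step 2 (Lipschitz transfer).} By Lemma~\ref{lem:lip}, moving the points of $S_t$ from $\boldsymbol\mu_i$ to $\mathbf{u}_i(t)$ changes the union volume by at most $C_d\sum_{i\in S_t}\|\mathbf{u}_i(t)-\boldsymbol\mu_i\|_\infty$. On $\mathcal{E}$ each of these terms is at most $2\beta_i(t)$. This step is the main obstacle. A naive per-point bound on the union of boxes could instead scale like $d$ times a product of side lengths. I will rely on the appendix lemma in its summed form: changing one point by $\varepsilon$ in $\ell_\infty$ changes HV by at most $C_d\varepsilon$, since the union of boxes changes only inside slabs of width $\varepsilon$ in $[0,1]^d$. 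Telescoping over the $k$ points one at a time then yields the sum. Clipping to $[0,1]$ keeps every point inside the unit cube, which is needed for $C_d\le d$.

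\emph{Step 3 (summation).} Combining the previous steps,
\[
\bar R_T\le O(n)+2C_d\sum_{t}\sum_{i\in S_t}\sqrt{\frac{2\eta\log(ndT^2/\delta)}{N_i(t-1)}} .
\]
Each pull of arm $i$ increments $N_i$, and $N_i\ge m_0\ge1$ after initialization. So for each $i$ the inner sum over its pulls is at most $\sum_{m=1}^{N_i(T)}m^{-1/2}\le 2\sqrt{N_i(T)}$. Cauchy–Schwarz with $\sum_i N_i(T)=kT$ gives $\sum_i\sqrt{N_i(T)}\le\sqrt{nkT}$. Treating $\eta,\delta,n,d$ inside the logarithm as constants, $\log(ndT^2/\delta)=O(\log T)$, which yields $\bar R_T=O(C_d\sqrt{nkT\log T})$.
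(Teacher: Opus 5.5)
Your proposal is correct and follows the paper's own route: optimism plus the greedy guarantee give $\bar r_t\le\mathrm{HV}^{\mathrm{UCB}}_t(S_t)-\mathrm{HV}(S_t)$, Lemma~\ref{lem:lip} turns this into $C_d$ times a sum of confidence radii, and $\sum_{s\le m}s^{-1/2}\le 2\sqrt{m}$ with Cauchy--Schwarz and $\sum_i N_i(T)=kT$ gives $O(C_d\sqrt{nkT\log T})$. Your extra care fills in details the paper's proof passes over, without changing the argument: charging initialization separately (bound it by $\min\{T,\lceil nm_0/k\rceil\}\le\sqrt{m_0 nT}$ rather than $O(n)$, so it is lower order in every regime); showing that on $\mathcal E$ each pruned arm is weakly dominated by a surviving one, so greedy over $A_t$ still reaches $\alpha V^\star$ (pruning chains cannot cycle, since $\mathbf u_i\preceq\mathbf L_j$ puts $\mathbf u_i$ strictly below $\mathbf u_j$); using $\|\mathbf u_i(t)-\boldsymbol\mu_i\|_\infty\le 2\beta_i(t)$; and invoking clipping to get $C_d\le d$.
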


\begin{theorem}[Gap-dependent bound (short version)]\label{thm:gapdep} Under the same assumptions, if the greedy maximizer of $G^\star$ is unique at every stage and the safe-pruning step never eliminates an arm of $G^\star$, then on $\mathcal{E}$,
\[
\bar{R}_T \;=\; O\!\left( \frac{n k^{2.5} \log T}{\Delta_{\min}} \right).
\]
\end{theorem}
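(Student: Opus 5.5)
The plan is to show that, on $\mathcal{E}$, the rounds on which the slate is not exactly $G^\star$ are few. Whenever $S_t=G^\star$, the instantaneous $\alpha$-regret is nonpositive, because the classical greedy guarantee gives $\mathrm{HV}(G^\star)\ge \alpha V^\star$.

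It therefore suffices to bound the number of rounds with $S_t\neq G^\star$ and multiply by the worst-case per-round regret. That per-round regret is at most $V^\star\le \prod_j(1-r_j)\le 1$ when $\mathbf r=\mathbf 0$. If a sharper count is needed, I would instead bound it by the largest gap involved. The forced-exploration phase lasts at most $\lceil n m_0/k\rceil$ rounds and contributes an additive $O(n)$.

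\textbf{Step 1: stage-wise reduction.} Suppose $S_t\neq G^\star$, and let $\ell$ be the first greedy stage where the algorithm's choice $i_\ell$ differs from $i^\star_\ell$. At that stage both partial sets equal $G^\star_{\ell-1}$. By the hypothesis that pruning never removes an arm of $G^\star$, $i^\star_\ell\in A_t$, so the greedy rule gives
\[
\Delta^{\mathrm{UCB}}_{\mathrm{HV},t}(i_\ell\mid G^\star_{\ell-1})\ \ge\ \Delta^{\mathrm{UCB}}_{\mathrm{HV},t}(i^\star_\ell\mid G^\star_{\ell-1}).
\]

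\textbf{Step 2: perturbation bound on marginal gains.} The Lipschitz property (Lemma~\ref{lem:lip}) says that changing the mean vectors of the arms of a set $S$ by at most $\beta$ in $\ell_\infty$ changes $\mathrm{HV}(S)$ by at most $C_d\sum_{i\in S}\beta_i$. A marginal gain $\Delta(i\mid S)$ involves only the $|S|+1\le k$ arms in $S\cup\{i\}$. On $\mathcal{E}$ every box contains the true mean, with widths at most $2\beta_i(t)$. Hence
\[
\bigl|\Delta^{\mathrm{UCB}}_{\mathrm{HV},t}(i\mid S)-\Delta(i\mid S)\bigr|\le 2C_d\Bigl(\beta_i(t)+\sum_{s\in S}\beta_s(t)\Bigr).
\]
Combining this with Step 1 yields
\[
\Delta_{\min}\le \Delta_\ell(i_\ell)\le 4C_d\Bigl(\beta_{i_\ell}(t)+\beta_{i^\star_\ell}(t)+\sum_{s\in G^\star_{\ell-1}}\beta_s(t)\Bigr).
\]
So some arm $a$ in the current slate $S_t$ (note $G^\star_{\ell-1}\cup\{i_\ell\}\subseteq S_t$), or the arm $i^\star_\ell$, has $\beta_a(t)\gtrsim \Delta_{\min}/(C_d k)$, i.e. $N_a(t-1)\lesssim C_d^2k^2\log T/\Delta_{\min}^2$.

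I must handle the case where the undersampled arm is $i^\star_\ell$, which is not played. In that case monotonicity of optimism gives $u$-gain$(i^\star_\ell)\ge$ true gain, so the inequality only needs widths of played arms. The only optimistic quantity that can overshoot is the algorithm's choice; the $S$-terms are played as well.

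\textbf{Step 3: counting.} Each bad round has an undersampled played arm. Over the whole horizon a given arm can be undersampled in this sense while being played on at most $O(C_d^2k^2\log T/\Delta_{\min}^2)$ rounds. This gives $O(nk^2\log T/\Delta_{\min}^2)$ bad rounds. To reach the claimed $1/\Delta_{\min}$ rate, I would refine the argument with the per-round regret, in the style of CUCB analyses: the regret of a bad round is bounded by $O(C_d k\max_{a\in S_t}\beta_a(t))$ via the same Lipschitz decomposition, namely $\alpha V^\star\le\mathrm{HV}(G^\star)\le \mathrm{HV}^{\mathrm{UCB}}$-greedy $\le \mathrm{HV}(S_t)+2C_d\sum_{a\in S_t}\beta_a$. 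I would then sum $\sum_a\beta_a(t)$ only over rounds where it exceeds $\Delta_{\min}/(4C_dk)$. Since $\beta_a\propto 1/\sqrt{N_a}$ is cut off at $N_a\le c k^2\log T/\Delta_{\min}^2$, the sum per arm is $O(\sqrt{\log T}\cdot\sqrt{k^2\log T}/\Delta_{\min})=O(k\log T/\Delta_{\min})$. Multiplying by the $O(\sqrt k)$ factor coming from Cauchy--Schwarz over the slate, and by the $k$ from the Lipschitz sum, then summing over $n$ arms, gives $O(nk^{2.5}\log T/\Delta_{\min})$ with $C_d$ absorbed into the constant.

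The main obstacle is Step 3's bookkeeping. The difficulty is attributing each bad round's regret to a played, undersampled arm without double counting across the $k$ slate positions, and making the $\Delta_{\min}$ threshold filter the $\beta$-sum correctly. This is where the exact power $k^{2.5}$ arises, and where I would be most careful about whether $C_d$ is hidden in the $O(\cdot)$.
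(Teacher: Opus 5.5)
Your skeleton matches the paper's proof. Matched rounds have non-positive $\alpha$-regret by the greedy guarantee on the true means. On a round with $S_t\neq G^\star$, the first deviating stage $\ell$ yields a \emph{played} witness in $G^\star_{\ell-1}\cup\{i_\ell\}$ with $N_w(t-1)=O(C_d^2k^2\log T/\Delta_{\min}^2)$, hence $O(nk^2\log T/\Delta_{\min}^2)$ bad rounds.

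Two local steps need repair.
\begin{itemize}
\item Your claim that the optimistic gain of $i^\star_\ell$ is at least its true gain is false: the enlarged prefix boxes can swallow the box of $i^\star_\ell$. What holds, and what you need, is $\Delta_t^{\mathrm{UCB}}(i^\star_\ell\mid S)\ge\Delta(i^\star_\ell\mid S)-C_d\sum_{s\in S}\beta_s(t)$. It follows from $\mathrm{HV}_t^{\mathrm{UCB}}(S\cup\{i^\star_\ell\})\ge\mathrm{HV}(S\cup\{i^\star_\ell\})$ together with Lemma~\ref{lem:lip} applied to $S$; this is the paper's Lemma~\ref{lem:marginal-dev-seq}.
\item In Step~3, the inequality $\mathrm{HV}(G^\star)\le\mathrm{HV}_t^{\mathrm{UCB}}(S_t)$ is unjustified, because greedy on a pointwise larger objective need not beat greedy on the true one. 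Use $\alpha V^\star\le\alpha\,\mathrm{HV}_t^{\mathrm{UCB}}(S^\star)\le\mathrm{HV}_t^{\mathrm{UCB}}(S_t)$ instead. It gives the bound you want, $\bar r_t\le 2C_d\sum_{a\in S_t}\beta_a(t)$.
\end{itemize}

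The genuine gap is the conversion of the $1/\Delta_{\min}^2$ count into a $1/\Delta_{\min}$ regret. Your truncation sums only widths with $N_a(t-1)\lesssim k^2\log T/\Delta_{\min}^2$. On a bad round, however, only the witness is guaranteed to be that undersampled. The other played arms may be well sampled, yet their widths still enter $\bar r_t\le 2C_d\sum_{a\in S_t}\beta_a(t)$, so the per-arm cutoff does not control the regret. The subsequent multiplication by $\sqrt{k}$ and by $k$ is not derived from anything, since the per-arm sums already account for every term of the double sum.

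You can close the gap with what you already have, in either of two ways.
\begin{itemize}
\item[(i)] Split each bad round's sum at $\theta=\Delta_{\min}/(4C_dk)$, which is the witness's guaranteed width. Sub-threshold terms contribute at most $2C_dk\theta=\Delta_{\min}/2$ per bad round, i.e.\ $O(nk^2\log T/\Delta_{\min})$ over all bad rounds. Above-threshold terms give $O(nk\log T/\Delta_{\min})$ in total, as you computed.
\item[(ii)] As the paper does, apply the gap-free Cauchy--Schwarz argument \emph{locally} to the deviation rounds. The values $N_a(t-1)$ over deviation rounds where $a$ is pulled are distinct, so $\sum_{t\in\mathrm{Dev}}\sum_{a\in S_t}\beta_a(t)\le 2\sqrt{2\eta\log(ndT^2/\delta)}\,\sqrt{nk\,|\mathrm{Dev}|}$.
\end{itemize}

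The paper runs (ii) stage by stage with $|\mathrm{Dev}_\ell|=O(n\ell^2\log T/\Delta_{\min}^2)$, getting $O(n\ell\sqrt{k}\log T/\Delta_{\min})$ per stage. The exponent $2.5$ comes from summing this over $\ell$, not from anything in your decomposition, so do not try to reproduce it.

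Your single global witness count is actually sharper than the sum of the paper's per-stage counts: $O(nk^2)$ versus $O(nk^3)$, each times $\log T/\Delta_{\min}^2$. Applying (ii) once to your count gives $O(nk^{1.5}\log T/\Delta_{\min})$. Either completion implies the stated bound.
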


\begin{theorem}[Regret of THV-UCB (short version)]\label{thm:combined}
With $\delta = 1/T$,
\[
\begin{aligned}
    \mathbb{E}\big[\bar{R}_T\big] \leq {} \\ \min\Bigl\{ O\left(\sqrt{n k T \log T}\right),
    & O\left(\frac{n k^{2.5} \log T}{\Delta_{\min}}\right) \Bigr\} + O(1).
\end{aligned}
\]

In particular $\mathbb{E}[\bar{R}_T]/T \to 0$: THV-UCB achieves sublinear $\alpha$-approximation regret on every instance.
\end{theorem}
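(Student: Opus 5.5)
The plan is to derive Theorem~\ref{thm:combined} from Theorems~\ref{thm:gapfree} and~\ref{thm:gapdep} by the usual high-probability-to-expectation conversion. I would first decompose the expected regret along the good event $\mathcal{E}$ of Lemma~\ref{lem:conc}:
\[
\mathbb{E}[\bar{R}_T] = \mathbb{E}[\bar{R}_T \mathbf{1}_{\mathcal{E}}] + \mathbb{E}[\bar{R}_T \mathbf{1}_{\mathcal{E}^c}].
\]

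\emph{The failure event.} Each instantaneous $\alpha$-regret satisfies $\bar{r}_t \le \alpha V^\star \le V^\star$, and $V^\star \le \lambda_d([r_1,1]\times\cdots\times[r_d,1])$, which is at most $1$ when $\mathbf{r}=\mathbf{0}$ and rewards lie in $[0,1]^d$. Hence $\bar{R}_T \le T$ deterministically. Lemma~\ref{lem:conc} gives $\mathbb{P}(\mathcal{E}^c)\le\delta = 1/T$, so the second term is at most $T\cdot(1/T)=O(1)$. One point needs care: $\bar{r}_t$ may be negative, because the algorithm can exceed $\alpha V^\star$. This only helps, but I would bound $\bar{R}_T\mathbf{1}_{\mathcal{E}^c}$ above by $T$ and not by $|\bar{R}_T|$.

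\emph{The good event.} On $\mathcal{E}$, Theorem~\ref{thm:gapfree} gives $\bar{R}_T \le c_1 C_d\sqrt{nkT\log(\cdot)}$. Since $\delta = 1/T$, the confidence radius contains $\log(ndT^3)$, which is $O(\log T)$ after absorbing $\log(nd)$ into the constants, or equivalently treating it as polylogarithmic. Whenever the hypotheses of Theorem~\ref{thm:gapdep} hold, $\bar{R}_T \le c_2 nk^{2.5}\log T/\Delta_{\min}$ also holds on $\mathcal{E}$. Because both bounds hold simultaneously on the same event, $\bar{R}_T\mathbf{1}_{\mathcal{E}}$ is at most their minimum, and taking expectations preserves this. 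On instances where the gap conditions fail, for example when $\Delta_{\min}=0$, the gap-dependent term is read as $+\infty$ and only the gap-free bound is used. The constant $C_d\le d$ from Lemma~\ref{lem:lip} is absorbed into the $O(\cdot)$ to match the displayed form. Adding the two contributions gives the claimed inequality.

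\emph{Sublinearity.} The gap-free term alone gives $\mathbb{E}[\bar{R}_T]/T = O(\sqrt{nk\log T/T}) + O(1/T) \to 0$ on every instance, with no gap assumption.

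\emph{Main obstacle.} The delicate step is that Theorems~\ref{thm:gapfree} and~\ref{thm:gapdep} are stated ``on $\mathcal{E}$'' for a fixed $\delta$, whereas here $\delta$ depends on $T$. I would check that their proofs use $\delta$ only through the radius $\beta_i(t)$, so that substituting $\delta=1/T$ only inflates the logarithm to $\log(ndT^3)=O(\log T)$. I would also check that the initialization phase contributes only $O(nm_0/k)$ rounds, which is already included in those bounds.
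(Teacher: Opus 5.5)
Your proposal is correct and follows essentially the same argument as the paper. You split $\mathbb{E}[\bar R_T]$ along $\mathcal{E}$, bound the good-event part by the minimum of the gap-free and gap-dependent bounds, and bound the failure part by $T\,\mathbb{P}(\mathcal{E}^c)\le 1$ via $\bar R_T\le T$ with $\delta=1/T$. Your extra care points are refinements of details the paper glosses over, not a different route: using only the upper bound $\bar r_t\le 1$ (the paper's claim $\bar r_t\in[0,1]$ is wrong, since $\bar r_t$ can be negative), treating the gap-dependent term as $+\infty$ when its uniqueness/pruning hypotheses fail, and absorbing $\log(ndT^3)$ and $C_d$ into the constants.
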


\paragraph{Proof sketch.}
Both bounds share the same reduction, then diverge.

\emph{(1) Optimism reduction.} On $\mathcal{E}$, coordinate-wise optimism gives $\mathrm{HV}(S) \leq \mathrm{HV}^{\mathrm{UCB}}_t(S)$ for every $S$. Since $\mathrm{HV}^{\mathrm{UCB}}_t$ is itself monotone submodular,
the greedy construction of $S_t$ is a $(1-1/e)$-approximation of its maximizer, hence $\mathrm{HV}^{\mathrm{UCB}}_t(S_t) \geq \alpha\,\mathrm{HV}^{\mathrm{UCB}}_t(S^\star)$, and 
$
\bar{r}_t \;\leq\; \mathrm{HV}^{\mathrm{UCB}}_t(S_t) - \mathrm{HV}(S_t).
$

\emph{(2) Lipschitz control.} The hypervolume is coordinate-wise Lipschitz, so the optimism error is at most
$C_d \sum_{i \in S_t} \beta_i(t)$, reducing the regret to a sum of confidence radii.

\emph{(3a) Gap-free control.} Reordering the double sum by arm and applying Cauchy--Schwarz over the whole horizon, using $\sum_i N_i(T) = kT$, yields Theorem~\ref{thm:gapfree}. This step is blind to the selection mechanism: it only uses $|S_t| = k$ and $\beta_i(t) = \Theta(1/\sqrt{N_i(t-1)})$.

\emph{(3b) Gap-dependent control.} We instead track, at each round, the first stage $\ell$ at which the algorithm's greedy chain departs from $G^\star$. On \emph{matched} rounds ($S_t = G^\star$) the $\alpha$-regret is non-positive. On \emph{deviation} rounds, a witness-counting argument shows that some arm in the current prefix must still have a large confidence radius, which caps the number of stage-$\ell$ deviations at $O(n \ell^2 \log T / \Delta_{\min}^2)$; applying the Cauchy--Schwarz argument of (3a) \emph{locally} to these rounds converts this $1/\Delta_{\min}^2$ count into a $1/\Delta_{\min}$ regret contribution. Summing over stages yields Theorem~\ref{thm:gapdep}.


A discussion and the detailed versions of Theorems~\ref{thm:gapfree}, \ref{thm:gapdep} and \ref{thm:combined} together with their full proofs, with all supporting lemmas and the corollary invoked above, are given in the appendix and in our GitHub repository \url{https://github.com/ngutowski/topk-pareto-bandits}.

\section{Experiments}
\label{sec:exp}

We empirically evaluate \textsc{THV-UCB} on controlled synthetic multi-objective bandit instances \ttn{across four Pareto front geometries and four dimensions ($d \in \{2, 3, 4, 5\}$, with slate size
$k \in \{3, 4, 5, 6\}$ and correspondingly varying total number of available arms $n$ and horizon $T$),} and compare it to representative baselines from the MO-bandit literature as well as scalarization-based methods \ttn{adapted to the top-$k$ semi-bandit setting.}

Experiments are conducted for $d \in \{2, 3, 4, 5\}$ objectives. The dominated hypervolume~\cite{Zitzler2003PerformanceAssessment} is computed exactly via inclusion-exclusion over the $k$-point slate, which remains tractable for small $k$.
Figure~\ref{fig:synthetic-fronts-linear}~to~\ref{fig:synthetic-fronts-convex} and Table~\ref{tab:synthetic-summary} report results for $d=2$; full results for $d \in \{3, 4, 5\}$, along with the grid search over the confidence parameter $\eta$ are provided in the appendix and in our GitHub repository \url{https://github.com/ngutowski/topk-pareto-bandits} (Tables~\ref{tab:eta-optimal},~\ref{tab:synthetic-summary-d3},~\ref{tab:synthetic-summary-d4}, and~\ref{tab:synthetic-summary-d5}, Figures~\ref{fig:synthetic-fronts-d3},~\ref{fig:synthetic-fronts-d4}, and~\ref{fig:synthetic-fronts-d5}). Moreover, note that all experiments were run on CPU only (Intel Xeon E5-2695 v4, 2.10~GHz, 45~MB cache), using Python~3.11.2, NumPy~1.24.2, and Matplotlib~3.6.3 for figure generation.

\subsection{Protocol and Metrics for $d=2$ and $k=3$}
\label{sec:exp:protocol}
\ttn{Unless stated otherwise, experiments use $d=2$, $n=36$, $k=3$, horizon $T=2000$,
Gaussian noise level $\sigma=0.05$, reference point $\mathbf{r}=\mathbf{0} \in \mathbb{R}^d$, and results are averaged over $10$ random seeds.}

\paragraph{Computing the benchmark $V^\star$.}
For $d=2$, $V^\star$ is computed by exhaustive enumeration over all
$\binom{n}{k}$ candidate subsets. For $d \geq 3$, exhaustive enumeration
becomes intractable, so we report $\mathrm{HV}(G^\star) \geq \alpha
V^\star$ (see Fact~\ref{thm:greedy-generic} in the appendix), a valid conservative proxy
that does not affect relative comparisons between methods. 

\ttn{At each round, we evaluate the selected slate $S_t$ using the pseudo-hypervolume
$\mathrm{HV}(\{\boldsymbol{\mu}_i : i\in S_t\})$ computed from the true means, to remove
observation noise from the metrics.
We report two complementary metrics:
(i)~the cumulative $\alpha$-regret
$\sum_{t=1}^T \bigl(\alpha V^\star - \mathrm{HV}(S_t)\bigr)$,
where $\alpha = 1 - 1/e$, which can be negative when a method consistently attains
hypervolume above $\alpha V^\star$;
and (ii)~the hypervolume trajectory $\mathrm{HV}(S_t)$, plotted as a moving average
(window $w=50$) with $95\%$ confidence intervals across seeds.}

\subsection{Synthetic Environments}
\label{sec:exp:env}

\ttn{We generate instances by mixing a structured Pareto frontier with dominated distractors.
A fraction $n_{\mathrm{front}} = \max\{10, \lfloor 0.35n \rfloor\}$ arms lie on a parametric
frontier defined via the angular parameterization of DTLZ~\cite{deb2005scalable}, while the remaining $n_{\mathrm{dom}} = n - n_{\mathrm{front}}$ arms are strictly dominated points sampled
uniformly in $[0, 0.3]^d$. Rewards are observed with additive Gaussian noise
$\mathcal{N}(0, \sigma^2)$ and clipped to $[0,1]^d$.}

\ttn{The frontier is parameterized by $d-1$ angles $\theta_j \sim \mathcal{U}[0, \pi/2]$,
with coordinates:}
\[
    \ttn{x_i = \left(\prod_{j=0}^{i-1} \sin^\alpha(\theta_j)\right) \cdot}
    \begin{cases}
        \ttn{\cos^\alpha(\theta_i)} & \ttn{\text{if } i < d-1,} \\
        \ttn{1    }                 & \ttn{\text{if } i = d-1,}
    \end{cases}
\]
\ttn{where the shape exponent $\alpha$ controls the front geometry.
We consider four geometries:
concave ($\alpha=1.0$, spherical front with $\sum_i x_i^2 = 1$, following DTLZ2~\cite{deb2005scalable});
convex ($\alpha=0.5$, outward-bulging front with $\sum_i x_i^4 = 1$);
linear ($\alpha=2.0$, simplex-like front);
and clusters ($\alpha=1.0$, spherical front with two disjoint angular regions,
$\theta_1 \in [0, \pi/5]$ for cluster~1 and $\theta_1 \in [3\pi/10, \pi/2]$ for cluster~2,
with remaining angles free in $[0, \pi/2]$).}

\ttn{For the linear geometry, the raw simplex coordinates collapse toward zero as $d$ grows,
making the dominated hypervolume uninformative. We therefore rescale the frontier points
by a factor $s = \min(0.45d,\, 2.5)$ and clip to $[0.01, 1]^d$, which preserves a
non-degenerate HV across dimensions while keeping coordinates in $[0,1]^d$.}
\subsection{Baselines}
\label{sec:exp:baselines}

\ttn{THV-UCB is compared against four learning-based families of baselines adapted to the top-$k$ semi-bandit setting, plus a non-learning baseline: 1) Pareto-layer UCB methods (\textsc{ParetoUCB}, \textsc{ParetoUCB-Div}, \textsc{ParetoUCB-Crowd}~\cite{Drugan2013MOBandits, Deb2002NSGA2}); 2) Pareto-layer Thompson Sampling methods (\textsc{ParetoTS}, \textsc{ParetoTS+}~\cite{yahyaa2015thompson}); 3) Chebyshev scalarization methods (\textsc{ChebyshevUCB}, \textsc{ChebyshevUCB+}~\cite{Mandow2023ChebyshevMAB}); 4) linear and hypervolume scalarization methods (\textsc{ScalarUCB}~\cite{auer2002finite}, \textsc{ScalarUCB-RandW}~\cite{paria2020flexible}, \textsc{HVScalarUCB}~\cite{zhang2020random}, \textsc{HVScalarUCB+}~\cite{Zhang2023HypervolumeRegret}); and 5) \textsc{RandomK}, which selects a slate uniformly at random. The \texttt{+} suffix marks our extension when both versions share a citation.} 
All methods share the same initialization scheme (forced round-robin sampling until each arm has been pulled at least $\texttt{min\_pulls}$ times) to avoid degenerate early behavior.

Implementation details, including UCB bonuses, posterior parameterizations, and tie-breaking rules, are as follows:

\ttn{
\begin{enumerate}
    \item \textsc{ParetoUCB} family
    \begin{itemize}
        \item \textsc{ParetoUCB}~\cite{Drugan2013MOBandits} computes UCB vectors $U_i = \hat{\mu}_i + \beta_i \mathbf{1}$ and selects $k$ arms by iterating Pareto layers on $U$, using a $\sum_j U_{ij}$ tie-break within each layer. The faithful variant (\textsc{ParetoUCB}) uses the original confidence term $\beta_i = \sqrt{2 \log(t \cdot (d \cdot |\hat{\mathcal{F}}|)^{1/4}) / N_i}$ where $|\hat{\mathcal{F}}|$ is the empirical Pareto front size, with uniform random selection within each layer.
        \item \textsc{ParetoUCB-Div}~\cite{Drugan2013MOBandits} extends \textsc{ParetoUCB} by replacing the $\sum U$ tie-break with a farthest-point (maximin $\ell_\infty$) diversity criterion: each slot greedily picks the candidate maximally distant from already-selected arms in UCB space.
        \item \textsc{ParetoUCB-Crowd}~\cite{Drugan2013MOBandits, Deb2002NSGA2} replaces the tie-break with a crowding distance~\cite{Deb2002NSGA2}: within each Pareto layer, arms are ranked by their normalized inter-neighbor gap across objectives, favoring spread along the frontier.
    \end{itemize}
    \item \textsc{ParetoTS} family
     \begin{itemize}
       \item \textsc{ParetoTS}~\cite{yahyaa2015thompson} maintains a Gaussian posterior per arm and coordinate. At each round it samples $\theta_i \sim \mathcal{N}(\hat{\mu}_i, \sigma_i^2 \mathbf{I})$ with posterior standard deviation $\sigma_i = \sigma_{\mathrm{obs}} / \sqrt{N_i}$ (conjugate Gaussian), builds Pareto layers on the sampled vectors, and selects $k$ arms by uniform random sampling within each layer. The only change from the original is the top-$k$ extension.
        \item \textsc{ParetoTS}$^+$~\citep{yahyaa2015thompson} uses a heuristic posterior $\sigma_i^2 = (\sigma_{\mathrm{prior}}^2 + \sigma_{\mathrm{obs}}^2) / N_i$ that decays more slowly, combined with a $\sum_j \theta_{ij}$ tie-break to encourage diversity within each Pareto layer.
    \end{itemize}
    \item \textsc{ChebyshevUCB} family
    \begin{itemize}
         \item \textsc{ChebyshevUCB}~\cite{Mandow2023ChebyshevMAB} follows Algorithm C2 of \cite{Mandow2023ChebyshevMAB}: a set of $S$ scalarization functions with weights spread uniformly on the simplex is precomputed, a function $f^j$ is drawn uniformly at each round, and arms are scored by $\min_\ell \{ w^j_\ell (U_{i\ell} - z_\ell) \}$ where $z$ is an estimated nadir point and $U_i = \hat{\mu}_i + \beta_i \mathbf{1}$ is the UCB vector.
        \item \textsc{ChebyshevUCB}$^+$~\cite{Mandow2023ChebyshevMAB} fixes $w = \mathbf{1}/d$ uniformly instead of drawing $f^j$ randomly, and uses the standard UCB bonus $\beta_i = \sqrt{\eta \log(ndT^2) / (2N_i)}$.
    \end{itemize}
    \item \textsc{ScalarUCB} family
    \begin{itemize}
    \item \textsc{ScalarUCB}~\cite{auer2002finite} reduces the vector reward to a scalar via a fixed uniform linear scalarization $w = \mathbf{1}/d$, maintains a scalar mean estimate per arm, and selects the top-$k$ arms by UCB index $\hat{\mu}_i + \beta_i$ where $\beta_i = \sqrt{\eta \log(nT^2) / (2 N_i)}$.    
    \item \textsc{ScalarUCB-RandW}~\cite{paria2020flexible} draws a fresh weight vector $w \sim \mathrm{Dirichlet}(\mathbf{1})$ at each round, computes UCB vectors $U_i = \hat{\mu}_i + \beta_i \mathbf{1}$ coordinate-wise, and selects the top-$k$ arms by score $w^\top U_i$.   
    \item \textsc{HVScalarUCB}~\cite{zhang2020random} draws $\lambda \sim S^{d-1}_+$ (positive unit sphere) at each round and scores arms by the hypervolume scalarization $s_\lambda(U_i) = \bigl(\min_\ell \max(0, U_{i\ell}/\lambda_\ell)\bigr)^d$ (Lemma~5 of \cite{zhang2020random}), selecting the top-$k$ arms by score.
    \item \textsc{HVScalarUCB}$^+$~\cite{Zhang2023HypervolumeRegret} draws $k$ independent directions $\lambda^{(1)}, \ldots, \lambda^{(k)} \sim S^{d-1}_+$ per round and assigns one arm per direction: $a^{(j)} = \arg\max_{i \notin \{a^{(1)},\ldots,a^{(j-1)}\}} \min_\ell (U_{i\ell} - r_\ell)/\lambda^{(j)}_\ell$, where $r$ is the reference point. This top-$k$ adaptation is our own extension of the directional intuition of Lemma~5 in \cite{Zhang2023HypervolumeRegret}.
    \end{itemize}
    \item \textsc{RandomK} selects a slate of $k$ arms uniformly at random each round, without any learning.
\end{enumerate}
}



\subsection{Results}
\label{sec:exp:results}
Overall, \textit{THV-UCB} achieves the lowest cumulative $\alpha$-regret and the highest hypervolume in all four front geometries (linear, convex, concave, and clusters) and all dimensions from $d=2$ to $d=5$, with a margin that generally increases with $d$ (See Tables~\ref{tab:synthetic-summary}, Figures~\ref{fig:synthetic-fronts-linear},~\ref{fig:synthetic-fronts-clusters},~\ref{fig:synthetic-fronts-concave},~\ref{fig:synthetic-fronts-convex}, and all the results (for $d>2$) in the appendix and in our GitHub repository \url{https://github.com/ngutowski/topk-pareto-bandits}).

Figures~\ref{fig:synthetic-fronts-linear}~to~\ref{fig:synthetic-fronts-convex}, report hypervolume
$\mathrm{HV}(S_t)$ trajectories (moving average, $w=50$) with $95\%$ CIs across $10$ seeds, comparing THV-UCB (ours) against the best representative per baseline family, for each synthetic Pareto front geometry.


\begin{table*}
\centering
\small
\setlength{\tabcolsep}{4pt}
\resizebox{\textwidth}{!}{
\begin{tabular}{lccccccccc}
\toprule
& & \multicolumn{2}{c}{\textbf{Clusters}} & \multicolumn{2}{c}{\textbf{Concave}} & \multicolumn{2}{c}{\textbf{Convex}} & \multicolumn{2}{c}{\textbf{Linear}} \\
\cmidrule(lr){3-4}\cmidrule(lr){5-6}\cmidrule(lr){7-8}\cmidrule(lr){9-10}
\textbf{Algorithm} & \textbf{Fid.}
& HV$_{\text{last100}}$ & Final Regret
& HV$_{\text{last100}}$ & Final Regret
& HV$_{\text{last100}}$ & Final Regret
& HV$_{\text{last100}}$ & Final Regret \\
\midrule
THV-UCB (ours)
  & ---
  & \textbf{0.6556}{\scriptsize$\pm$0.001} & \textbf{5.5}{\scriptsize$< 0.001$}
  & \textbf{0.6638}{\scriptsize$\pm$0.002} & \textbf{6.2}{\scriptsize$< 0.001$}
  & \textbf{0.8536}{\scriptsize$\pm$0.002} & \textbf{7.2}{\scriptsize$< 0.001$}
  & \textbf{0.2879}{\scriptsize$\pm$0.002} & \textbf{2.7}{\scriptsize$< 0.001$} \\
\addlinespace[2pt]
ParetoUCB~\citep{Drugan2013MOBandits}
  & \faithful
  & 0.4438{\scriptsize$\pm$0.082} & 130.0{\scriptsize$\pm$108.3}
  & 0.4711{\scriptsize$\pm$0.072} & 105.0{\scriptsize$\pm$98.5}
  & 0.7169{\scriptsize$\pm$0.059} & 38.8{\scriptsize$\pm$43.2}
  & 0.1515{\scriptsize$\pm$0.040} & 107.1{\scriptsize$\pm$64.7} \\
ParetoUCB\textsuperscript{+}~\citep{Drugan2013MOBandits}
  & \extension
  & 0.6330{\scriptsize$\pm$0.005} & 28.4{\scriptsize$\pm$5.6}
  & 0.5770{\scriptsize$\pm$0.023} & 31.3{\scriptsize$\pm$5.2}
  & 0.8170{\scriptsize$\pm$0.015} & 7.2{\scriptsize$< 0.001$}
  & 0.1620{\scriptsize$\pm$0.037} & 69.5{\scriptsize$\pm$46.2} \\
ParetoUCB-Div~\citep{Drugan2013MOBandits}
  & \extension
  & 0.6502{\scriptsize$\pm$0.007} & 9.4{\scriptsize$\pm$2.1}
  & 0.6064{\scriptsize$\pm$0.006} & 6.6{\scriptsize$\pm$0.7}
  & 0.8281{\scriptsize$\pm$0.006} & \textbf{7.2}{\scriptsize$< 0.001$}
  & 0.2115{\scriptsize$\pm$0.009} & 5.8{\scriptsize$\pm$4.8} \\
ParetoUCB-Crowd~\citep{Drugan2013MOBandits,Deb2002NSGA2}
  & \extension
  & 0.6384{\scriptsize$\pm$0.011} & 19.8{\scriptsize$\pm$7.8}
  & 0.5712{\scriptsize$\pm$0.015} & 18.0{\scriptsize$\pm$7.7}
  & 0.8290{\scriptsize$\pm$0.006} & 7.3{\scriptsize$< 0.001$}
  & 0.1303{\scriptsize$\pm$0.008} & 99.5{\scriptsize$\pm$19.1} \\
\addlinespace[2pt]
ParetoTS~\citep{yahyaa2015thompson}
  & \faithful
  & 0.4871{\scriptsize$\pm$0.069} & 51.8{\scriptsize$\pm$55.7}
  & 0.5180{\scriptsize$\pm$0.056} & 33.4{\scriptsize$\pm$38.0}
  & 0.7528{\scriptsize$\pm$0.044} & 9.4{\scriptsize$\pm$3.8}
  & 0.1901{\scriptsize$\pm$0.035} & 40.0{\scriptsize$\pm$40.0} \\
ParetoTS\textsuperscript{+}~\citep{yahyaa2015thompson}
  & \extension
  & 0.6293{\scriptsize$\pm$0.015} & 12.5{\scriptsize$\pm$8.9}
  & 0.5788{\scriptsize$\pm$0.025} & 12.5{\scriptsize$\pm$8.9}
  & 0.7836{\scriptsize$\pm$0.025} & 8.4{\scriptsize$\pm$2.0}
  & 0.1623{\scriptsize$\pm$0.040} & 85.6{\scriptsize$\pm$61.1} \\
\addlinespace[2pt]
ChebyshevUCB~\citep{Mandow2023ChebyshevMAB}
  & \faithful
  & 0.4763{\scriptsize$\pm$0.066} & 72.0{\scriptsize$\pm$62.7}
  & 0.5017{\scriptsize$\pm$0.063} & 48.1{\scriptsize$\pm$47.7}
  & 0.7185{\scriptsize$\pm$0.033} & 23.7{\scriptsize$\pm$16.4}
  & 0.2016{\scriptsize$\pm$0.027} & 38.4{\scriptsize$\pm$34.5} \\
ChebyshevUCB\textsuperscript{+}~\citep{Mandow2023ChebyshevMAB}
  & \extension
  & 0.6286{\scriptsize$\pm$0.002} & 10.3{\scriptsize$\pm$2.0}
  & 0.5514{\scriptsize$\pm$0.012} & 24.2{\scriptsize$\pm$3.2}
  & 0.7527{\scriptsize$\pm$0.014} & 7.3{\scriptsize$< 0.001$}
  & 0.2266{\scriptsize$\pm$0.005} & 10.0{\scriptsize$\pm$5.1} \\
\addlinespace[2pt]
HVScalarUCB~\citep{zhang2020random}
  & \faithful
  & 0.4898{\scriptsize$\pm$0.083} & 86.7{\scriptsize$\pm$88.6}
  & 0.5086{\scriptsize$\pm$0.082} & 80.1{\scriptsize$\pm$82.3}
  & 0.7216{\scriptsize$\pm$0.039} & 13.8{\scriptsize$\pm$5.5}
  & 0.2105{\scriptsize$\pm$0.035} & 46.8{\scriptsize$\pm$49.3} \\
HVScalarUCB\textsuperscript{+}~\citep{Zhang2023HypervolumeRegret}
  & \faithful
  & 0.5728{\scriptsize$\pm$0.050} & 15.6{\scriptsize$\pm$16.7}
  & 0.5863{\scriptsize$\pm$0.043} & 14.0{\scriptsize$\pm$13.1}
  & 0.7647{\scriptsize$\pm$0.047} & 8.9{\scriptsize$\pm$3.1}
  & 0.2463{\scriptsize$\pm$0.017} & 6.7{\scriptsize$\pm$6.6} \\
ScalarUCB~\citep{auer2002finite}
  & \extension
  & 0.6340{\scriptsize$\pm$0.005} & 10.9{\scriptsize$\pm$6.1}
  & 0.5733{\scriptsize$\pm$0.025} & 27.2{\scriptsize$\pm$20.7}
  & 0.7731{\scriptsize$\pm$0.022} & 14.0{\scriptsize$\pm$8.1}
  & 0.1584{\scriptsize$\pm$0.040} & 95.0{\scriptsize$\pm$62.1} \\
ScalarUCB-RandW~\citep{paria2020flexible}
  & \extension
  & 0.4536{\scriptsize$\pm$0.069} & 66.4{\scriptsize$\pm$42.5}
  & 0.4734{\scriptsize$\pm$0.056} & 50.1{\scriptsize$\pm$22.9}
  & 0.7347{\scriptsize$\pm$0.036} & 7.3{\scriptsize$< 0.001$}
  & 0.0733{\scriptsize$\pm$0.040} & 180.1{\scriptsize$\pm$62.8} \\
\addlinespace[2pt]
RandomK
  & ---
  & 0.2225{\scriptsize$\pm$0.105} & 424.0{\scriptsize$\pm$181.6}
  & 0.2361{\scriptsize$\pm$0.111} & 405.7{\scriptsize$\pm$190.6}
  & 0.3878{\scriptsize$\pm$0.165} & 411.2{\scriptsize$\pm$256.6}
  & 0.0922{\scriptsize$\pm$0.037} & 185.9{\scriptsize$\pm$69.3} \\
\bottomrule
\end{tabular}
}\captionsetup{width=\textwidth}
\captionof{table}{Summary over four synthetic fronts ($d=2$, $n=36$, $k=3$,  $\sigma=0.05$, $T=2000$, 10 seeds).
\textbf{Fid.} indicates fidelity to the cited work:
\faithful~= faithful top-$k$ adaptation (core mechanism unchanged);
\extension~= our extension (modified core, tie-break, or diversity mechanism not in the original).
\textsuperscript{+} denotes our variant when both versions share a citation.
Final Regret values are cumulative $\alpha$-regret $\pm$ 95\% CI.
} \vspace{-0.4cm}

\label{tab:synthetic-summary}
\end{table*}

\ttn{The linear front is the most discriminative, with THV-UCB's advantage over the closest baseline widening from +27\% at $d=2$ to +79\% at $d=5$: linear fronts require uniform simplex coverage, which single-direction scalarization methods increasingly fail to achieve at higher $d$, while THV-UCB's greedy hypervolume gain
naturally spreads the slate across the entire front (See Table~\ref{tab:synthetic-summary} and Figure~\ref{fig:synthetic-fronts-linear}).}
\begin{figure}[h]
        \includegraphics[width=\columnwidth]{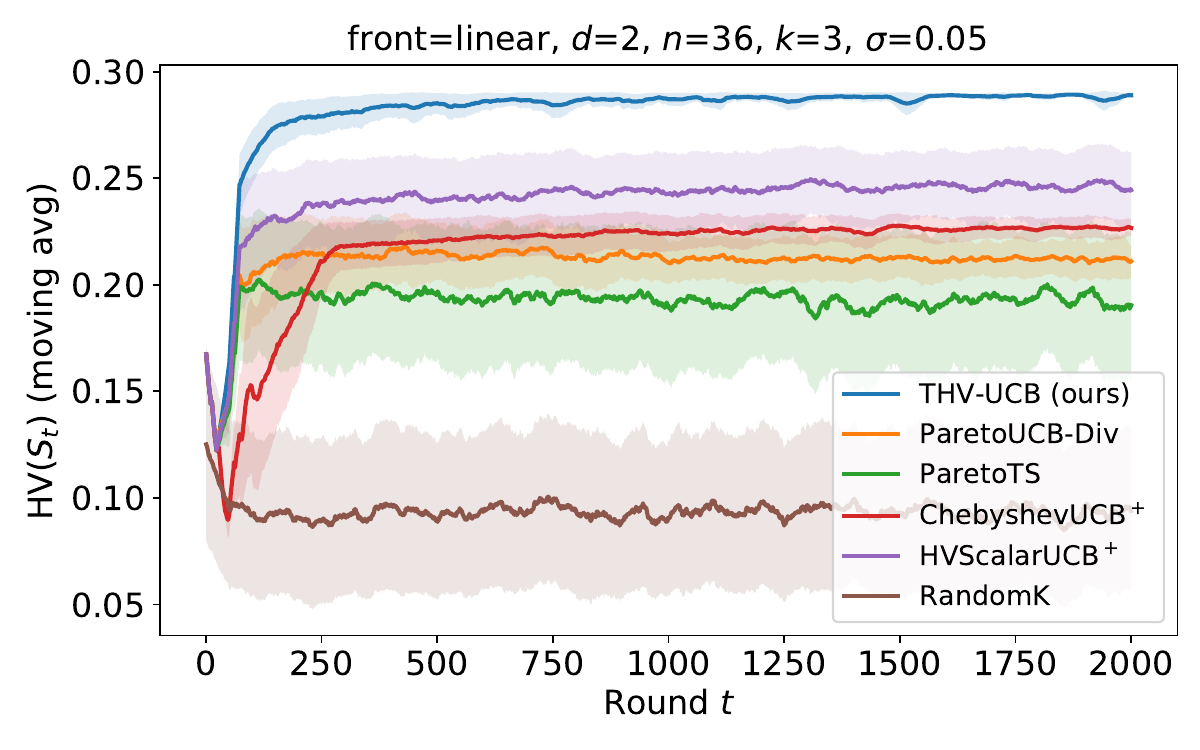}\vspace{-0.4cm}
        \caption{HV trajectories, linear front.}\vspace{-0.4cm}
        \label{fig:synthetic-fronts-linear}
    \end{figure}

\ttn{On the clusters front, THV-UCB consistently outperforms all baselines, confirming that the set-level hypervolume objective is essential when the front has disconnected regions: no single-direction scalarization can reliably cover both clusters within a single round (See Table~\ref{tab:synthetic-summary} and Figure~\ref{fig:synthetic-fronts-clusters}).}
    \begin{figure}[h]        \includegraphics[width=\columnwidth]{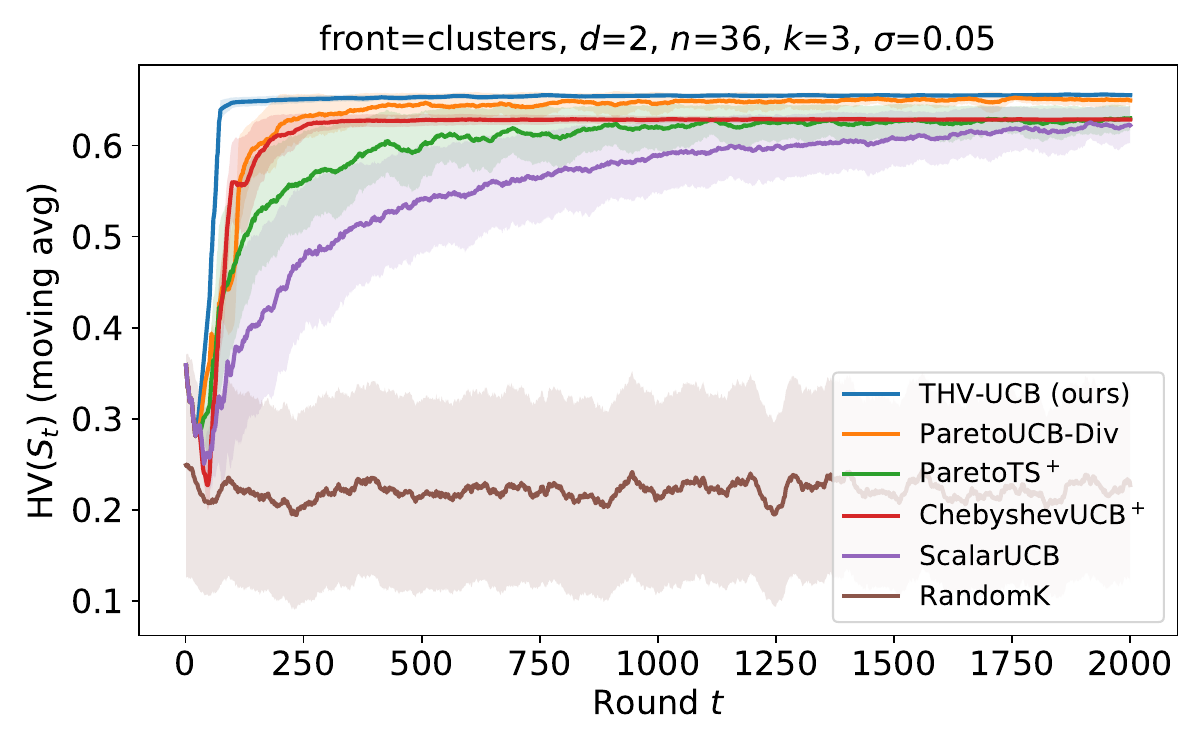}\vspace{-0.4cm}
        \caption{HV trajectories, clusters front.}\vspace{-0.4cm}
        \label{fig:synthetic-fronts-clusters}
    \end{figure}

\ttn{On the concave and convex fronts, THV-UCB leads throughout but faces stronger competition: ChebyshevUCB$^+$ is the closest competitor on concave at $d=3,4$, while ScalarUCB and ParetoTS$^+$ are competitive on convex across dimensions (See Table~\ref{tab:synthetic-summary}, and Figures~\ref{fig:synthetic-fronts-concave} and~\ref{fig:synthetic-fronts-convex}).}
\begin{figure}[h]
        \includegraphics[width=\columnwidth]{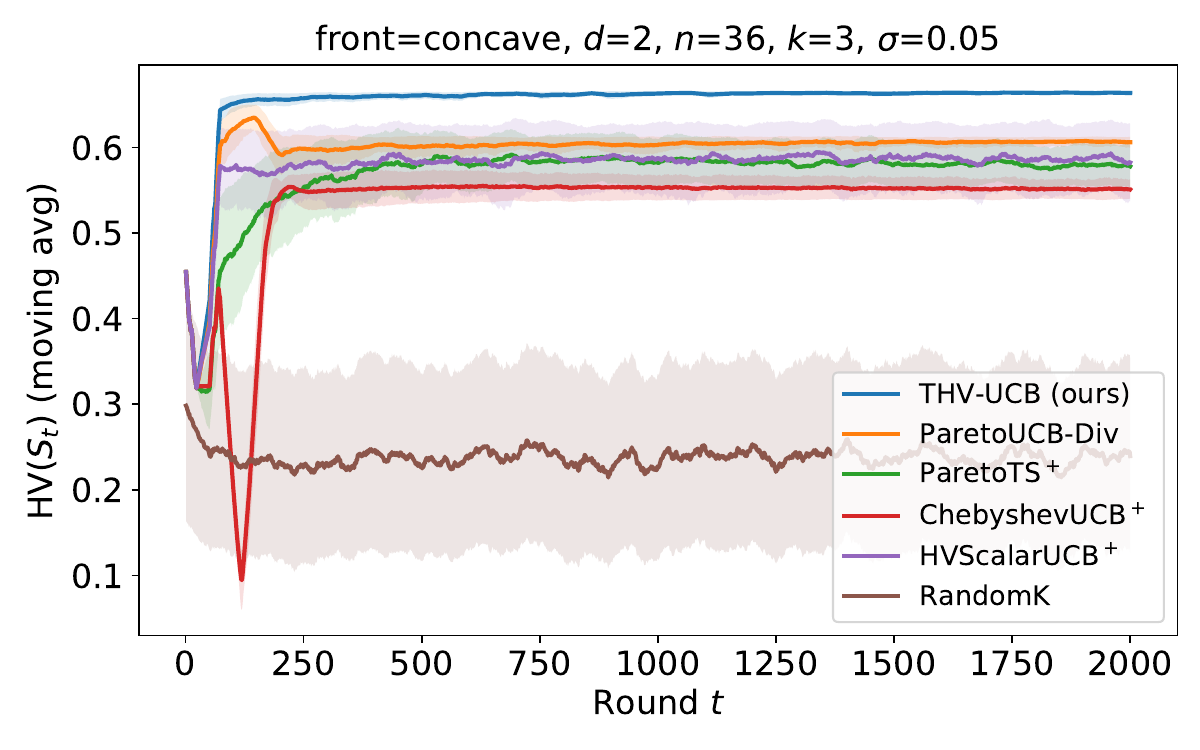}\vspace{-0.4cm}
        \caption{HV trajectories, concave front.}\vspace{-0.4cm}
        \label{fig:synthetic-fronts-concave}
    \end{figure}
   \begin{figure}[h]
        \includegraphics[width=\columnwidth]{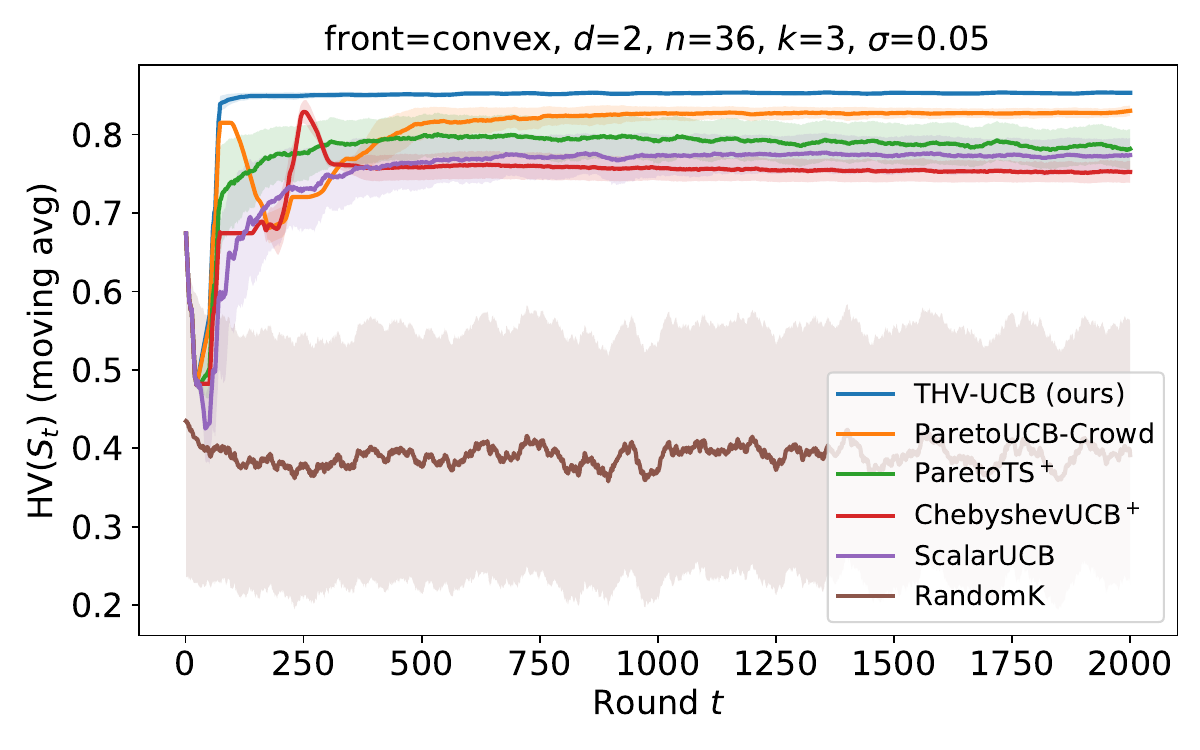}\vspace{-0.4cm}
        \caption{HV trajectories, convex front.}\vspace{-0.4cm}
        \label{fig:synthetic-fronts-convex}
    \end{figure}
\ttn{Among baselines, no single method dominates across all settings. ChebyshevUCB$^+$~\citep{Mandow2023ChebyshevMAB} is the strongest competitor at $d=3$ and $4$, particularly on concave and linear fronts. HVScalarUCB$^+$~\citep{Zhang2023HypervolumeRegret} becomes increasingly competitive on linear as $d$ grows, reaching second place at $d=4$ ($0.0532$) and $d=5$ ($0.0053$). ParetoUCB$^+$~\citep{Drugan2013MOBandits} and ScalarUCB~\citep{auer2002finite} perform well on smooth fronts (convex, concave) at low $d$ but degrade on linear and clusters. ParetoUCB-Div~\cite{Drugan2013MOBandits} and ParetoUCB-Crowd~\cite{Deb2002NSGA2,Drugan2013MOBandits} add diversity heuristics that help on clusters at $d=2$ but loose their advantage at higher $d$. ScalarUCB-RandW~\citep{paria2020flexible} is consistently among the weakest baselines due to the high variance induced by random scalarization weights, and RandomK performs worst in all settings as expected.}

\subsection{Statistical tests}
\ttn{Paired Wilcoxon tests on the tightest margins confirm significance
($p<0.001$, $10/10$ seeds, Cohen's $d$ from $1.9$ to $44.5$, See Table~\ref{tab:stat-tests} in the appendix).}

\section{Conclusion}
\label{sec:conclusion}
We introduced \emph{Top-$k$ Pareto Bandits}, where an agent repeatedly selects a size-$k$ slate under semi-bandit feedback, evaluated by dominated hypervolume coverage of the Pareto boundary, and proposed \textit{THV-UCB}, which greedily maximizes optimistic marginal hypervolume gain under safe coordinate-wise pruning. We established a gap-free $\tilde{O}\!\left(d\sqrt{nkT}\right)$ bound valid on every instance and a gap-dependent $\tilde{O}\!\left(nk^{2.5}/\Delta_{\min}\right)$ bound that is polylogarithmic in $T$ on well-separated instances; tightening these dependencies and establishing matching lower bounds remain open. Empirically, \textit{THV-UCB} outperforms state-of-the-art baselines across all tested geometries and dimensions $d\in\{2,\dots,5\}$, with the margin widening as $d$ increases, supporting hypervolume-driven slate selection for applications such as recommender systems, portfolio management, or automated decision support.

\bibliography{aaai2027.bib}


\clearpage

\input{supplementary.tex}

\end{document}

%% file: supplementary.tex
\onecolumn

\begin{quote}
\small
This appendix provides the full proofs of
Theorems~\ref{thm:gapfree}--\ref{thm:combined} (Upper-bound Proofs, below),
additional experimental results for $d\in\{3,4,5\}$ (Experimental
details), and a summary of notation (Notation summary). 

\end{quote}

\appendix
\section{Upper-bound Proofs}
\label{app:upper-proofs}

This appendix establishes the sublinear $\alpha$-regret guarantee for \textit{THV-UCB} announced in the Regret Analysis section. We prove two complementary statements:
\begin{itemize}
    \item a \emph{gap-free} bound (Theorem~\ref{thm:gap-free}), of order $\tilde{O}(\sqrt{nkT})$, valid on \emph{every} instance regardless of how close the arms are to each other;
    \item a \emph{gap-dependent} bound (Theorem~\ref{thm:gap-dependent}), of order $\tilde{O}(nk^{2.5}/\Delta_{\min})$, which becomes polylogarithmic in $T$ as soon as the instance is well separated.
\end{itemize}
Both proofs share the same first steps (Preliminaries through Lipschitz control of the optimism error), which reduce $\bar R_T$ to a sum of confidence radii, in~\eqref{eq:width-sum-bound}. They diverge only in \emph{how} this sum is controlled: 
the gap-free bound 
(see Section \emph{A gap-free bound}) treats every pull anonymously via Cauchy--Schwarz over the whole horizon; the gap-dependent bound (See Section \emph{A gap-dependent bound}) opens up the greedy construction stage by stage, 
tracks the actual comparisons $\Delta_t^{\mathrm{UCB}}(\cdot\mid S)$ performed by the algorithm, and applies the same Cauchy--Schwarz idea \emph{locally}, to each stage's deviation rounds. We discuss what each proof technique does, and does not, use about \textit{THV-UCB} in the discussion at the end of this appendix.

\subsection{Preliminaries}
\label{app:prelim}

\begin{lemma}[Monotonicity and submodularity of hypervolume]
\label{lem:submod}
For any fixed reference point $\mathbf r$ dominated by all achievable means, the set function $S\mapsto \mathrm{HV}(S)$ is monotone (non-decreasing) and submodular.
\end{lemma}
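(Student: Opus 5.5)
The plan is to write hypervolume as the Lebesgue measure of a union of boxes and then use the elementary fact that a measure of unions is monotone and submodular. For each arm $i$, set $B_i=[r_1,\mu_{i1}]\times\cdots\times[r_d,\mu_{id}]$. Since $\mathbf r\preceq\boldsymbol\mu_i$, each $B_i$ is a well-defined (possibly degenerate) closed box, hence Borel measurable with finite measure. Then $\mathrm{HV}(S)=\lambda_d(U(S))$ with $U(S)=\bigcup_{i\in S}B_i$ and $U(\emptyset)=\emptyset$, so $\mathrm{HV}(\emptyset)=0$. The same argument applies verbatim to $\mathrm{HV}^{\mathrm{UCB}}_t$ with the boxes built from $\mathbf u_i(t)$, provided $\mathbf r\preceq\mathbf u_i(t)$; this holds after clipping with $\mathbf r=\mathbf 0$. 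This is the form needed in the optimism reduction.

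For monotonicity, take $S\subseteq T'$. Then $U(S)\subseteq U(T')$, and monotonicity of $\lambda_d$ gives $\mathrm{HV}(S)\le\mathrm{HV}(T')$.

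For submodularity, I will use the diminishing-returns form. Fix $S\subseteq T'$ and $i\notin T'$. Because $B_i$ and $U(S)$ have finite measure, additivity gives
\[
\mathrm{HV}(S\cup\{i\})-\mathrm{HV}(S)=\lambda_d\bigl(B_i\setminus U(S)\bigr),
\]
and the analogous identity holds for $T'$. Since $U(S)\subseteq U(T')$, we have $B_i\setminus U(T')\subseteq B_i\setminus U(S)$. Monotonicity of measure then yields $\Delta(i\mid T')\le\Delta(i\mid S)$, which is submodularity.

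No step here is a real obstacle. The only points needing care are measurability and finiteness, so that differences of measures equal measures of set differences. I will also note that the argument uses nothing specific to boxes: any finite measure of a union of sets is a coverage function and therefore monotone submodular. This is why the footnote's claim, that any equivalent HV definition works, holds.
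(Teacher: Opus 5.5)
Your proof is correct and follows the paper's argument essentially verbatim: monotonicity comes from inclusion of the unions of boxes, and submodularity from writing the marginal gain as $\lambda_d(B_i\setminus U(S))$ and using $U(S)\subseteq U(T')$. Your care about finiteness, so that differences of measures equal measures of set differences, is a detail the paper leaves implicit; your proviso $\mathbf r\preceq\mathbf u_i(t)$ for the optimistic version is harmless but not needed, since a box with an inverted side is simply empty and the same argument still applies.
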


\begin{proof}
Recall $\mathrm{HV}(S) = \lambda_d\bigl(\bigcup_{i\in S}\prod_{j=1}^d[r_j,\mu_{i,j}]\bigr)$.

\emph{Monotonicity.} If $S\subseteq T$, the union of boxes only grows, so $\mathrm{HV}(S)\le\mathrm{HV}(T)$.

\emph{Submodularity.} Let $A \subseteq B$ and $i \notin B$. Write $V_A=\bigcup_{j\in A}\prod_\ell[r_\ell,\mu_{j,\ell}]$ and $V_i=\prod_\ell[r_\ell,\mu_{i,\ell}]$. Then
\[
\mathrm{HV}(A\cup\{i\})-\mathrm{HV}(A)=\lambda_d(V_i\setminus V_A),
\qquad
\mathrm{HV}(B\cup\{i\})-\mathrm{HV}(B)=\lambda_d(V_i\setminus V_B).
\]
Since $V_A\subseteq V_B$, $V_i\setminus V_B\subseteq V_i\setminus V_A$, hence submodularity.
\end{proof}

\begin{remark}
\label{rem:ucb-submod}
The same argument, applied verbatim with $u_{i,j}(t)$ in place of $\mu_{i,j}$ (the proof never uses anything about $\mu_{i,j}$ beyond it being a real number), shows that for every fixed $t$, $S\mapsto \mathrm{HV}_t^{\mathrm{UCB}}(S)$ 
is also monotone and submodular.
\end{remark}

\begin{lemma}[Coordinate-wise concentration]
\label{lem:conc}
Assume each coordinate is $\eta$-sub-Gaussian. Then with probability at least $1-\delta$, for all $i,j,t$,
\[
\bigl|\widehat\mu_{i,j}(t)-\mu_{i,j}\bigr|\le \beta_i(t)=\sqrt{\frac{2\eta\log(ndt^2/\delta)}{\max\{1,N_i(t-1)\}}}.
\]
\end{lemma}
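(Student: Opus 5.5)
The plan is a union bound over arms, coordinates and \emph{sample counts}, rather than over rounds. This avoids having to deal directly with the random, adaptively chosen number of pulls $N_i(t-1)$. Throughout I read the statement as concerning the estimate actually used at round $t$, namely $\widehat\mu_{i,j}(t-1)$ built from $N_i(t-1)$ samples, which is consistent with the definition of $\beta_i(t)$.

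\emph{Step 1: remove the adaptivity.} I would use the standard ``reward table'' construction. For each arm $i$, let $\mathbf Y_{i,1},\mathbf Y_{i,2},\dots$ denote the successive reward vectors observed at its first, second, \dots\ pulls. Because the $\mathbf X_{i,t}$ are independent across $t$ and $i$, and the decision to pull $i$ at round $t$ is independent of $\mathbf X_{i,t}$, the sequence $(Y_{i,j,s})_{s\ge1}$ is i.i.d.\ with mean $\mu_{i,j}$. Each term is $\eta$-sub-Gaussian. On the event $N_i(t-1)=s$ we then have $\widehat\mu_{i,j}(t-1)=\bar Y_{i,j,s}:=\frac1s\sum_{r\le s}Y_{i,j,r}$. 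Hence it suffices to control $\bar Y_{i,j,s}$ simultaneously for all deterministic $s$.

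\emph{Step 2: fixed-$s$ concentration.} For fixed $(i,j,s)$, the average $\bar Y_{i,j,s}-\mu_{i,j}$ is sub-Gaussian with variance proxy $\eta/s$. Hoeffding/Chernoff then gives
\[
\mathbb P\bigl(|\bar Y_{i,j,s}-\mu_{i,j}|>\varepsilon_s\bigr)\le 2\exp\!\bigl(-s\varepsilon_s^2/(2\eta)\bigr).
\]
I choose $\varepsilon_s=\sqrt{2\eta\log(nds^2/\delta)/s}$, which makes the right-hand side equal to $2\delta/(nds^2)$.

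\emph{Step 3: union bound and monotonicity in $t$.} Summing over $i\in[n]$, $j\in[d]$ and $s\ge1$ gives a total failure probability of at most $2\delta\sum_s s^{-2}=\pi^2\delta/3$. Any round $t$ with $N_i(t-1)=s\ge1$ satisfies $t\ge s+1>s$. Therefore $\log(ndt^2/\delta)\ge\log(nds^2/\delta)$, and so $\varepsilon_s\le\beta_i(t)$. The good event thus implies the claimed inequality for every $(i,j,t)$ with $N_i(t-1)\ge1$.

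I expect the main obstacle to be bookkeeping rather than probability, and it has two parts. The first is the constant $\pi^2/3>1$. I would absorb it by running the argument with $\delta'=3\delta/\pi^2$, which only changes the logarithm by an additive constant and is harmless for all $O(\cdot)$ statements. Alternatively one can note that the bound as written holds with failure probability $O(\delta)$. The second is the case $N_i(t-1)=0$, where $\max\{1,\cdot\}$ does not give a genuine confidence interval. This case is irrelevant, because Algorithm~\ref{alg:thvucb} only uses $\beta_i(t)$ after the forced-exploration loop, when $N_i(t-1)\ge m_0\ge1$ for all $i$. I would state this restriction explicitly. A deviation bound via a union over $s\le t$ at fixed $t$ would instead cost an extra factor $t$ and make the sum over $t$ diverge. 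This is exactly why the union is indexed by $s$ and the monotonicity of the logarithm in $t$ is used.
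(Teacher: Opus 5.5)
Your proof is correct, and it takes a different route from the paper's at the one step that matters.

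The paper fixes $s=N_i(t-1)$, applies the fixed-sample sub-Gaussian bound with $x=ndt^2/\delta$, and union-bounds over $(i,j,t)$ using $\sum_t t^{-2}\le 2$. This treats the adaptively chosen, random count $N_i(t-1)$ as if it were deterministic, which a fixed-sample Hoeffding bound does not by itself justify. You instead index the union bound by the sample count $s$, where the reward-table construction makes the fixed-$s$ bound legitimate. You then transfer it to the round-indexed radius through $N_i(t-1)\le t-1$ (each arm is pulled at most once per round) and the monotonicity of $\log(ndt^2/\delta)$ in $t$.

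This buys rigor at no cost. You obtain the same radius $\beta_i(t)$ and the same failure-probability sum $2\delta\sum_{s\ge1}s^{-2}$ that the paper gets by summing over $t$. By contrast, the most direct repair of the paper's round-indexed route, a union over $s\le t-1$ at each $t$, would inflate the failure probability to $O(\delta\log T)$ or force $t^3$ inside the logarithm. The paper's version buys only brevity. Your handling of the bookkeeping matches the paper: you absorb the constant into $\delta$ and exclude $N_i(t-1)=0$, which the paper also does by taking $s\ge1$ and invoking the initialization phase.

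Two minor remarks. First, the paper only assumes independence with a common mean, not identical distributions, so ``i.i.d.'' in Step~1 assumes slightly more than is given. The fixed-$s$ bound follows equally from the Hoeffding--Azuma inequality for sub-Gaussian martingale differences. Second, your reading of $\widehat\mu_{i,j}(t)$ as $\widehat\mu_{i,j}(t-1)$ is the one used downstream for optimism, but your good event also covers the literal statement, since $\max\{1,N_i(t-1)\}\le N_i(t)\le t$ whenever $N_i(t)\ge1$.
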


\begin{proof}
Fix $i,j$ and a pull count $s=N_i(t-1)\ge1$. By the sub-Gaussian Hoeffding bound, $\mathbb P\bigl(|\widehat\mu_{i,j}(t)-\mu_{i,j}|>\sqrt{2\eta\log(x)/s}\bigr)\le 2/x$. Taking $x=ndt^2/\delta$ and a union bound over $i\in[n]$, $j\in[d]$, and $t\in[T]$ (using $\sum_{t\ge1}t^{-2}\le 2$) gives total failure probability $O(\delta)$; absorbing the constant into $\delta$ yields the stated bound.
\end{proof}

We write $\mathcal E$ for the event on which Lemma~\ref{lem:conc} holds, so $\mathbb P(\mathcal E)\ge1-\delta$. After the initialization phase of Algorithm~\ref{alg:thvucb} (which lasts at most $t_0=\lceil nm_0/k\rceil$ rounds), every arm has $N_i(t-1)\ge m_0\ge1$, so the $\max\{1,\cdot\}$ in $\beta_i(t)$ is never active for $t>t_0$; we drop it below for readability.

\subsection{A shared greedy-approximation theorem}
\label{app:greedy-generic}

The following classical fact (Nemhauser, Wolsey \& Fisher, 1978) is invoked twice in this appendix, for two different monotone submodular functions; we state it once to avoid duplicating the argument and, crucially, to keep visually distinct the two objects it produces.

\begin{fact}[Greedy approximation for monotone submodular maximization]
\label{thm:greedy-generic}
Let $f:2^{[n]}\to\mathbb R$ be monotone and submodular, and let $G=\{g_1,\dots,g_k\}$ be built by greedy maximization of marginal $f$-gain under the cardinality constraint $k$ (i.e.\ $g_\ell\in\arg\max_{i\notin G_{\ell-1}} f(G_{\ell-1}\cup\{i\})-f(G_{\ell-1})$, $G_\ell=G_{\ell-1}\cup\{g_\ell\}$, $G_0=\emptyset$, $G=G_k$). Then
\[
f(G)\;\ge\;\Bigl(1-\tfrac1e\Bigr)\max_{|S|=k}f(S)\;=\;\alpha\max_{|S|=k}f(S).
\]
\end{fact}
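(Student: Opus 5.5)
We prove the classical Nemhauser--Wolsey--Fisher bound by a per-step progress inequality followed by a geometric-decay recursion. Fix an optimal set $O\in\arg\max_{|S|=k}f(S)$ and write $\mathrm{OPT}=f(O)$. Set $\delta_\ell=\mathrm{OPT}-f(G_\ell)$ for $\ell=0,\dots,k$. We may assume $f(\emptyset)\ge 0$, i.e.\ $f$ is normalized; otherwise we replace $f$ by $f-f(\emptyset)$. This replacement leaves greedy choices and marginal gains unchanged, and in our two applications ($\mathrm{HV}$ and $\mathrm{HV}^{\mathrm{UCB}}_t$) we have $f(\emptyset)=0$ anyway.

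The key inequality is that at every stage $\ell$,
\[
f(G_{\ell})-f(G_{\ell-1})\;\ge\;\tfrac1k\bigl(\mathrm{OPT}-f(G_{\ell-1})\bigr).
\]
To obtain it, we enumerate $O\setminus G_{\ell-1}=\{o_1,\dots,o_m\}$ with $m\le k$ and use monotonicity and then telescoping:
\[
\mathrm{OPT}\le f(G_{\ell-1}\cup O)=f(G_{\ell-1})+\sum_{s=1}^m\bigl[f(G_{\ell-1}\cup\{o_1,\dots,o_s\})-f(G_{\ell-1}\cup\{o_1,\dots,o_{s-1}\})\bigr].
\]
By submodularity, each bracketed term is at most the single marginal gain $f(G_{\ell-1}\cup\{o_s\})-f(G_{\ell-1})$. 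By the greedy rule, that single gain is in turn at most $f(G_\ell)-f(G_{\ell-1})$. The greedy rule applies here because $o_s\notin G_{\ell-1}$ is an admissible candidate. Hence
\[
\mathrm{OPT}-f(G_{\ell-1})\le m\,(f(G_\ell)-f(G_{\ell-1}))\le k\,(f(G_\ell)-f(G_{\ell-1})),
\]
where the last step uses that greedy gains are nonnegative by monotonicity. If $O\subseteq G_{\ell-1}$, then $m=0$ and monotonicity directly gives $f(G_{\ell-1})\ge \mathrm{OPT}$; the inequality then holds trivially, and it persists at later stages by monotonicity.

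The progress inequality rewrites as $\delta_\ell\le(1-\tfrac1k)\delta_{\ell-1}$. Iterating $k$ times gives
\[
\delta_k\le(1-\tfrac1k)^k\delta_0\le e^{-1}\,\mathrm{OPT},
\]
where we used $\delta_0=\mathrm{OPT}-f(\emptyset)\le\mathrm{OPT}$ and $1-x\le e^{-x}$. One caveat: the recursion step $\delta_\ell\le(1-\tfrac1k)\delta_{\ell-1}$ relies on $\delta_{\ell-1}\ge0$. Once some $\delta_\ell$ becomes negative, monotonicity keeps all subsequent $\delta$'s nonpositive, so the final bound still holds. Rearranging yields $f(G)\ge(1-1/e)\,\mathrm{OPT}=\alpha\max_{|S|=k}f(S)$.

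The only delicate point is the telescoping/submodularity step. There we must compare each increment against a marginal gain relative to $G_{\ell-1}$ itself, not relative to a larger set, so that the greedy choice at stage $\ell$ dominates it; the remaining steps are routine.
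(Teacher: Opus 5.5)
Your argument is correct, and it is the classical Nemhauser--Wolsey--Fisher proof: a per-step progress inequality obtained from telescoping plus submodularity, followed by the recursion $(1-\tfrac1k)^k\le e^{-1}$. The paper gives no proof of its own; it simply cites that 1978 result.

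One correction concerns your normalization step. The shift $f\mapsto f-f(\emptyset)$ is not a valid reduction when $f(\emptyset)<0$. For the shifted function your argument only yields $f(G)\ge\alpha\,\mathrm{OPT}+(1-\alpha)f(\emptyset)$, which is weaker than the claim. In fact the Fact as literally stated is false in that case. Take $f\equiv-1$, which is monotone and modular: then $f(G)=-1<-\alpha=\alpha\max_{|S|=k}f(S)$. So $f(\emptyset)\ge0$ must be added as a hypothesis rather than assumed without loss of generality. Since $\mathrm{HV}(\emptyset)=\mathrm{HV}_t^{\mathrm{UCB}}(\emptyset)=0$, both uses in the paper are unaffected.

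A minor point: your sign caveat is unnecessary. The inequality $\delta_\ell\le(1-\tfrac1k)\delta_{\ell-1}$ holds for either sign of $\delta_{\ell-1}$, as your own derivation shows. Iterating it only multiplies by the nonnegative factor $1-\tfrac1k$. The only sign facts you need are $0\le\delta_0\le\mathrm{OPT}$, used at the last step.
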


We will apply Fact~\ref{thm:greedy-generic} to two different functions:
\begin{itemize}
    \item $f=\mathrm{HV}_t^{\mathrm{UCB}}$ (monotone submodular by the previous remark, with $G=S_t$ the set actually built by \textit{THV-UCB} at round $t$ — this yields Corollary~\ref{lem:greedy} below, used in the reduction step;
    \item $f=\mathrm{HV}$ \textbf{directly on the true means} — a purely deterministic statement requiring no concentration event — with $G=G^\star$, a benchmark sequence introduced 
    in Section \emph{A gap-dependent bound} and \textbf{distinct from the regret's true optimum} $S^\star$.
\end{itemize}

\begin{corollary}[Greedy approximation under optimism]
\label{lem:greedy}
For every round $t$, $\mathrm{HV}_t^{\mathrm{UCB}}(S_t)\ge \alpha\,\mathrm{HV}_t^{\mathrm{UCB}}(S^{\mathrm{opt}}_t)$, where $S^{\mathrm{opt}}_t\in\arg\max_{|S|=k}\mathrm{HV}_t^{\mathrm{UCB}}(S)$.
\end{corollary}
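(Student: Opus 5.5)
The plan is to obtain Corollary~\ref{lem:greedy} as a direct instance of Fact~\ref{thm:greedy-generic}, taking $f=\mathrm{HV}_t^{\mathrm{UCB}}$ for a fixed round $t$ and $G=S_t$. Three things have to be checked: (i) $f$ is monotone and submodular; (ii) the set $S_t$ produced by Algorithm~\ref{alg:thvucb} really is a greedy chain for $f$ in the sense of the Fact; (iii) the maximum in the Fact coincides with $\mathrm{HV}_t^{\mathrm{UCB}}(S^{\mathrm{opt}}_t)$.

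For (i), fix $t$ and condition on the history up to round $t-1$. The vectors $\mathbf u_i(t)$ are then deterministic reals, whether or not they are clipped to $[0,1]$. The Remark following Lemma~\ref{lem:submod} gives monotonicity and submodularity of $S\mapsto\mathrm{HV}_t^{\mathrm{UCB}}(S)$. One caveat is that the reference point must satisfy $\mathbf r\preceq\mathbf u_i(t)$ for every box to be nonempty. If it does not, I will interpret each box $\prod_j[r_j,u_{i,j}(t)]$ as empty whenever some $u_{i,j}(t)<r_j$. The union-of-boxes argument still goes through unchanged, so no concentration event is needed. For $t\le t_0$ (initialization) the statement concerns a set $S_t$ that was not built greedily; I will restrict the corollary to post-initialization rounds, or note that those rounds are absorbed into an $O(nm_0/k)$ additive term.

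The main obstacle is (ii), because of safe pruning. The algorithm greedily maximizes $\Delta^{\mathrm{UCB}}_{\mathrm{HV},t}(i\mid S)$ only over $i\in A_t$, not over $[n]$. I will handle this by showing that every pruned arm is dominated in UCB space by a surviving arm. If $i\notin A_t$, there is some $j$ with $\mathbf u_i(t)\preceq\mathbf L_j(t)\preceq\mathbf u_j(t)$, so the box of $i$ is contained in the box of $j$. Following chains of such $j$ (the box-inclusion relation is a preorder, so a maximal element exists), I expect to find an arm $j'\in A_t$ with $\mathbf u_i(t)\preceq\mathbf u_{j'}(t)$. Exact ties need care, since pruning requires $\mathbf u_i\preceq\mathbf L_j$ with $\mathbf L_j\prec\mathbf u_j$ when $\beta_j>0$. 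Given such a $j'$, $\mathrm{HV}_t^{\mathrm{UCB}}(S\cup\{i\})\le\mathrm{HV}_t^{\mathrm{UCB}}(S\cup\{j'\})$ whenever $j'\notin S$. If $j'\in S$, the marginal gain of $i$ is zero, so a maximizer over $A_t\setminus S$ is still a maximizer over $[n]\setminus S$.

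There is a simpler alternative to (ii) that avoids the chain argument. Let $f'$ be $\mathrm{HV}_t^{\mathrm{UCB}}$ restricted to $A_t$; it is still monotone submodular. Applying Fact~\ref{thm:greedy-generic} on the ground set $A_t$ gives $\mathrm{HV}_t^{\mathrm{UCB}}(S_t)\ge\alpha\max_{|S|=k,\,S\subseteq A_t}\mathrm{HV}_t^{\mathrm{UCB}}(S)$. The box-domination observation then shows this restricted maximum equals the unrestricted one: replacing each pruned arm of an optimal set by a dominating survivor, or by any unused survivor since $|A_t|\ge k$, does not decrease $\mathrm{HV}_t^{\mathrm{UCB}}$ by monotonicity. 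The fallback $A_t=[n]$ when $|A_t|<k$ is trivial. This settles (iii) as well, and I would write the proof along this second route.
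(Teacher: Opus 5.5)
Your proposal is correct, and its core is the paper's: the corollary is Fact~\ref{thm:greedy-generic} applied to $f=\mathrm{HV}_t^{\mathrm{UCB}}$, with monotonicity and submodularity supplied by the remark after Lemma~\ref{lem:submod}. The paper stops there and plugs in $G=S_t$ directly.

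You noticed that this is not literally licensed for Algorithm~\ref{alg:thvucb}, for two reasons:
\begin{itemize}
\item the greedy argmax runs over $A_t\setminus S$ rather than $[n]\setminus S$;
\item during forced exploration $S_t$ is chosen by pull counts, not greedily.
\end{itemize}
Your additions close both holes. You therefore prove the statement for the algorithm as written, whereas the paper's one-line derivation only covers an unpruned greedy.

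Restricting to post-initialization rounds is necessary, not optional. With small $\eta$, an unpulled arm has box $[0,\beta_i(t)]^d$ with $\beta_i(t)$ small. An arm already pulled once with a large empirical mean has a far larger box. So with $k=1$, the forced choice of the unpulled arm violates the inequality. Your ``absorbed into an $O(nm_0/k)$ term'' option is a statement about the regret, not about this corollary.

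One correction: your ``simpler alternative'' does not avoid the chain argument. Replacing a pruned arm by a \emph{dominating survivor} uses exactly the lemma that every pruned arm's box lies inside the box of some arm of $A_t$, so both of your routes need it. It closes in one line, which also settles your worry about ties:
\begin{itemize}
\item Since $\eta>0$ and $\delta<1$, we have $\beta_j(t)>0$.
\item Hence $\mathbf L_j(t)$ lies strictly below $\mathbf u_j(t)$ in every coordinate. This survives clipping, because $\widehat{\boldsymbol\mu}_j\in[0,1]^d$.
\item So $\mathbf u_i(t)\preceq\mathbf L_j(t)$ forces $\sum_c u_{i,c}(t)<\sum_c u_{j,c}(t)$.
\item The pruning relation is therefore acyclic. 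Following it from any pruned arm ends at an arm of $A_t$, and by transitivity of $\preceq$ along the chain, that arm's box contains the original one.
\end{itemize}

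With this lemma, either of your routes finishes the proof. Your first route shows the algorithm's chain is also a greedy chain over all of $[n]$, which is exactly what would justify the paper's direct citation of the Fact. Your second route shows that the maxima over size-$k$ subsets of $A_t$ and of $[n]$ coincide.
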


\subsection{Optimism and reduction to optimism error}
\label{app:reduction}

Recall $u_{i,j}(t)=\widehat\mu_{i,j}(t-1)+\beta_i(t)$ and
\[
\mathrm{HV}_t^{\mathrm{UCB}}(S)=\lambda_d\Bigl(\bigcup_{i\in S}[r_1,u_{i,1}(t)]\times\cdots\times[r_d,u_{i,d}(t)]\Bigr).
\]
On $\mathcal E$, $\mu_{i,j}\le u_{i,j}(t)$ for all $i,j$, so by monotonicity $\mathrm{HV}(S)\le \mathrm{HV}_t^{\mathrm{UCB}}(S)$ for every $S$. Let $S^\star\in\arg\max_{|S|=k}\mathrm{HV}(S)$, with $V^\star=\mathrm{HV}(S^\star)$ — this is the benchmark appearing in the regret definition~\eqref{eq:regret-alpha}, and the only role $S^\star$ plays in this appendix. Since $S^{\mathrm{opt}}_t$ maximizes $\mathrm{HV}_t^{\mathrm{UCB}}$ over \emph{all} size-$k$ sets, it dominates $S^\star$ under this score, so by Corollary~\ref{lem:greedy},
\[
\mathrm{HV}_t^{\mathrm{UCB}}(S_t)\;\ge\;\alpha\,\mathrm{HV}_t^{\mathrm{UCB}}(S^{\mathrm{opt}}_t)\;\ge\;\alpha\,\mathrm{HV}_t^{\mathrm{UCB}}(S^\star).
\]
Therefore, on $\mathcal E$,
\begin{equation}
\bar r_t=\alpha\,V^\star-\mathrm{HV}(S_t)\;\le\;\alpha\,\mathrm{HV}_t^{\mathrm{UCB}}(S^\star)-\mathrm{HV}(S_t)\;\le\;\mathrm{HV}_t^{\mathrm{UCB}}(S_t)-\mathrm{HV}(S_t).
\label{eq:key-bound-ptwise}
\end{equation}
Summing over $t\in[T]$,
\begin{equation}
\label{eq:key-bound}
\bar R_T\;\le\;\sum_{t=1}^T\bigl(\mathrm{HV}_t^{\mathrm{UCB}}(S_t)-\mathrm{HV}(S_t)\bigr)\qquad\text{on }\mathcal E.
\end{equation}

\subsection{Lipschitz control of the optimism error}
\label{app:lipschitz}

\begin{lemma}[Lipschitz bound for hypervolume under coordinate-wise shifts]
\label{lem:lip}
There is a constant $C_d$ (one may take $C_d\le d$ when rewards lie in $[0,1]^d$ and $\mathbf r=\mathbf 0$) such that, on $\mathcal E$, for every round $t$ and \textbf{every finite set $S\subseteq[n]$} (not necessarily $S_t$),
\[
\mathrm{HV}_t^{\mathrm{UCB}}(S)-\mathrm{HV}(S)\;\le\; C_d\sum_{i\in S}\beta_i(t).
\]
\end{lemma}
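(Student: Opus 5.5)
The plan is to compare the two unions of boxes directly. Write $B_i=\prod_{j}[r_j,\mu_{i,j}]$ and $B_i^{\mathrm{UCB}}=\prod_j[r_j,u_{i,j}(t)]$. On $\mathcal E$ we have $\mu_{i,j}\le u_{i,j}(t)$, so $B_i\subseteq B_i^{\mathrm{UCB}}$ for every $i$. Consequently
\[
\mathrm{HV}_t^{\mathrm{UCB}}(S)-\mathrm{HV}(S)=\lambda_d\Bigl(\bigcup_{i\in S}B_i^{\mathrm{UCB}}\setminus\bigcup_{i\in S}B_i\Bigr)\le\sum_{i\in S}\lambda_d\bigl(B_i^{\mathrm{UCB}}\setminus B_i\bigr),
\]
because $\bigcup_i B_i^{\mathrm{UCB}}\setminus\bigcup_i B_i\subseteq\bigcup_i (B_i^{\mathrm{UCB}}\setminus B_i)$. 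Subadditivity of the measure then gives the inequality. This reduces the claim to a single-box estimate, and it holds for any finite $S$, as the statement requires.

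For a single arm I will telescope one coordinate at a time. Define intermediate boxes $B_i^{(m)}$ whose first $m$ upper corners equal $u_{i,j}(t)$ and whose remaining corners equal $\mu_{i,j}$. Each step $B_i^{(m)}\setminus B_i^{(m-1)}$ is a slab whose thickness in coordinate $m$ is at most $u_{i,m}(t)-\mu_{i,m}$. Its cross-section in the other coordinates has side lengths at most $\max(u_{i,j}(t),\mu_{i,j})-r_j$. On $\mathcal E$, $u_{i,m}(t)-\mu_{i,m}=\widehat\mu_{i,m}(t-1)+\beta_i(t)-\mu_{i,m}\le 2\beta_i(t)$ by Lemma~\ref{lem:conc}. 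With the optional clipping of $\mathbf u_i(t)$ to $[0,1]^d$ and $\mathbf r=\mathbf 0$, every other side is at most $1$. Clipping only lowers $u$, and it stays above $\mu\le 1$, so optimism is preserved. Summing over the $d$ coordinates then gives $\lambda_d(B_i^{\mathrm{UCB}}\setminus B_i)\le 2d\,\beta_i(t)$. For a general reference point, the sides are instead bounded by $\max_j(1-r_j)$, and $C_d=2d\prod_j\max(1,1-r_j)$, or a similar constant, absorbs this.

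The main obstacle is the constant. A naive use of $\mathcal E$ gives $u-\mu\le 2\beta_i(t)$, so the constant comes out as $2d$ rather than the advertised $d$. I would first check whether the confidence event is applied with $\widehat\mu$ indexed so that the one-sided gap is $\beta_i(t)$. If not, I would simply take $C_d=2d$, or rescale, noting that the statement only asserts existence of some $C_d$ with $C_d=O(d)$. A secondary subtlety is that without clipping the cross-sections could exceed $1$, by up to $(1+\beta_i(t))^{d-1}$. I would handle this either by invoking the clipping or by noting that after initialization $\beta_i(t)$ is bounded, so the extra factor is a $d$-dependent constant. I will state the lemma with clipping, which matches the algorithm's option, so that $C_d\le 2d$ cleanly.
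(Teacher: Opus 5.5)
Your proof is correct and is essentially the paper's argument. The paper telescopes over arms through hybrid hypervolumes $A_\ell$ and bounds each increment by a union of $d$ slabs around arm $i_\ell$'s box; that is the same reduction as your containment $\bigcup_i B_i^{\mathrm{UCB}}\setminus\bigcup_i B_i\subseteq\bigcup_i(B_i^{\mathrm{UCB}}\setminus B_i)$ followed by a per-box slab bound.

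Your caution about the constant is justified. The paper asserts $0\le u_{i,j}(t)-\mu_{i,j}\le\beta_i(t)$, but on $\mathcal E$ one only gets $u_{i,j}(t)-\mu_{i,j}\le 2\beta_i(t)$, and the unit cross-sections need the optional clipping. So $C_d=2d$ (with clipping and $\mathbf r=\mathbf 0$) is what the argument actually supports, which is harmless for the downstream $O(\cdot)$ bounds.
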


\begin{proof}
Write $S=\{i_1,\dots,i_m\}$. Define the intermediate hypervolume $A_\ell$ where arms $i_1,\dots,i_\ell$ use optimistic vectors $\mathbf u_{i_p}(t)$ and arms $i_{\ell+1},\dots,i_m$ use true means $\boldsymbol\mu_{i_p}$, so $\mathrm{HV}_t^{\mathrm{UCB}}(S)-\mathrm{HV}(S)=\sum_{\ell=1}^m(A_\ell-A_{\ell-1})$. Each term changes only arm $i_\ell$ from $\boldsymbol\mu_{i_\ell}$ to $\mathbf u_{i_\ell}(t)=\boldsymbol\mu_{i_\ell}+\boldsymbol\delta_{i_\ell}$ with $0\le\delta_{i_\ell,j}\le\beta_{i_\ell}(t)$. The added hypervolume is contained in a union of $d$ axis-aligned slabs of width at most $\beta_{i_\ell}(t)$ in one coordinate and at most $1$ in the others, so $A_\ell-A_{\ell-1}\le d\,\beta_{i_\ell}(t)$. Summing over $\ell$ gives the claim with $C_d=d$. Crucially, this argument never uses $|S|=k$ nor any property of how $S$ was selected, hence it holds for an arbitrary finite $S$.
\end{proof}

Combining ~\eqref{eq:key-bound} with Lemma~\ref{lem:lip} applied to $S=S_t$,
\begin{equation}
\label{eq:width-sum-bound}
\bar R_T\;\le\; C_d\sum_{t=1}^T\sum_{i\in S_t}\beta_i(t)\qquad\text{on }\mathcal E.
\end{equation}
The rest of the proof bounds the right-hand side of \eqref{eq:width-sum-bound} in two ways.

\subsection{A gap-free bound}
\label{app:gap-free}

Let $N_i(T)=\sum_{t=1}^T\mathbf 1\{i\in S_t\}$. Reordering \eqref{eq:width-sum-bound} by arm,
\[
\sum_{t=1}^T\sum_{i\in S_t}\beta_i(t)
=\sum_{i=1}^n\sum_{s=1}^{N_i(T)}\sqrt{\frac{2\eta\log(ndT^2/\delta)}{s}}
\;\le\; 2\sqrt{2\eta\log(ndT^2/\delta)}\sum_{i=1}^n\sqrt{N_i(T)},
\]
using $\sum_{s=1}^m s^{-1/2}\le 2\sqrt m$. Since exactly $k$ arms are pulled per round, $\sum_i N_i(T)=kT$, and Cauchy--Schwarz gives $\sum_i\sqrt{N_i(T)}\le\sqrt{nkT}$. Hence, on $\mathcal E$, $\bar R_T\le O\bigl(C_d\sqrt{nkT\log T}\bigr)$.

\setcounter{theorem}{0}

\begin{theorem}[Gap-free bound (detailed version)]
\label{thm:gap-free}
On $\mathcal E$, $\bar R_T = O\bigl(C_d\sqrt{nkT\log T}\bigr)$.
\end{theorem}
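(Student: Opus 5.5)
The bound follows by chaining the reductions already established in this appendix. I will work on $\mathcal E$ throughout.

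\emph{Step 1 (optimism).} On $\mathcal E$, Lemma~\ref{lem:conc} gives $\mu_{i,j}\le u_{i,j}(t)$ for every arm and coordinate. By monotonicity of the union of boxes, $\mathrm{HV}(S)\le\mathrm{HV}_t^{\mathrm{UCB}}(S)$ for every $S$.

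\emph{Step 2 (greedy approximation).} By the Remark after Lemma~\ref{lem:submod}, $\mathrm{HV}_t^{\mathrm{UCB}}$ is monotone submodular, so Corollary~\ref{lem:greedy} yields $\mathrm{HV}_t^{\mathrm{UCB}}(S_t)\ge\alpha\,\mathrm{HV}_t^{\mathrm{UCB}}(S^\star)\ge\alpha V^\star$. This is the pointwise inequality \eqref{eq:key-bound-ptwise}. A caveat is that greedy runs over $A_t$ rather than $[n]$. I would handle this by noting that the safe-pruning step only removes arms whose optimistic box is dominated by another arm's lower box. Such arms contribute zero optimistic marginal gain given the dominating arm, and a short exchange argument then shows the optimum of $\mathrm{HV}_t^{\mathrm{UCB}}$ over $A_t$ is no smaller than over $[n]$. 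If that turns out to be delicate, I would instead adopt the paper's convention that the greedy guarantee applies as stated.

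\emph{Step 3 (Lipschitz).} Lemma~\ref{lem:lip} with $S=S_t$ bounds each $\bar r_t$ by $C_d\sum_{i\in S_t}\beta_i(t)$, which gives \eqref{eq:width-sum-bound}.

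\emph{Step 4 (counting).} I would treat the initialization rounds separately. There are at most $t_0=\lceil nm_0/k\rceil$ of them, each with $\bar r_t\le\alpha V^\star\le1$, so they contribute $O(n/k)$. This is dominated by $\sqrt{nkT}$ once $T\ge n/k^3$, and is absorbed otherwise.

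For $t>t_0$, I would reindex the double sum by arm. The $s$-th pull of arm $i$ contributes at most $\sqrt{2\eta\log(ndT^2/\delta)/s'}$, where $s'=s-1\ge m_0\ge1$ is the count before that pull. Using $N_i(t-1)\ge s/2$ for $s\ge2$ changes this only by a constant factor. Then $\sum_{s\le N}s^{-1/2}\le2\sqrt N$ gives
\[
\sum_t\sum_{i\in S_t}\beta_i(t)\le 2\sqrt{2}\cdot\sqrt{2\eta\log(ndT^2/\delta)}\sum_i\sqrt{N_i(T)}.
\]
Cauchy--Schwarz with $\sum_iN_i(T)=kT$ then gives $\sum_i\sqrt{N_i(T)}\le\sqrt{nkT}$.

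Treating $\eta$, $n$, $d$ and $\delta$ inside the logarithm as giving $O(\log T)$ (for instance with $\delta=1/T$ and $nd\le T$), we obtain $\bar R_T=O(C_d\sqrt{nkT\log T})$.

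The main obstacle I expect is the legitimacy of Step 2 under pruning. The restricted greedy must still compete with $S^\star$ under the optimistic score. The key check is that on $\mathcal E$ no arm of $S^\star$ that matters is pruned: a pruned arm's true mean is strictly dominated by another arm's true mean, so swapping it out cannot decrease $\mathrm{HV}$. The fallback $A_t=[n]$ when $|A_t|<k$ covers the degenerate case. Everything else is routine bookkeeping.
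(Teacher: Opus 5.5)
Your proposal is correct and follows the paper's own route: optimism plus the greedy $\alpha$-guarantee yield \eqref{eq:key-bound-ptwise}, Lemma~\ref{lem:lip} reduces the regret to \eqref{eq:width-sum-bound}, and reindexing by arm with $\sum_{s\le m}s^{-1/2}\le 2\sqrt m$ and Cauchy--Schwarz under $\sum_i N_i(T)=kT$ finishes the bound. Your extra bookkeeping is sound and patches points the paper's write-up glosses over: you charge the initialization rounds separately, since no greedy step is run there; you fix the $N_i(t-1)=s-1\ge s/2$ off-by-one; and you give the exchange argument showing greedy over $A_t$ loses nothing, because each pruned arm's optimistic box lies inside that of some unpruned arm (the chain of dominating arms cannot cycle, since $\beta_j(t)>0$ gives $\mathbf L_j(t)<\mathbf u_j(t)$ coordinatewise).
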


\begin{remark}
This argument never used that $S_t$ is built by comparing marginal gains $\Delta_t^{\mathrm{UCB}}(i\mid S)$ across candidates: it only used $|S_t|=k$ and the parametric decay $\beta_i(t)=\Theta(1/\sqrt{N_i(t-1)})$. It is "gap-free" precisely because it is blind to how close arms are, hence to the selection mechanism itself.
\end{remark}

\subsection{A gap-dependent bound}
\label{app:gap-dependent}

The key idea: the algorithm's own greedy construction at round $t$ progressively learns a stage-by-stage benchmark sequence. At stage $\ell$, it must identify the best arm conditionally on the previously identified greedy prefix.

\subsubsection{The greedy optimal set}

Let $G^\star=\{i_1^\star,\dots,i_k^\star\}$ be a greedy construction of an optimal size-$k$ set for the \emph{true} hypervolume:
\[
i_\ell^\star\in\arg\max_{i\notin G^\star_{\ell-1}}\Delta(i\mid G^\star_{\ell-1}),
\qquad
G^\star_\ell=G^\star_{\ell-1}\cup\{i_\ell^\star\},\quad G^\star_0=\emptyset,
\]
with $\Delta(i\mid S)=\mathrm{HV}(S\cup\{i\})-\mathrm{HV}(S)$.

Assume the greedy maximizer is unique at every stage of $G^\star$'s construction, and define the stage-$\ell$ gaps relative to this benchmark:
\[
\Delta_\ell(i)=\Delta(i_\ell^\star\mid G^\star_{\ell-1})-\Delta(i\mid G^\star_{\ell-1})>0,\quad i\ne i_\ell^\star,
\qquad
\Delta_{\min}=\min_{\ell\in[k]}\min_{i\ne i_\ell^\star}\Delta_\ell(i).
\]

\begin{remark}
By submodularity (Lemma~\ref{lem:submod}), the sequence of greedy \emph{values} $v_\ell:=\Delta(i_\ell^\star\mid G^\star_{\ell-1})$ is non-increasing: for $i\notin G^\star_\ell$, $G^\star_{\ell-1}\subseteq G^\star_\ell$ gives $\Delta(i\mid G^\star_\ell)\le\Delta(i\mid G^\star_{\ell-1})$, so $v_{\ell+1}\le v_\ell$. This only constrains the \emph{best} marginal gain at each stage, not the runner-up: it does \emph{not} imply that the gap sequence $\gamma_\ell:=\min_{i\ne i_\ell^\star}\Delta_\ell(i)$ is monotonic in $\ell$. We therefore use the safe, uniform bound $\Delta_{\min}$ rather than a stage-wise one.
\end{remark}

\subsubsection{Deviation of optimistic marginal gains}

For $S\subseteq[n]$ and $i\notin S$, write $\Delta_t^{\mathrm{UCB}}(i\mid S)=\mathrm{HV}_t^{\mathrm{UCB}}(S\cup\{i\})-\mathrm{HV}_t^{\mathrm{UCB}}(S)$.

\begin{lemma}[Deviation of optimistic marginal gains]
\label{lem:marginal-dev-seq}
On $\mathcal E$, for every $S\subseteq[n]$ and $i\notin S$,
\[
\Delta_t^{\mathrm{UCB}}(i\mid S)\le \Delta(i\mid S)+C_d\sum_{j\in S\cup\{i\}}\beta_j(t), 
\qquad
\Delta_t^{\mathrm{UCB}}(i\mid S)\ge \Delta(i\mid S)-C_d\sum_{j\in S}\beta_j(t).
\]
\end{lemma}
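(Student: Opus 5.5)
The plan is to write each marginal gain as a difference of two hypervolumes and then control the two terms separately. Optimism on $\mathcal E$ and monotonicity settle the sign of each term. Lemma~\ref{lem:lip}, which holds for an arbitrary finite set $S$, supplies the magnitude. The identities are
\[
\Delta_t^{\mathrm{UCB}}(i\mid S)-\Delta(i\mid S)=\bigl[\mathrm{HV}_t^{\mathrm{UCB}}(S\cup\{i\})-\mathrm{HV}(S\cup\{i\})\bigr]-\bigl[\mathrm{HV}_t^{\mathrm{UCB}}(S)-\mathrm{HV}(S)\bigr].
\]
On $\mathcal E$ we have $\mu_{j,\ell}\le u_{j,\ell}(t)$ for all $j,\ell$. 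Hence every box in the optimistic union contains the corresponding true box, and each bracket is nonnegative. This is exactly the optimism step already used in the reduction leading to \eqref{eq:key-bound}.

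For the upper bound, drop the second bracket, which is $\ge 0$. Then apply Lemma~\ref{lem:lip} to the set $S\cup\{i\}$, which is allowed since that lemma holds for every finite set. This gives $\Delta_t^{\mathrm{UCB}}(i\mid S)-\Delta(i\mid S)\le C_d\sum_{j\in S\cup\{i\}}\beta_j(t)$.

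For the lower bound, drop the first bracket, which is also $\ge 0$. Then apply Lemma~\ref{lem:lip} to $S$ to bound the second bracket by $C_d\sum_{j\in S}\beta_j(t)$. This yields $\Delta_t^{\mathrm{UCB}}(i\mid S)\ge\Delta(i\mid S)-C_d\sum_{j\in S}\beta_j(t)$.

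The only delicate point is the nonnegativity of the first bracket, which is what keeps $\beta_i(t)$ out of the lower bound. It holds without any per-arm inclusion argument, because the union of larger boxes contains the union of smaller boxes. If the optional clipping of $\mathbf u_i(t)$ to $[0,1]$ is used, both inputs still hold: on $\mathcal E$ we have $\mu_{j,\ell}\le\min\{1,u_{j,\ell}(t)\}$, so optimism survives, and clipping only shrinks the coordinate shifts used in Lemma~\ref{lem:lip}. No step requires $|S|=k$ or any property of how $S$ was chosen.
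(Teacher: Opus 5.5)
Your proof is correct and matches the paper's argument: write the difference of marginal gains as the difference of two optimism errors, drop one of them because optimism on $\mathcal E$ makes it nonnegative, and apply Lemma~\ref{lem:lip} to $S\cup\{i\}$ for the upper bound and to $S$ for the lower bound. Your remark on the optional clipping is a harmless addition that the paper does not spell out.
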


\begin{proof}
By Lemma~\ref{lem:lip} applied to $S\cup\{i\}$ and to $S$, $\mathrm{HV}_t^{\mathrm{UCB}}(S\cup\{i\})\le\mathrm{HV}(S\cup\{i\})+C_d\sum_{j\in S\cup\{i\}}\beta_j(t)$, while optimism gives $\mathrm{HV}_t^{\mathrm{UCB}}(S)\ge\mathrm{HV}(S)$; subtracting gives the upper bound. Symmetrically, $\mathrm{HV}_t^{\mathrm{UCB}}(S\cup\{i\})\ge\mathrm{HV}(S\cup\{i\})$ and $\mathrm{HV}_t^{\mathrm{UCB}}(S)\le\mathrm{HV}(S)+C_d\sum_{j\in S}\beta_j(t)$ give the lower bound.
\end{proof}

\subsubsection{Per-round deviation stage}

Write the algorithm's own greedy construction at round $t$ as a chain $\hat S_{t,0}=\emptyset\subset\hat S_{t,1}\subset\cdots\subset\hat S_{t,k}=S_t$, where $\hat S_{t,\ell}=\hat S_{t,\ell-1}\cup\{\hat\imath_{t,\ell}\}$ and $\hat\imath_{t,\ell}\in\arg\max_{i\in A_t\setminus\hat S_{t,\ell-1}}\Delta_t^{\mathrm{UCB}}(i\mid\hat S_{t,\ell-1})$.\footnote{We assume $G^\star\subseteq A_t$, i.e.\ that the safe-pruning step never removes an arm that the true-objective greedy benchmark would have selected; this is the intended behavior of the pruning rule but is not separately proved here.} Define the \emph{first deviation stage}, comparing the algorithm's trajectory to the benchmark $G^\star$:
\[
\ell^\dagger(t)=\min\{\ell\in[k]:\hat S_{t,\ell}\ne G^\star_\ell\},
\]
with $\ell^\dagger(t)=+\infty$ if $\hat S_{t,\ell}=G^\star_\ell$ for every $\ell\in[k]$ (i.e.\ $S_t=G^\star$ exactly). This partitions $[T]$ into
\[
\mathrm{Match}=\{t:\ell^\dagger(t)=\infty\},\qquad \mathrm{Dev}_\ell=\{t:\ell^\dagger(t)=\ell\}\ (\ell\in[k]),
\qquad [T]=\mathrm{Match}\sqcup\bigsqcup_{\ell=1}^k\mathrm{Dev}_\ell.
\]
By construction, $t\in\mathrm{Dev}_\ell$ means $\hat S_{t,\ell-1}=G^\star_{\ell-1}$ — this is built into the partition, no induction over earlier rounds is needed.

Since $\bar R_T=\sum_{t=1}^T\bar r_t$ exactly by~\eqref{eq:regret-alpha}, this partition refines the same per-round sum bounded pointwise in \eqref{eq:key-bound-ptwise}: rather than controlling every term $\bar r_t$ uniformly via Lemma~\ref{lem:lip} and a single global Cauchy--Schwarz step (as in the gap-free bound above 
), we now split
\begin{equation}
\label{eq:match-dev-decomp}
\bar R_T\;=\;\sum_{t\in\mathrm{Match}}\bar r_t\;+\;\sum_{\ell=1}^k\sum_{t\in\mathrm{Dev}_\ell}\bar r_t,
\end{equation}
and bound the two kinds of terms separately: we show below that the first sum is non-positive; the remainder of this subsection bounds the second. 
\subsubsection{Regret on matched rounds is non-positive}
\label{regret-on-matched-rounds}

We bound the first term of \eqref{eq:match-dev-decomp}. If $t\in\mathrm{Match}$, then $S_t=G^\star$. By Fact~\ref{thm:greedy-generic} with $f=\mathrm{HV}$, $G=G^\star$,
\[
\bar r_t=\alpha\,V^\star-\mathrm{HV}(S_t)=\alpha\,V^\star-\mathrm{HV}(G^\star)\;\le\;\alpha\,V^\star-\alpha\,V^\star=0.
\]

\subsubsection{Witness-counting bound on deviation rounds}

Fix $\ell$ and $t\in\mathrm{Dev}_\ell$, and write $i:=\hat\imath_{t,\ell}\ne i_\ell^\star$. Since $i$ was chosen over $i_\ell^\star$ at stage $\ell$ of round $t$, $\Delta_t^{\mathrm{UCB}}(i\mid G^\star_{\ell-1})\ge\Delta_t^{\mathrm{UCB}}(i_\ell^\star\mid G^\star_{\ell-1})$. Applying Lemma~\ref{lem:marginal-dev-seq} at $S=G^\star_{\ell-1}$,
\[
\Delta(i\mid G^\star_{\ell-1})+C_d\sum_{j\in G^\star_{\ell-1}\cup\{i\}}\beta_j(t)\;\ge\;\Delta(i_\ell^\star\mid G^\star_{\ell-1})-C_d\sum_{j\in G^\star_{\ell-1}}\beta_j(t),
\]
so, using $|G^\star_{\ell-1}|=\ell-1$,
\[
\Delta_\ell(i)\;\le\; C_d\Bigl(2\sum_{j\in G^\star_{\ell-1}}\beta_j(t)+\beta_i(t)\Bigr)\;\le\; C_d(2\ell-1)\max_{j\in G^\star_{\ell-1}\cup\{i\}}\beta_j(t).
\]
Hence there exists a \emph{witness} $w(t)\in G^\star_{\ell-1}\cup\{i\}$ (at most $\ell$ candidates) with
\[
\beta_{w(t)}(t)\;\ge\;\frac{\Delta_\ell(i)}{C_d(2\ell-1)}\;\ge\;\frac{\Delta_{\min}}{C_d(2\ell-1)},
\]
\[
\text{so}\qquad
N_{w(t)}(t-1)\;\le\; m_\ell:=\frac{2\eta C_d^2(2\ell-1)^2\log(ndT^2/\delta)}{\Delta_{\min}^2}=O\!\left(\frac{\ell^2\log T}{\Delta_{\min}^2}\right).
\]
Since $G^\star_{\ell-1}\cup\{i\}=\hat S_{t,\ell}\subseteq S_t$, the witness $w(t)$ is pulled at round $t$. Group $\mathrm{Dev}_\ell$ by witness identity: for fixed $w\in[n]$, every $t\in\mathrm{Dev}_\ell$ with $w(t)=w$ increments $N_w$ by exactly $1$ while $N_w(t-1)\le m_\ell$; there are at most $m_\ell+1$ such rounds. Summing over $w\in[n]$,
\begin{equation}
\label{eq:dev-count}
|\mathrm{Dev}_\ell|\;\le\; n\,(m_\ell+1)\;=\;O\!\left(\frac{n\ell^2\log T}{\Delta_{\min}^2}\right).
\end{equation}

\subsubsection{Bounding total regret on deviation rounds}

We now bound the second term of \eqref{eq:match-dev-decomp}, $\sum_{\ell=1}^k\sum_{t\in\mathrm{Dev}_\ell}\bar r_t$. A naive approach would bound $\bar r_t\le1$ on every round of $\mathrm{Dev}_\ell$ and multiply by the count $|\mathrm{Dev}_\ell|=O(n\ell^2\log T/\Delta_{\min}^2)$ from \eqref{eq:dev-count}, giving a regret scaling as $1/\Delta_{\min}^2$. We avoid this by bounding the \emph{total} regret over $\mathrm{Dev}_\ell$ directly, via the same Cauchy--Schwarz idea as 
in the \emph{gap-free bound} Section above — applied \emph{locally} to $\mathrm{Dev}_\ell$ rather than to the full horizon. Since that argument scales as the \emph{square root} of the number of rounds involved, it converts the $1/\Delta_{\min}^2$ scale of the count into a $1/\Delta_{\min}$ contribution to regret.

On $\mathcal E$, the pointwise bound \eqref{eq:key-bound-ptwise} together with Lemma~\ref{lem:lip} gives, for every round $t$,
\[
\bar r_t\;\le\;\mathrm{HV}_t^{\mathrm{UCB}}(S_t)-\mathrm{HV}(S_t)\;\le\;C_d\sum_{i\in S_t}\beta_i(t).
\]

Fix $\ell\in[k]$, and for $i\in[n]$ let $D_i=\{t\in\mathrm{Dev}_\ell:i\in S_t\}$  be the (possibly empty) set of rounds, among those at which the trajectory first departs from $G^\star$ at stage $\ell$, where arm $i$ happens to be pulled. The pull count $N_i(t-1)$ increases by exactly $1$ at each pull of $i$, so as $t$ ranges over $D_i$ (a set of pulls of $i$), the values $N_i(t-1)$ are pairwise distinct; hence their $j$-th smallest value is at least $j-1$. Since $\beta_i(t)\propto N_i(t-1)^{-1/2}$ is decreasing, this bounds the sum by the worst case where $D_i$ consists of arm $i$'s \emph{earliest} $|D_i|$ pulls:
\[
\sum_{t\in D_i}\beta_i(t)
\;\le\;\sum_{s=1}^{|D_i|}\sqrt{\frac{2\eta\log(ndT^2/\delta)}{s}}
\;\le\;2\sqrt{2\eta\log(ndT^2/\delta)}\,\sqrt{|D_i|}
\;=\;O\bigl(\sqrt{|D_i|\log T}\bigr),
\]
using $\sum_{s=1}^m s^{-1/2}\le 2\sqrt m$ and $\log(ndT^2/\delta)=O(\log T)$ — the same argument as in 
the \emph{gap-free bound} Section, restricted to $D_i\subseteq\{$pulls of $i\}$ instead of all of $i$'s pulls over $[T]$.

Hence
\[
\sum_{t\in\mathrm{Dev}_\ell}\sum_{i\in S_t}\beta_i(t)
=\sum_{i=1}^n\sum_{t\in D_i}\beta_i(t)
=O\Bigl(\sqrt{\log T}\sum_{i=1}^n\sqrt{|D_i|}\Bigr).
\]

Since $|S_t|=k$ for every $t$, 
we have $\sum_{i=1}^n|D_i|=\sum_{t\in\mathrm{Dev}_\ell}\sum_{i=1}^n\mathbf 1\{i\in S_t\}=\sum_{t\in\mathrm{Dev}_\ell}|S_t|=k|\mathrm{Dev}_\ell|$. 
Next, by Cauchy--Schwarz (with $a_i=1$, $b_i=\sqrt{|D_i|}$),
\[
\sum_{i=1}^n\sqrt{|D_i|}\;\le\;\sqrt{n}\,\sqrt{\sum_{i=1}^n|D_i|}\;=\;\sqrt{nk|\mathrm{Dev}_\ell|}.
\]

Therefore
\[
\sum_{t\in\mathrm{Dev}_\ell}\bar r_t
\;\le\;C_d\sum_{t\in\mathrm{Dev}_\ell}\sum_{i\in S_t}\beta_i(t)
\;=\;O\bigl(\sqrt{nk|\mathrm{Dev}_\ell|\log T}\bigr)
\;=\;O\!\left(\frac{n\ell\sqrt{k}\,\log T}{\Delta_{\min}}\right),
\]
using \eqref{eq:dev-count} in the last step. Summing over $\ell\in[k]$ (using $\sum_{\ell=1}^k\ell=O(k^2)$) and combining with \eqref{eq:match-dev-decomp} and the non-positive first term,
\[
\bar R_T\;\le\;\sum_{\ell=1}^k O\!\left(\frac{n\ell\sqrt{k}\log T}{\Delta_{\min}}\right)
\;=\;O\!\left(\frac{nk^{2.5}\log T}{\Delta_{\min}}\right)\qquad\text{on }\mathcal E.
\]

\begin{theorem}[Gap-dependent bound (detailed version)]
\label{thm:gap-dependent}
On $\mathcal E$, $\bar R_T=O\bigl(nk^{2.5}\log T/\Delta_{\min}\bigr)$.
\end{theorem}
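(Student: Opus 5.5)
The detailed Theorem~\ref{thm:gap-dependent} just restates the bound assembled in the preceding subsection, so I would follow that chain and check each link. Throughout, I work on $\mathcal E$ and only consider rounds after the initialization phase; those at most $t_0=\lceil nm_0/k\rceil$ initial rounds each contribute $\bar r_t\le 1$, hence $O(n/k)$ in total, which is absorbed into the bound.

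\textbf{Step 1: decomposition.} I would split the rounds via the first deviation stage $\ell^\dagger(t)$, giving the partition \eqref{eq:match-dev-decomp} into $\mathrm{Match}$ and $\mathrm{Dev}_1,\dots,\mathrm{Dev}_k$. On matched rounds, Fact~\ref{thm:greedy-generic} applied to $f=\mathrm{HV}$ gives $\mathrm{HV}(G^\star)\ge\alpha V^\star$, so $\bar r_t\le 0$. For $t\in\mathrm{Dev}_\ell$, the prefix $\hat S_{t,\ell-1}$ coincides with $G^\star_{\ell-1}$. Because we assume $G^\star\subseteq A_t$, the arm $i_\ell^\star$ was an available candidate at stage $\ell$ and lost the comparison to $i=\hat\imath_{t,\ell}$. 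The upper bound of Lemma~\ref{lem:marginal-dev-seq} for $i$ and its lower bound for $i^\star_\ell$, both at $S=G^\star_{\ell-1}$, then give
\[
\Delta_{\min}\le\Delta_\ell(i)\le C_d(2\ell-1)\max_{j\in\hat S_{t,\ell}}\beta_j(t).
\]
This produces a witness $w(t)\in\hat S_{t,\ell}\subseteq S_t$ with $N_{w(t)}(t-1)\le m_\ell=O(\ell^2\log T/\Delta_{\min}^2)$.

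\textbf{Step 2: counting deviation rounds.} The witness is pulled at round $t$, so each arm can serve as a witness for at most $m_\ell+1$ rounds of $\mathrm{Dev}_\ell$. This gives \eqref{eq:dev-count}: $|\mathrm{Dev}_\ell|=O(n\ell^2\log T/\Delta_{\min}^2)$.

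\textbf{Step 3: converting the count into regret.} By \eqref{eq:key-bound-ptwise} and Lemma~\ref{lem:lip}, $\bar r_t\le C_d\sum_{i\in S_t}\beta_i(t)$. I would regroup $\sum_{t\in \mathrm{Dev}_\ell}$ by arm into the sets $D_i$. Within each $D_i$ the pull counts are distinct, so $\sum_{t\in D_i}\beta_i(t)=O(\sqrt{|D_i|\log T})$. Using $\sum_i|D_i|=k|\mathrm{Dev}_\ell|$ and Cauchy--Schwarz, the stage-$\ell$ regret is $O\bigl(C_d\sqrt{nk|\mathrm{Dev}_\ell|\log T}\bigr)=O(n\ell\sqrt k\log T/\Delta_{\min})$. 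Summing $\sum_{\ell\le k}\ell=O(k^2)$ yields $O(nk^{2.5}\log T/\Delta_{\min})$, with $C_d$ and $\eta$ treated as constants.

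\textbf{Main obstacle.} The hardest part is the soundness of Step 1, in two respects. First, the comparison requires $i^\star_\ell\in A_t\setminus\hat S_{t,\ell-1}$. This is exactly where the pruning assumption enters, and I would state explicitly that it is a hypothesis and not something proved here. Second, Lemma~\ref{lem:marginal-dev-seq} needs the two-sided Lipschitz bound of Lemma~\ref{lem:lip} for arbitrary sets, not only for $S_t$. Once those two points hold, Steps 2 and 3 are bookkeeping. In Step 3 I would also check that negative $\bar r_t$ do no harm: we only need an upper bound, and the inequality $\bar r_t\le C_d\sum\beta$ holds pointwise.
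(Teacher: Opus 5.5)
Your proposal is correct and follows the paper's proof step for step. It uses the same partition of rounds into $\mathrm{Match}$ and $\mathrm{Dev}_\ell$ by first deviation stage, the same witness-counting bound $|\mathrm{Dev}_\ell|\le n(m_\ell+1)$, and the same local Cauchy--Schwarz argument giving $O(n\ell\sqrt{k}\log T/\Delta_{\min})$ per stage, summed over $\ell\le k$. The one addition, your explicit treatment of the $O(n/k)$ initialization rounds (where neither the greedy chain nor the optimism reduction applies), is bookkeeping the paper leaves implicit.
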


\subsection{Combining both bounds}
\label{app:combining}

Since rewards lie in $[0,1]^d$, $\bar r_t\in[0,1]$ and $\bar R_T\le T$ always (on or off $\mathcal E$). Hence
\[
\mathbb E[\bar R_T]=\mathbb E[\bar R_T\mathbf 1_{\mathcal E}]+\mathbb E[\bar R_T\mathbf 1_{\mathcal E^c}]
\le \min\Bigl\{O\bigl(C_d\sqrt{nkT\log T}\bigr),\,O\bigl(nk^{2.5}\log T/\Delta_{\min}\bigr)\Bigr\}+T\,\mathbb P(\mathcal E^c).
\]
Choosing $\delta=1/T$ gives $T\,\mathbb P(\mathcal E^c)\le1$, so:

\begin{theorem}[Regret of \textit{THV-UCB} (detailed version)]
\label{thm:main}
With $\delta=1/T$,
\[
\mathbb E[\bar R_T]\;\le\; \min\Bigl\{O\bigl(\sqrt{nkT\log T}\bigr),\;O\bigl(nk^{2.5}\log T/\Delta_{\min}\bigr)\Bigr\}+O(1).
\]
In particular $\mathbb E[\bar R_T]/T\to0$: \textit{THV-UCB} achieves sublinear $\alpha$-approximation regret.
\end{theorem}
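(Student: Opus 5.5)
The plan is to derive Theorem~\ref{thm:main} from the two high-probability bounds already established, Theorem~\ref{thm:gap-free} and Theorem~\ref{thm:gap-dependent}, by splitting the expectation according to whether the good event $\mathcal E$ occurs. We write
\[
\mathbb E[\bar R_T]=\mathbb E[\bar R_T\mathbf 1_{\mathcal E}]+\mathbb E[\bar R_T\mathbf 1_{\mathcal E^c}].
\]

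\emph{Term on $\mathcal E$.} Both theorems hold simultaneously on $\mathcal E$, with constants that are deterministic functions of $(n,k,d,\eta,\Delta_{\min})$. So on $\mathcal E$ the regret $\bar R_T$ is at most the minimum of the two deterministic bounds, and taking expectations keeps this bound, since $\mathbf 1_{\mathcal E}\le 1$. One caveat concerns the negative regret on matched rounds. It can only lower $\bar R_T$, so it is harmless here. One should still check that $\bar R_T\mathbf 1_{\mathcal E}$ is bounded above by the minimum, not merely in absolute value. With $\delta=1/T$, the factor $\log(ndT^2/\delta)=\log(ndT^3)=O(\log T)$ for fixed $n,d$. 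The constant $C_d\le d$ is absorbed into the $O(\cdot)$, which yields the stated form $O(\sqrt{nkT\log T})$.

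\emph{Term on $\mathcal E^c$.} Bound $\bar r_t\le \alpha V^\star\le V^\star\le 1$, because every box lies in $[0,1]^d$ with $\mathbf r=\mathbf 0$ and $\mathrm{HV}(S_t)\ge 0$. Hence $\bar R_T\le T$ deterministically. Lemma~\ref{lem:conc} gives $\mathbb P(\mathcal E^c)\le\delta=1/T$, so this term is at most $1=O(1)$. The absorbed union-bound constant only changes this to $O(1)$.

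\emph{Sublinearity.} Divide by $T$. The first branch of the minimum is $O(\sqrt{nk\log T/T})\to 0$ on every instance, including instances where $\Delta_{\min}$ is tiny or the greedy maximizer is not unique, so that the gap-dependent bound does not apply. Hence $\mathbb E[\bar R_T]/T\to0$.

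\emph{Main obstacle.} The only delicate point is the scope of the gap-dependent branch. It requires uniqueness of the greedy maximizers and $G^\star\subseteq A_t$, which the paper assumes but does not prove. I would state that the minimum should be read with the second branch equal to $+\infty$ whenever those hypotheses fail, so the combined bound remains valid unconditionally through the gap-free branch. The second thing to check is that the bonus $\beta_i(t)$ uses $\log(ndt^2/\delta)$ and $\delta$ enters only logarithmically, so setting $\delta=1/T$ does not degrade the rates.
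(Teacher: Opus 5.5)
Your proposal is correct and follows the paper's proof: split $\mathbb E[\bar R_T]$ over $\mathcal E$ and $\mathcal E^c$, bound the first term by the two on-$\mathcal E$ results (Theorems~\ref{thm:gap-free} and~\ref{thm:gap-dependent}), and bound the second by $T\,\mathbb P(\mathcal E^c)\le 1$ using $\bar R_T\le T$. Two of your refinements are improvements over the paper rather than a different route. You use only the one-sided bound $\bar r_t\le\alpha V^\star\le 1$, whereas the paper claims $\bar r_t\in[0,1]$, which is false since $\bar r_t$ can be negative, e.g.\ on matched rounds. You also read the gap-dependent branch as $+\infty$ when its uniqueness and $G^\star\subseteq A_t$ hypotheses fail.
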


\subsection{Discussion: what does each bound use from the algorithm?}
\label{app:discussion}

The two guarantees of Theorem~\ref{thm:main} are not in competition; they cover complementary regimes, exactly as the minimax and gap-dependent bounds do for classical $K$-armed UCB. The gap-free bound (Theorem~\ref{thm:gap-free}) is \emph{algorithm-agnostic}: its proof only uses that $k$ arms are pulled per round and that confidence widths shrink at rate $1/\sqrt{N_i(t-1)}$; it never degrades, even as $\Delta_{\min}\to0$, the regime where the gap-dependent bound diverges. Conversely, the gap-dependent bound (Theorem~\ref{thm:gap-dependent}) is \emph{mechanism-aware}: it tracks, round by round, the exact comparisons $\Delta_t^{\mathrm{UCB}}(\cdot\mid G^\star_{\ell-1})$ that the greedy step of \textit{THV-UCB} performs against the benchmark trajectory $G^\star$ — note that $G^\star$, unlike $S^\star$, never appears in the regret's definition; it is purely an internal proof device for tracking the algorithm's stage-wise progress. Its proof combines two ingredients: a witness-counting argument bounding how many rounds can deviate from $G^\star$ at each stage (see~\eqref{eq:dev-count}), and a Cauchy--Schwarz argument bounding the \emph{total} regret accrued on those rounds. 




\newpage

\section{Experimental details}\label{app:results}

\subsection{Optimal values (Grid search)}

\vspace{-0.2cm}

\begin{table*}[h!]

\centering

\small
\setlength{\tabcolsep}{3.5pt}
\resizebox{\textwidth}{!}{
\begin{tabular}{l cccc cccc cccc cccc}
\toprule
& \multicolumn{4}{c}{$d=2$} & \multicolumn{4}{c}{$d=3$} & \multicolumn{4}{c}{$d=4$} & \multicolumn{4}{c}{$d=5$} \\
\cmidrule(lr){2-5}\cmidrule(lr){6-9}\cmidrule(lr){10-13}\cmidrule(lr){14-17}
Algorithm & \rotatebox{70}{Clust.} & \rotatebox{70}{Conc.} & \rotatebox{70}{Conv.} & \rotatebox{70}{Lin.}
          & \rotatebox{70}{Clust.} & \rotatebox{70}{Conc.} & \rotatebox{70}{Conv.} & \rotatebox{70}{Lin.}
          & \rotatebox{70}{Clust.} & \rotatebox{70}{Conc.} & \rotatebox{70}{Conv.} & \rotatebox{70}{Lin.}
          & \rotatebox{70}{Clust.} & \rotatebox{70}{Conc.} & \rotatebox{70}{Conv.} & \rotatebox{70}{Lin.} \\
\midrule
THV-UCB (ours)   &0.01 &0.01 &0.01 &0.01  &0.01 &0.01 &0.01 &0.01  &0.01 &0.01 &0.01 &0.01  &0.01 &0.01 &0.01 &0.01 \\
ParetoUCB$^+$    & 1.0 & 1.0 & 1.0 & 1.0  &0.01 &0.1  &0.1  &0.3   &0.01 &0.1  &0.3  &0.3   &0.01 &0.01 &0.01 &0.1  \\
ParetoUCB-Div    &0.3  &0.1  & 1.0 &0.01  & 1.0 &0.3  &0.3  &0.3   &0.01 &0.01 &0.01 &0.3   &0.01 &0.01 & 1.0 &0.1  \\
ParetoUCB-Crowd  & 1.0 & 1.0 & 1.0 &0.3   & 1.0 &0.01 & 1.0 & 1.0  & 1.0 & 1.0 & 1.0 & 1.0  & 1.0 & 1.0 & 1.0 & 1.0 \\

ChebyshevUCB     &0.01 &0.01 & 1.0 &0.01  &0.01 &0.01 & 1.0 &0.01  &0.01 &0.01 &0.01 &0.01  &0.01 &0.01 &0.01 &0.01 \\
ChebyshevUCB$^+$ &0.3  & 1.0 & 1.0 &0.3   &0.1  &0.3  & 1.0 &0.01  &0.01 &0.01 &0.3  &0.01  &0.01 &0.01 &0.1  &0.3  \\
HVScalarUCB      &0.01 &0.01 & 1.0 &0.01  &0.01 &0.01 &0.01 &0.01  &0.01 &0.01 &0.01 &0.3   &0.01 &0.01 &0.01 &0.01 \\
HVScalarUCB$^+$  &0.01 &0.01 &0.3  &0.01  &0.01 &0.01 &0.01 &0.01  &0.1  &0.01 &0.01 &0.01  &0.01 &0.01 &0.01 &0.01 \\
ScalarUCB        &0.3  & 1.0 & 1.0 & 1.0  &0.01 &0.1  &0.1  &0.1   &0.01 &0.1  &0.3  &0.1   &0.01 &0.01 &0.01 &0.01 \\
ScalarUCB-RandW  & 1.0 & 1.0 & 1.0 & 1.0  & 1.0 &0.01 & 1.0 & 1.0  &0.01 &0.01 &0.01 & 1.0  &0.01 &0.01 &0.01 &0.01 \\

\bottomrule
\end{tabular}}\caption{Optimal confidence parameter $\eta^\star$ per algorithm and front geometry, selected by grid search over $\{0.01, 0.1, 0.3, 1.0\}$ (maximizing $\mathrm{HV}_{\mathrm{last100}}$). All detailed results are available in \url{https://ngutowski.fr/gridsearch/gridsearch.html}}
\label{tab:eta-optimal}
\end{table*}

\subsection{Additional results for $d=3$ and $k=4$}

\vspace{-0.2cm}

\begin{table*}[h!]
\centering
\small
\setlength{\tabcolsep}{4pt}
\resizebox{\textwidth}{!}{
\begin{tabular}{lccccccccc}
\toprule
& & \multicolumn{2}{c}{\textbf{Clusters}} & \multicolumn{2}{c}{\textbf{Concave}} & \multicolumn{2}{c}{\textbf{Convex}} & \multicolumn{2}{c}{\textbf{Linear}} \\
\cmidrule(lr){3-4}\cmidrule(lr){5-6}\cmidrule(lr){7-8}\cmidrule(lr){9-10}
\textbf{Algorithm} & \textbf{Fid.}
& HV$_{\text{last100}}$ & Final Regret
& HV$_{\text{last100}}$ & Final Regret
& HV$_{\text{last100}}$ & Final Regret
& HV$_{\text{last100}}$ & Final Regret \\
\midrule
THV-UCB (ours)
  & ---
  & \textbf{0.2955}{\scriptsize$\pm$0.001} & \textbf{2.8}{\scriptsize$<$0.001}
  & \textbf{0.2894}{\scriptsize$\pm$0.002} & \textbf{2.2}{\scriptsize$<$0.001}
  & \textbf{0.5895}{\scriptsize$\pm$0.003} & \textbf{2.5}{\scriptsize$<$0.001}
  & \textbf{0.1592}{\scriptsize$\pm$0.001} & 1.9{\scriptsize$\pm$0.1} \\
\addlinespace[2pt]
ParetoUCB~\citep{Drugan2013MOBandits}
  & \faithful
  & 0.1577{\scriptsize$\pm$0.034} & 141.1{\scriptsize$\pm$80.8}
  & 0.1757{\scriptsize$\pm$0.034} & 104.9{\scriptsize$\pm$78.5}
  & 0.4648{\scriptsize$\pm$0.043} & 45.4{\scriptsize$\pm$55.1}
  & 0.0610{\scriptsize$\pm$0.020} & 133.9{\scriptsize$\pm$55.1} \\
ParetoUCB\textsuperscript{+}~\citep{Drugan2013MOBandits}
  & \extension
  & 0.2844{\scriptsize$\pm$0.008} & \textbf{2.8}{\scriptsize$<$0.001}
  & 0.2755{\scriptsize$\pm$0.005} & 3.2{\scriptsize$\pm$1.3}
  & 0.5476{\scriptsize$\pm$0.011} & 24.3{\scriptsize$\pm$2.0}
  & 0.0934{\scriptsize$\pm$0.018} & 59.4{\scriptsize$\pm$40.1} \\
ParetoUCB-Div~\citep{Drugan2013MOBandits}
  & \extension
  & 0.2083{\scriptsize$\pm$0.025} & 71.9{\scriptsize$\pm$52.2}
  & 0.1988{\scriptsize$\pm$0.018} & 33.2{\scriptsize$\pm$36.5}
  & 0.5161{\scriptsize$\pm$0.012} & 6.1{\scriptsize$\pm$1.3}
  & 0.0786{\scriptsize$\pm$0.014} & 93.8{\scriptsize$\pm$46.6} \\
ParetoUCB-Crowd~\citep{Drugan2013MOBandits,Deb2002NSGA2}
  & \extension
  & 0.1694{\scriptsize$\pm$0.021} & 62.1{\scriptsize$\pm$39.0}
  & 0.1290{\scriptsize$\pm$0.008} & 135.7{\scriptsize$\pm$41.8}
  & 0.4755{\scriptsize$\pm$0.038} & 4.6{\scriptsize$\pm$3.4}
  & 0.0251{\scriptsize$\pm$0.006} & 184.3{\scriptsize$\pm$28.6} \\
\addlinespace[2pt]
ParetoTS~\citep{yahyaa2015thompson}
  & \faithful
  & 0.1708{\scriptsize$\pm$0.031} & 93.5{\scriptsize$\pm$63.4}
  & 0.1883{\scriptsize$\pm$0.030} & 60.4{\scriptsize$\pm$55.2}
  & 0.4836{\scriptsize$\pm$0.038} & 15.4{\scriptsize$\pm$20.9}
  & 0.0743{\scriptsize$\pm$0.020} & 91.7{\scriptsize$\pm$50.4} \\
ParetoTS\textsuperscript{+}~\citep{yahyaa2015thompson}
  & \extension
  & 0.2690{\scriptsize$\pm$0.011} & 12.4{\scriptsize$\pm$10.4}
  & 0.2678{\scriptsize$\pm$0.008} & 7.4{\scriptsize$\pm$6.8}
  & 0.5465{\scriptsize$\pm$0.009} & 3.0{\scriptsize$\pm$0.8}
  & 0.0868{\scriptsize$\pm$0.019} & 73.4{\scriptsize$\pm$45.1} \\
\addlinespace[2pt]
ChebyshevUCB~\citep{Mandow2023ChebyshevMAB}
  & \faithful
  & 0.1296{\scriptsize$\pm$0.055} & 246.0{\scriptsize$\pm$113.3}
  & 0.1968{\scriptsize$\pm$0.026} & 50.2{\scriptsize$\pm$26.6}
  & 0.4322{\scriptsize$\pm$0.035} & 23.0{\scriptsize$\pm$30.6}
  & 0.0980{\scriptsize$\pm$0.020} & 62.2{\scriptsize$\pm$32.8} \\
ChebyshevUCB\textsuperscript{+}~\citep{Mandow2023ChebyshevMAB}
  & \extension
  & 0.2639{\scriptsize$\pm$0.008} & 3.7{\scriptsize$\pm$0.9}
  & 0.2727{\scriptsize$\pm$0.004} & 4.4{\scriptsize$\pm$1.3}
  & 0.5437{\scriptsize$\pm$0.008} & 9.0{\scriptsize$\pm$4.0}
  & 0.1401{\scriptsize$\pm$0.001} & \textbf{1.9}{\scriptsize$\pm$0.1} \\
\addlinespace[2pt]
HVScalarUCB~\citep{zhang2020random}
  & \faithful
  & 0.2046{\scriptsize$\pm$0.036} & 76.7{\scriptsize$\pm$62.1}
  & 0.2294{\scriptsize$\pm$0.027} & 33.2{\scriptsize$\pm$37.7}
  & 0.4526{\scriptsize$\pm$0.044} & 41.4{\scriptsize$\pm$55.7}
  & 0.1252{\scriptsize$\pm$0.017} & 27.9{\scriptsize$\pm$29.0} \\
HVScalarUCB\textsuperscript{+}~\citep{Zhang2023HypervolumeRegret}
  & \faithful
  & 0.2500{\scriptsize$\pm$0.018} & 9.6{\scriptsize$\pm$10.8}
  & 0.2448{\scriptsize$\pm$0.018} & 9.1{\scriptsize$\pm$10.7}
  & 0.4956{\scriptsize$\pm$0.032} & 7.7{\scriptsize$\pm$8.8}
  & 0.1357{\scriptsize$\pm$0.010} & 6.8{\scriptsize$\pm$7.3} \\
\addlinespace[2pt]
ScalarUCB~\citep{auer2002finite}
  & \extension
  & 0.2807{\scriptsize$\pm$0.010} & \textbf{2.8}{\scriptsize$<$0.001}
  & 0.2757{\scriptsize$\pm$0.005} & 3.9{\scriptsize$\pm$2.1}
  & 0.5472{\scriptsize$\pm$0.007} & \textbf{2.6}{\scriptsize$\pm$0.1}
  & 0.0891{\scriptsize$\pm$0.019} & 68.5{\scriptsize$\pm$43.7} \\
ScalarUCB-RandW~\citep{paria2020flexible}
  & \extension
  & 0.1484{\scriptsize$\pm$0.028} & 136.8{\scriptsize$\pm$59.9}
  & 0.1644{\scriptsize$\pm$0.030} & 91.6{\scriptsize$\pm$59.2}
  & 0.4483{\scriptsize$\pm$0.038} & 32.5{\scriptsize$\pm$32.9}
  & 0.0544{\scriptsize$\pm$0.017} & 132.7{\scriptsize$\pm$46.4} \\
\addlinespace[2pt]
RandomK
  & ---
  & 0.1035{\scriptsize$\pm$0.038} & 267.2{\scriptsize$\pm$104.2}
  & 0.1110{\scriptsize$\pm$0.042} & 231.3{\scriptsize$\pm$111.7}
  & 0.3151{\scriptsize$\pm$0.091} & 275.9{\scriptsize$\pm$208.8}
  & 0.0403{\scriptsize$\pm$0.018} & 183.8{\scriptsize$\pm$53.8} \\
\bottomrule
\end{tabular}
}
\caption{Summary over four synthetic fronts ($d=3$, $n=60$, $k=4$, $\sigma=0.035$, $T=3000$, 10 seeds).
\textbf{Fid.} indicates fidelity to the cited work:
\faithful~= faithful top-$k$ adaptation (core mechanism unchanged);
\extension~= our extension (modified core, tie-break, or diversity mechanism not in the original).
\textsuperscript{+} denotes our variant when both versions share a citation.
Final Regret values are cumulative $\alpha$-regret $\pm$ 95\% CI.}
\label{tab:synthetic-summary-d3}
\end{table*}

\newpage
\subsection{Additional results for $d=4$ and $k=5$}

\begin{table*}[h!]
\centering
\small
\setlength{\tabcolsep}{4pt}
\resizebox{\textwidth}{!}{
\begin{tabular}{lccccccccc}
\toprule
& & \multicolumn{2}{c}{\textbf{Clusters}} & \multicolumn{2}{c}{\textbf{Concave}} & \multicolumn{2}{c}{\textbf{Convex}} & \multicolumn{2}{c}{\textbf{Linear}} \\
\cmidrule(lr){3-4}\cmidrule(lr){5-6}\cmidrule(lr){7-8}\cmidrule(lr){9-10}
\textbf{Algorithm} & \textbf{Fid.}
& HV$_{\text{last100}}$ & Final Regret
& HV$_{\text{last100}}$ & Final Regret
& HV$_{\text{last100}}$ & Final Regret
& HV$_{\text{last100}}$ & Final Regret \\
\midrule
THV-UCB (ours)
  & ---
  & \textbf{0.0854}{\scriptsize$\pm$0.000} & \textbf{1.5}{\scriptsize$\pm$0.0}
  & \textbf{0.1051}{\scriptsize$\pm$0.000} & \textbf{1.6}{\scriptsize$\pm$0.0}
  & \textbf{0.3709}{\scriptsize$\pm$0.002} & \textbf{3.6}{\scriptsize$\pm$0.0}
  & \textbf{0.0677}{\scriptsize$\pm$0.000} & \textbf{1.4}{\scriptsize$\pm$0.1} \\
\addlinespace[2pt]
ParetoUCB~\citep{Drugan2013MOBandits}
  & \faithful
  & 0.0255{\scriptsize$\pm$0.009} & 153.4{\scriptsize$\pm$44.6}
  & 0.0390{\scriptsize$\pm$0.012} & 150.9{\scriptsize$\pm$57.5}
  & 0.2369{\scriptsize$\pm$0.035} & 150.1{\scriptsize$\pm$123.6}
  & 0.0129{\scriptsize$\pm$0.007} & 150.5{\scriptsize$\pm$32.3} \\
ParetoUCB\textsuperscript{+}~\citep{Drugan2013MOBandits}
  & \extension
  & 0.0791{\scriptsize$\pm$0.001} & 1.6{\scriptsize$\pm$0.1}
  & 0.0947{\scriptsize$\pm$0.002} & 4.9{\scriptsize$\pm$3.1}
  & 0.3384{\scriptsize$\pm$0.006} & 9.4{\scriptsize$\pm$1.4}
  & 0.0403{\scriptsize$\pm$0.006} & 44.2{\scriptsize$\pm$25.9} \\
ParetoUCB-Div~\citep{Drugan2013MOBandits}
  & \extension
  & 0.0407{\scriptsize$\pm$0.003} & 80.2{\scriptsize$\pm$20.8}
  & 0.0492{\scriptsize$\pm$0.003} & 95.4{\scriptsize$\pm$19.7}
  & 0.2564{\scriptsize$\pm$0.009} & 19.9{\scriptsize$\pm$20.6}
  & 0.0199{\scriptsize$\pm$0.007} & 123.2{\scriptsize$\pm$35.2} \\
ParetoUCB-Crowd~\citep{Drugan2013MOBandits,Deb2002NSGA2}
  & \extension
  & 0.0109{\scriptsize$\pm$0.003} & 168.5{\scriptsize$\pm$22.6}
  & 0.0157{\scriptsize$\pm$0.006} & 215.0{\scriptsize$\pm$28.5}
  & 0.2388{\scriptsize$\pm$0.019} & 55.1{\scriptsize$\pm$41.9}
  & 0.0026{\scriptsize$\pm$0.002} & 170.3{\scriptsize$\pm$11.7} \\
\addlinespace[2pt]
ParetoTS~\citep{yahyaa2015thompson}
  & \faithful
  & 0.0302{\scriptsize$\pm$0.010} & 121.3{\scriptsize$\pm$44.7}
  & 0.0468{\scriptsize$\pm$0.012} & 112.2{\scriptsize$\pm$53.0}
  & 0.2456{\scriptsize$\pm$0.033} & 98.2{\scriptsize$\pm$93.1}
  & 0.0218{\scriptsize$\pm$0.008} & 108.0{\scriptsize$\pm$35.4} \\
ParetoTS\textsuperscript{+}~\citep{yahyaa2015thompson}
  & \extension
  & 0.0749{\scriptsize$\pm$0.003} & 14.1{\scriptsize$\pm$9.6}
  & 0.0916{\scriptsize$\pm$0.003} & 10.6{\scriptsize$\pm$8.1}
  & 0.3365{\scriptsize$\pm$0.006} & 4.5{\scriptsize$\pm$1.5}
  & 0.0390{\scriptsize$\pm$0.007} & 48.7{\scriptsize$\pm$27.6} \\
\addlinespace[2pt]
ChebyshevUCB~\citep{Mandow2023ChebyshevMAB}
  & \faithful
  & 0.0211{\scriptsize$\pm$0.015} & 183.7{\scriptsize$\pm$57.1}
  & 0.0556{\scriptsize$\pm$0.008} & 73.2{\scriptsize$\pm$24.0}
  & 0.2349{\scriptsize$\pm$0.020} & 76.4{\scriptsize$\pm$27.9}
  & 0.0279{\scriptsize$\pm$0.005} & 81.5{\scriptsize$\pm$20.1} \\
ChebyshevUCB\textsuperscript{+}~\citep{Mandow2023ChebyshevMAB}
  & \extension
  & 0.0800{\scriptsize$\pm$0.000} & \textbf{1.5}{\scriptsize$<$0.001}
  & 0.0989{\scriptsize$\pm$0.000} & \textbf{1.6}{\scriptsize$<$0.001}
  & 0.3503{\scriptsize$\pm$0.004} & 5.5{\scriptsize$\pm$1.4}
  & 0.0510{\scriptsize$\pm$0.001} & 1.9{\scriptsize$\pm$0.5} \\
\addlinespace[2pt]
HVScalarUCB~\citep{zhang2020random}
  & \faithful
  & 0.0384{\scriptsize$\pm$0.011} & 115.8{\scriptsize$\pm$50.0}
  & 0.0539{\scriptsize$\pm$0.012} & 103.0{\scriptsize$\pm$50.7}
  & 0.2510{\scriptsize$\pm$0.029} & 77.9{\scriptsize$\pm$68.3}
  & 0.0290{\scriptsize$\pm$0.008} & 91.1{\scriptsize$\pm$36.9} \\
HVScalarUCB\textsuperscript{+}~\citep{Zhang2023HypervolumeRegret}
  & \faithful
  & 0.0639{\scriptsize$\pm$0.006} & 24.3{\scriptsize$\pm$19.9}
  & 0.0852{\scriptsize$\pm$0.006} & 5.4{\scriptsize$\pm$5.6}
  & 0.3041{\scriptsize$\pm$0.020} & 13.2{\scriptsize$\pm$15.7}
  & 0.0532{\scriptsize$\pm$0.005} & 9.7{\scriptsize$\pm$9.1} \\
\addlinespace[2pt]
ScalarUCB~\citep{auer2002finite}
  & \extension
  & 0.0791{\scriptsize$\pm$0.000} & 1.7{\scriptsize$\pm$0.2}
  & 0.0946{\scriptsize$\pm$0.002} & 5.8{\scriptsize$\pm$3.5}
  & 0.3379{\scriptsize$\pm$0.006} & 7.7{\scriptsize$\pm$4.1}
  & 0.0386{\scriptsize$\pm$0.007} & 41.8{\scriptsize$\pm$25.4} \\
ScalarUCB-RandW~\citep{paria2020flexible}
  & \extension
  & 0.0353{\scriptsize$\pm$0.011} & 100.7{\scriptsize$\pm$47.3}
  & 0.0444{\scriptsize$\pm$0.014} & 118.9{\scriptsize$\pm$58.9}
  & 0.2597{\scriptsize$\pm$0.030} & 67.9{\scriptsize$\pm$73.1}
  & 0.0206{\scriptsize$\pm$0.007} & 116.6{\scriptsize$\pm$31.9} \\
\addlinespace[2pt]
RandomK
  & ---
  & 0.0168{\scriptsize$\pm$0.008} & 186.6{\scriptsize$\pm$41.9}
  & 0.0255{\scriptsize$\pm$0.012} & 203.0{\scriptsize$\pm$58.7}
  & 0.1529{\scriptsize$\pm$0.048} & 427.7{\scriptsize$\pm$222.2}
  & 0.0098{\scriptsize$\pm$0.006} & 164.4{\scriptsize$\pm$29.6} \\
\bottomrule
\end{tabular}
}
\caption{Summary over four synthetic fronts ($d=4$, $n=100$, $k=5$, $\sigma=0.025$, $T=5000$, 10 seeds).
\textbf{Fid.} indicates fidelity to the cited work:
\faithful~= faithful top-$k$ adaptation (core mechanism unchanged);
\extension~= our extension (modified core, tie-break, or diversity mechanism not in the original).
\textsuperscript{+} denotes our variant when both versions share a citation.
Final Regret values are cumulative $\alpha$-regret $\pm$ 95\% CI.}
\label{tab:synthetic-summary-d4}
\end{table*}

\subsection{Additional results for $d=5$ and $k=6$}

\begin{table*}[h!]
\centering
\small
\setlength{\tabcolsep}{4pt}
\resizebox{\textwidth}{!}{
\begin{tabular}{lccccccccc}
\toprule
& & \multicolumn{2}{c}{\textbf{Clusters}} & \multicolumn{2}{c}{\textbf{Concave}} & \multicolumn{2}{c}{\textbf{Convex}} & \multicolumn{2}{c}{\textbf{Linear}} \\
\cmidrule(lr){3-4}\cmidrule(lr){5-6}\cmidrule(lr){7-8}\cmidrule(lr){9-10}
\textbf{Algorithm} & \textbf{Fid.}
& HV$_{\text{last100}}$ & Final Regret
& HV$_{\text{last100}}$ & Final Regret
& HV$_{\text{last100}}$ & Final Regret
& HV$_{\text{last100}}$ & Final Regret \\
\midrule
THV-UCB (ours)
  & ---
  & \textbf{0.0266}{\scriptsize$<$0.001} & \textbf{0.7}{\scriptsize$\pm$0.0}
  & \textbf{0.0202}{\scriptsize$<$0.001} & \textbf{0.6}{\scriptsize$\pm$0.0}
  & \textbf{0.1841}{\scriptsize$\pm$0.001} & \textbf{3.7}{\scriptsize$<$0.001}
  & \textbf{0.0095}{\scriptsize$<$0.001} & \textbf{0.3}{\scriptsize$\pm$0.0} \\
\addlinespace[2pt]
ParetoUCB~\citep{Drugan2013MOBandits}
  & \faithful
  & 0.0042{\scriptsize$\pm$0.002} & 64.7{\scriptsize$\pm$10.6}
  & 0.0032{\scriptsize$\pm$0.002} & 48.6{\scriptsize$\pm$8.0}
  & 0.0774{\scriptsize$\pm$0.018} & 229.0{\scriptsize$\pm$86.2}
  & 0.0010{\scriptsize$\pm$0.001} & 26.1{\scriptsize$\pm$3.1} \\
ParetoUCB\textsuperscript{+}~\citep{Drugan2013MOBandits}
  & \extension
  & 0.0254{\scriptsize$<$0.001} & 0.8{\scriptsize$\pm$0.1}
  & 0.0197{\scriptsize$<$0.001} & 0.6{\scriptsize$\pm$0.1}
  & 0.1789{\scriptsize$<$0.001} & 3.7{\scriptsize$<$0.001}
  & 0.0051{\scriptsize$\pm$0.001} & 8.6{\scriptsize$\pm$4.6} \\
ParetoUCB-Div~\citep{Drugan2013MOBandits}
  & \extension
  & 0.0141{\scriptsize$\pm$0.001} & 18.8{\scriptsize$\pm$5.8}
  & 0.0094{\scriptsize$\pm$0.001} & 22.3{\scriptsize$\pm$4.7}
  & 0.1144{\scriptsize$\pm$0.012} & 65.3{\scriptsize$\pm$29.6}
  & 0.0020{\scriptsize$\pm$0.001} & 21.4{\scriptsize$\pm$4.0} \\
ParetoUCB-Crowd~\citep{Drugan2013MOBandits,Deb2002NSGA2}
  & \extension
  & 0.0025{\scriptsize$\pm$0.001} & 64.4{\scriptsize$\pm$5.3}
  & 0.0015{\scriptsize$\pm$0.001} & 51.4{\scriptsize$\pm$4.9}
  & 0.0779{\scriptsize$\pm$0.010} & 176.9{\scriptsize$\pm$48.1}
  & 0.0015{\scriptsize$\pm$0.001} & 24.7{\scriptsize$\pm$2.3} \\
\addlinespace[2pt]
ParetoTS~\citep{yahyaa2015thompson}
  & \faithful
  & 0.0057{\scriptsize$\pm$0.003} & 56.8{\scriptsize$\pm$12.3}
  & 0.0046{\scriptsize$\pm$0.002} & 41.1{\scriptsize$\pm$9.0}
  & 0.0853{\scriptsize$\pm$0.017} & 171.8{\scriptsize$\pm$76.6}
  & 0.0016{\scriptsize$\pm$0.001} & 23.3{\scriptsize$\pm$3.9} \\
ParetoTS\textsuperscript{+}~\citep{yahyaa2015thompson}
  & \extension
  & 0.0234{\scriptsize$\pm$0.001} & 8.1{\scriptsize$\pm$4.0}
  & 0.0173{\scriptsize$\pm$0.001} & 7.3{\scriptsize$\pm$3.7}
  & 0.1716{\scriptsize$\pm$0.004} & 7.5{\scriptsize$\pm$3.9}
  & 0.0050{\scriptsize$\pm$0.001} & 11.3{\scriptsize$\pm$5.0} \\
\addlinespace[2pt]
ChebyshevUCB~\citep{Mandow2023ChebyshevMAB}
  & \faithful
  & 0.0033{\scriptsize$\pm$0.003} & 69.9{\scriptsize$\pm$10.7}
  & 0.0092{\scriptsize$\pm$0.001} & 19.4{\scriptsize$\pm$3.2}
  & 0.0983{\scriptsize$\pm$0.008} & 111.9{\scriptsize$\pm$18.1}
  & 0.0047{\scriptsize$\pm$0.001} & 7.9{\scriptsize$\pm$2.9} \\
ChebyshevUCB\textsuperscript{+}~\citep{Mandow2023ChebyshevMAB}
  & \extension
  & 0.0208{\scriptsize$<$0.001} & \textbf{0.7}{\scriptsize$<$0.001}
  & 0.0150{\scriptsize$\pm$0.002} & 5.9{\scriptsize$\pm$3.2}
  & 0.1737{\scriptsize$\pm$0.003} & 6.3{\scriptsize$\pm$1.9}
  & 0.0016{\scriptsize$<$0.001} & 24.0{\scriptsize$\pm$2.3} \\
\addlinespace[2pt]
HVScalarUCB~\citep{zhang2020random}
  & \faithful
  & 0.0194{\scriptsize$\pm$0.003} & 9.5{\scriptsize$\pm$9.4}
  & 0.0123{\scriptsize$\pm$0.002} & 13.2{\scriptsize$\pm$9.1}
  & 0.1334{\scriptsize$\pm$0.017} & 46.1{\scriptsize$\pm$47.0}
  & 0.0029{\scriptsize$\pm$0.001} & 17.1{\scriptsize$\pm$5.5} \\
HVScalarUCB\textsuperscript{+}~\citep{Zhang2023HypervolumeRegret}
  & \faithful
  & 0.0218{\scriptsize$\pm$0.002} & 2.6{\scriptsize$\pm$2.5}
  & 0.0151{\scriptsize$\pm$0.002} & 4.3{\scriptsize$\pm$3.9}
  & 0.1434{\scriptsize$\pm$0.011} & 12.4{\scriptsize$\pm$12.5}
  & 0.0053{\scriptsize$\pm$0.001} & 8.1{\scriptsize$\pm$4.8} \\
\addlinespace[2pt]
ScalarUCB~\citep{auer2002finite}
  & \extension
  & 0.0254{\scriptsize$<$0.001} & 0.8{\scriptsize$\pm$0.1}
  & 0.0197{\scriptsize$<$0.001} & \textbf{0.6}{\scriptsize$\pm$0.1}
  & 0.1790{\scriptsize$<$0.001} & \textbf{3.7}{\scriptsize$\pm$0.0}
  & 0.0049{\scriptsize$\pm$0.001} & 9.0{\scriptsize$\pm$4.6} \\
ScalarUCB-RandW~\citep{paria2020flexible}
  & \extension
  & 0.0103{\scriptsize$\pm$0.004} & 38.9{\scriptsize$\pm$16.6}
  & 0.0077{\scriptsize$\pm$0.003} & 31.0{\scriptsize$\pm$12.3}
  & 0.1222{\scriptsize$\pm$0.020} & 73.4{\scriptsize$\pm$57.5}
  & 0.0027{\scriptsize$\pm$0.001} & 18.8{\scriptsize$\pm$5.7} \\
\addlinespace[2pt]
RandomK
  & ---
  & 0.0028{\scriptsize$\pm$0.002} & 69.9{\scriptsize$\pm$9.3}
  & 0.0026{\scriptsize$\pm$0.002} & 51.5{\scriptsize$\pm$7.4}
  & 0.0473{\scriptsize$\pm$0.019} & 351.6{\scriptsize$\pm$93.0}
  & 0.0008{\scriptsize$\pm$0.001} & 27.0{\scriptsize$\pm$2.8} \\
\bottomrule
\end{tabular}
}
\caption{Summary over four synthetic fronts ($d=5$, $n=150$, $k=6$,  $\sigma=0.02$, $T=5000$, 10 seeds).
\textbf{Fid.} indicates fidelity to the cited work:
\faithful~= faithful top-$k$ adaptation (core mechanism unchanged);
\extension~= our extension (modified core, tie-break, or diversity mechanism not in the original).
\textsuperscript{+} denotes our variant when both versions share a citation.
Final Regret values are cumulative $\alpha$-regret $\pm$ 95\% CI.}
\label{tab:synthetic-summary-d5}
\end{table*}

\begin{figure*}[h!]
\centering
    \begin{subfigure}[b]{0.48\textwidth}
        \includegraphics[width=\textwidth]{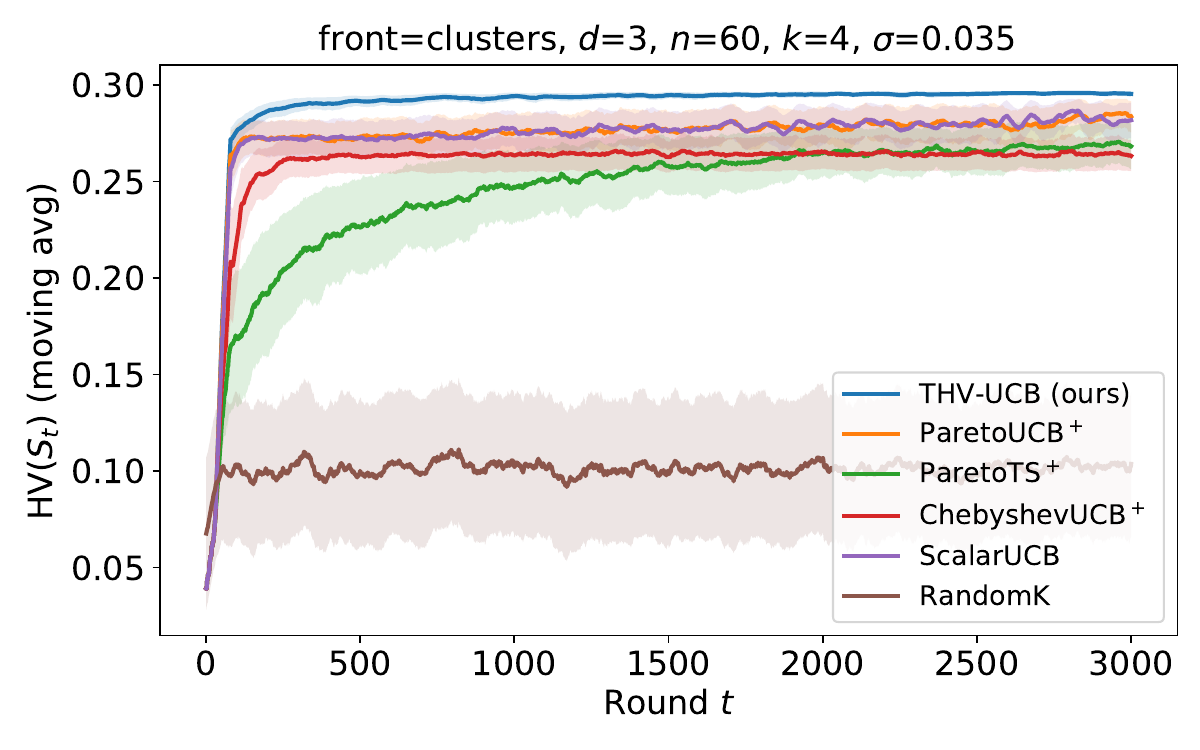}
        \caption{Clusters}
    \end{subfigure}
    \hfill
    \begin{subfigure}[b]{0.48\textwidth}
        \includegraphics[width=\textwidth]{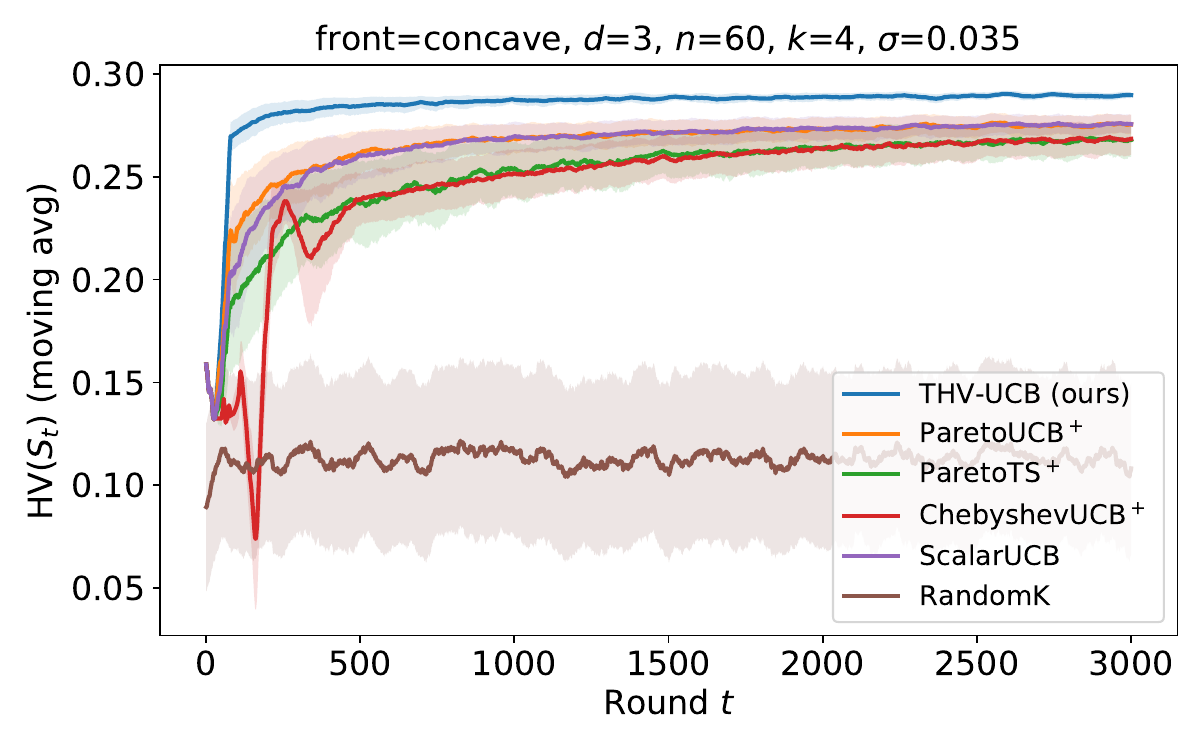}
        \caption{Concave}
    \end{subfigure}

    \vspace{0.5em}

    \begin{subfigure}[b]{0.48\textwidth}
        \includegraphics[width=\textwidth]{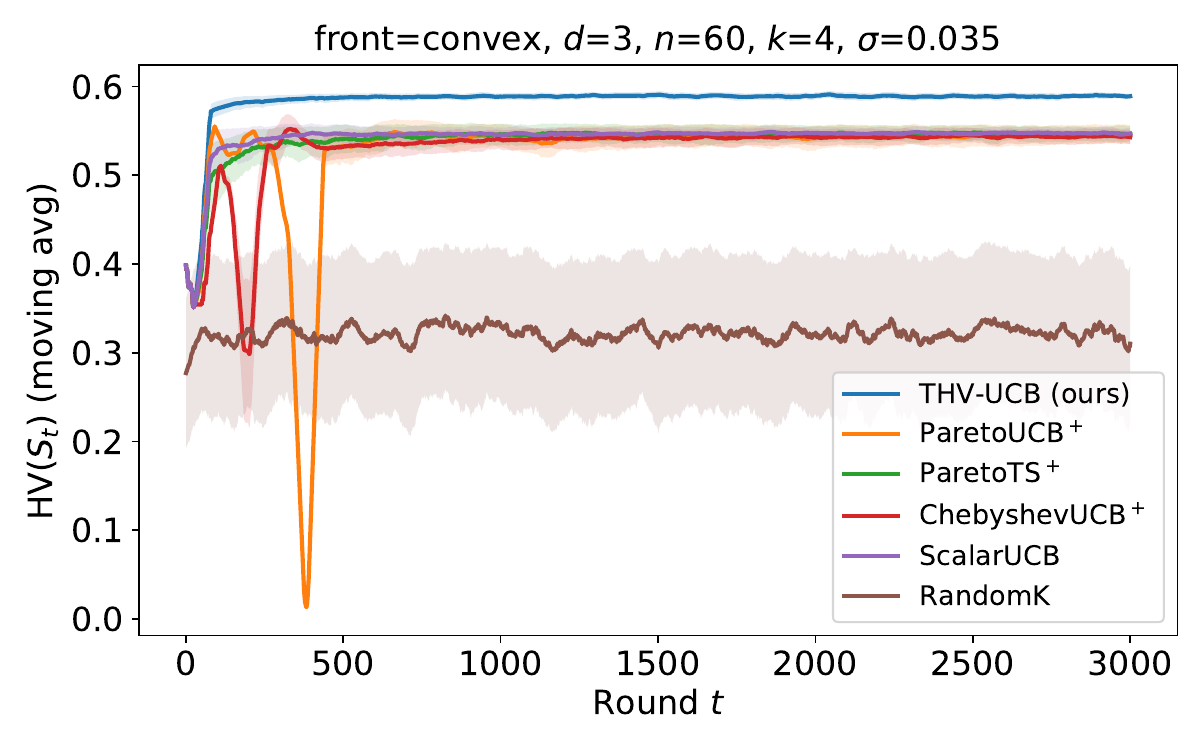}
        \caption{Convex}
    \end{subfigure}
    \hfill
    \begin{subfigure}[b]{0.48\textwidth}
        \includegraphics[width=\textwidth]{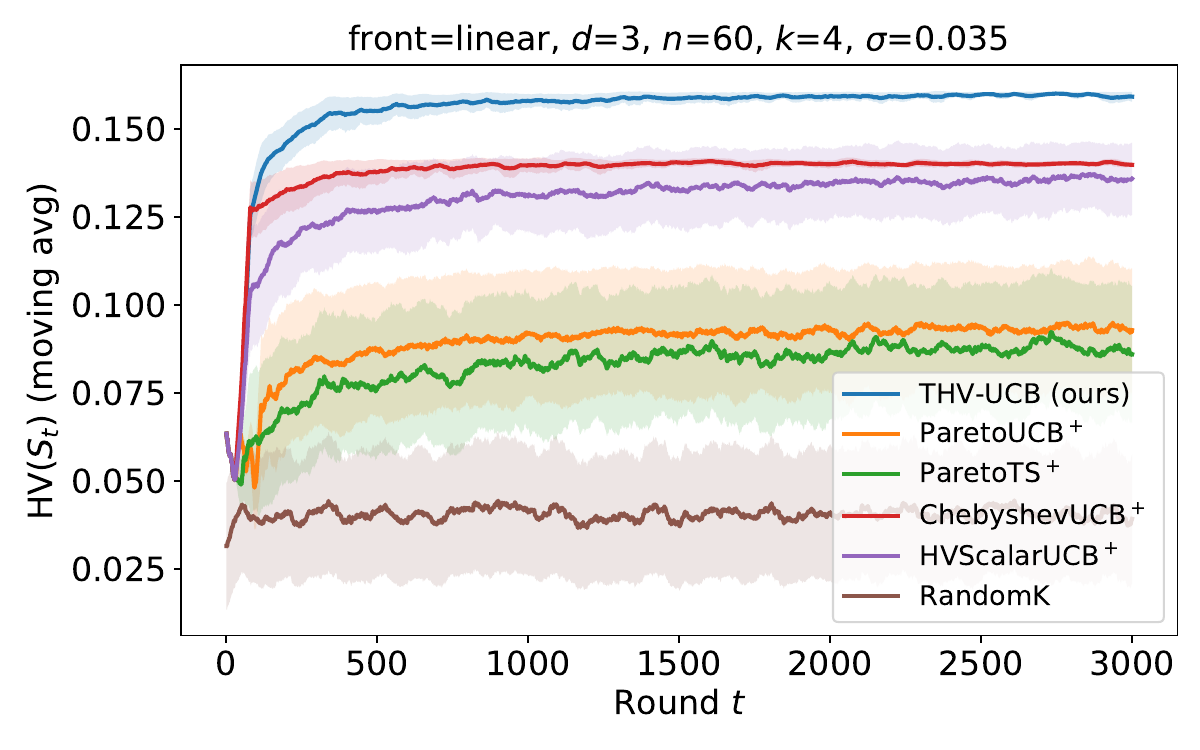}
        \caption{Linear}
    \end{subfigure}
\caption{Hypervolume $\mathrm{HV}(S_t)$ trajectories (moving average, $w=50$) with $95\%$ CIs, over four synthetic Pareto front geometries ($d=3$, $n=60$, $k=4$, $\sigma=0.035$, $T=3000$, 10 seeds). Each panel shows the best representative per algorithm family against THV-UCB (ours).}
    \label{fig:synthetic-fronts-d3}
\end{figure*}





\begin{figure*}[h!]
    \centering
    \begin{subfigure}[b]{0.48\textwidth}
        \includegraphics[width=\textwidth]{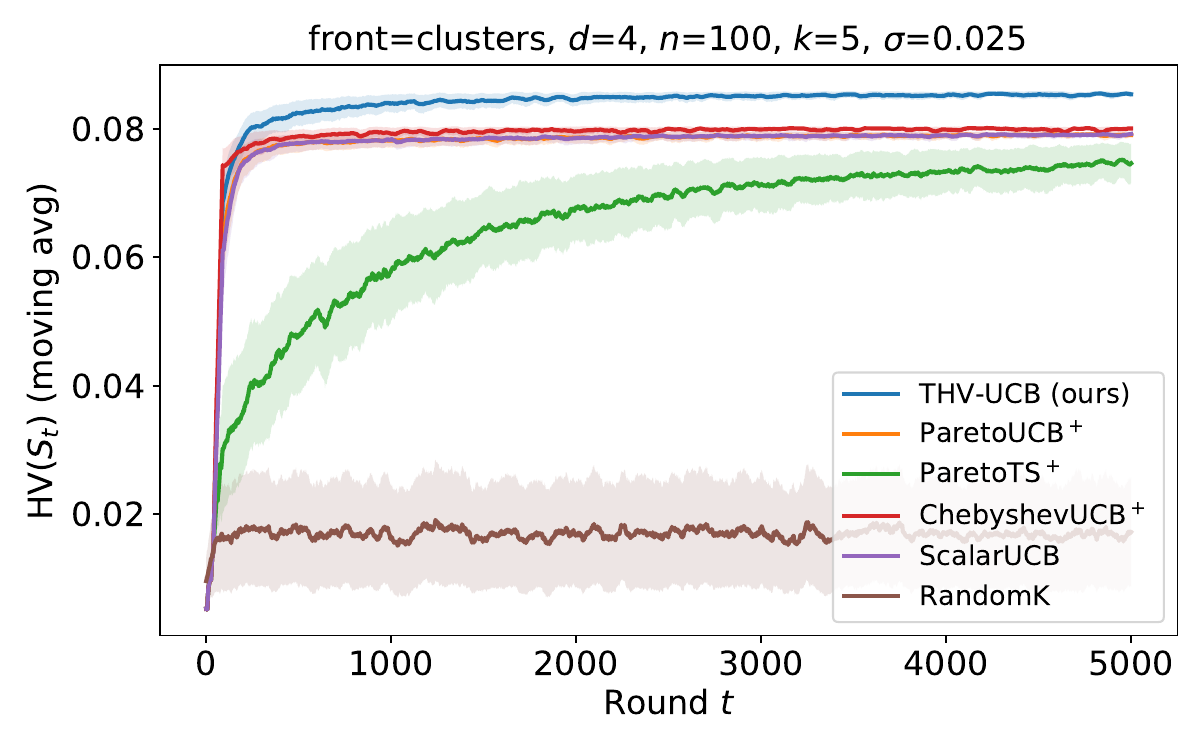}
        \caption{Clusters}
    \end{subfigure}
    \hfill
    \begin{subfigure}[b]{0.48\textwidth}
        \includegraphics[width=\textwidth]{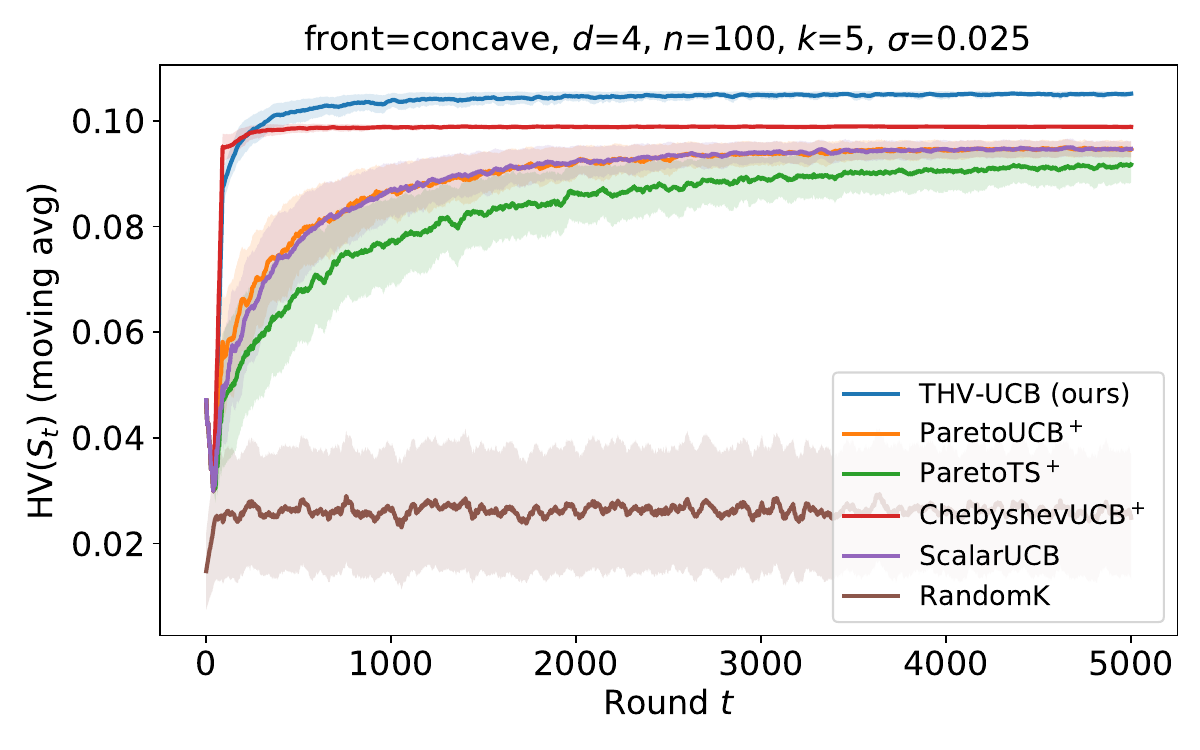}
        \caption{Concave}
    \end{subfigure}

    \vspace{0.5em}

    \begin{subfigure}[b]{0.48\textwidth}
        \includegraphics[width=\textwidth]{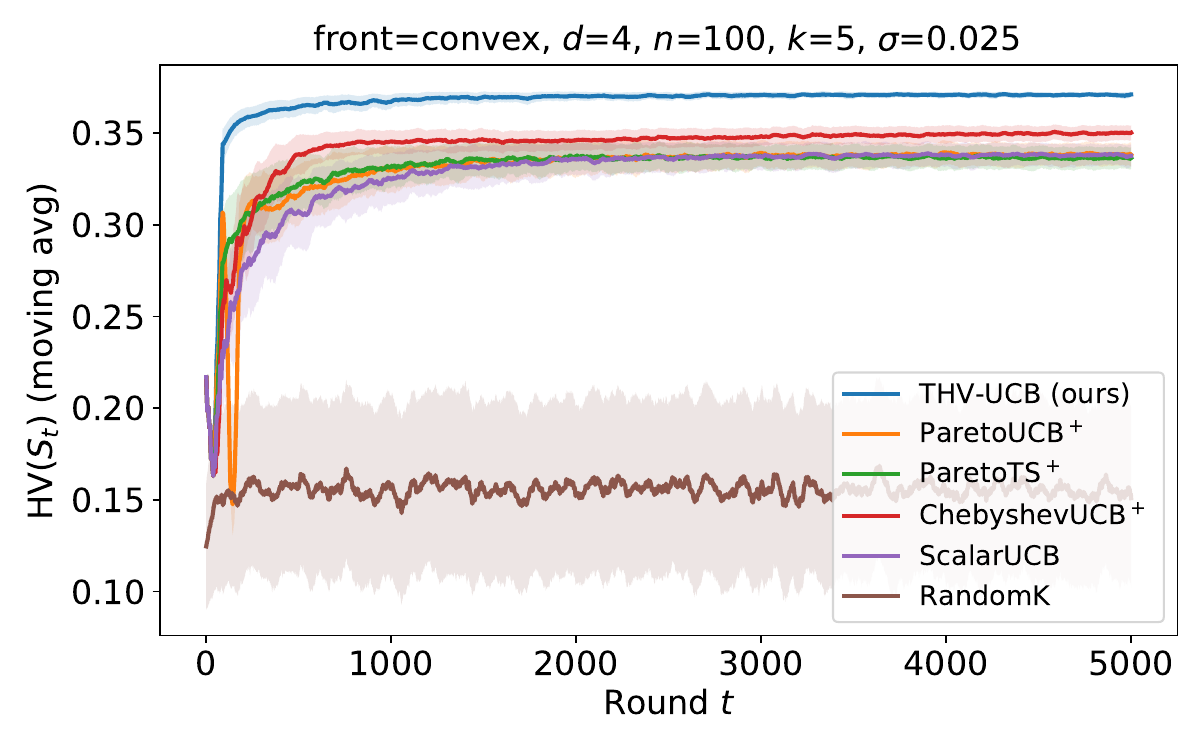}
        \caption{Convex}
    \end{subfigure}
    \hfill
    \begin{subfigure}[b]{0.48\textwidth}
        \includegraphics[width=\textwidth]{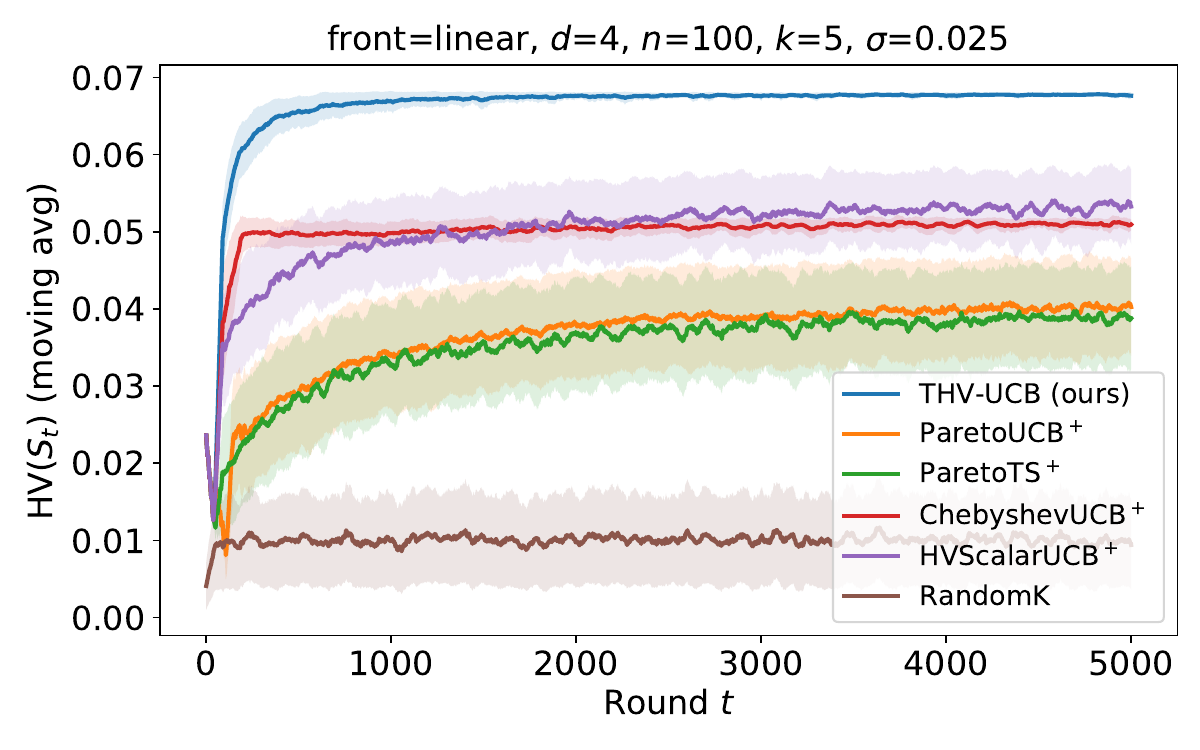}
        \caption{Linear}
    \end{subfigure}
    \caption{Hypervolume $\mathrm{HV}(S_t)$ trajectories (moving average, $w=50$) with $95\%$ CIs, over four synthetic Pareto front geometries ($d=4$, $n=100$, $k=5$, $\sigma=0.025$, $T=5000$, 10 seeds). Each panel shows the best representative per algorithm family against THV-UCB (ours).}
    \label{fig:synthetic-fronts-d4}
\end{figure*}



\begin{figure*}[h!]
    \centering
    \begin{subfigure}[b]{0.48\textwidth}
        \includegraphics[width=\textwidth]{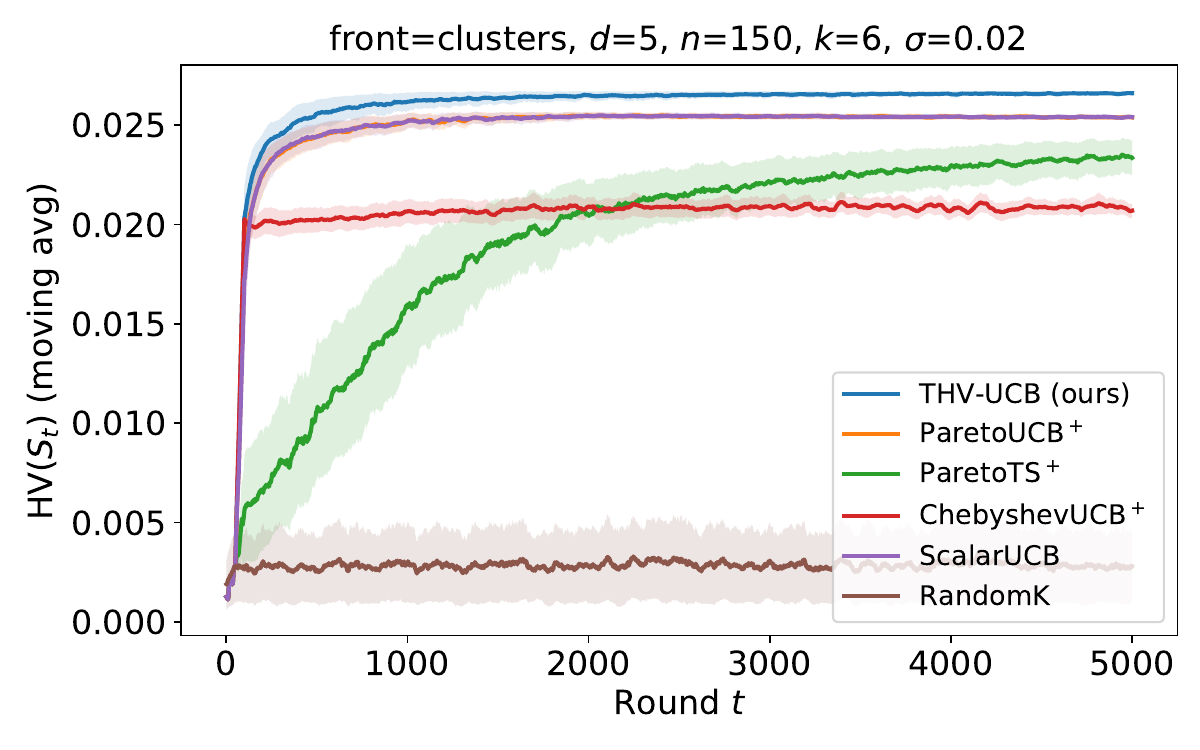}
        \caption{Clusters}
    \end{subfigure}
    \hfill
    \begin{subfigure}[b]{0.48\textwidth}
        \includegraphics[width=\textwidth]{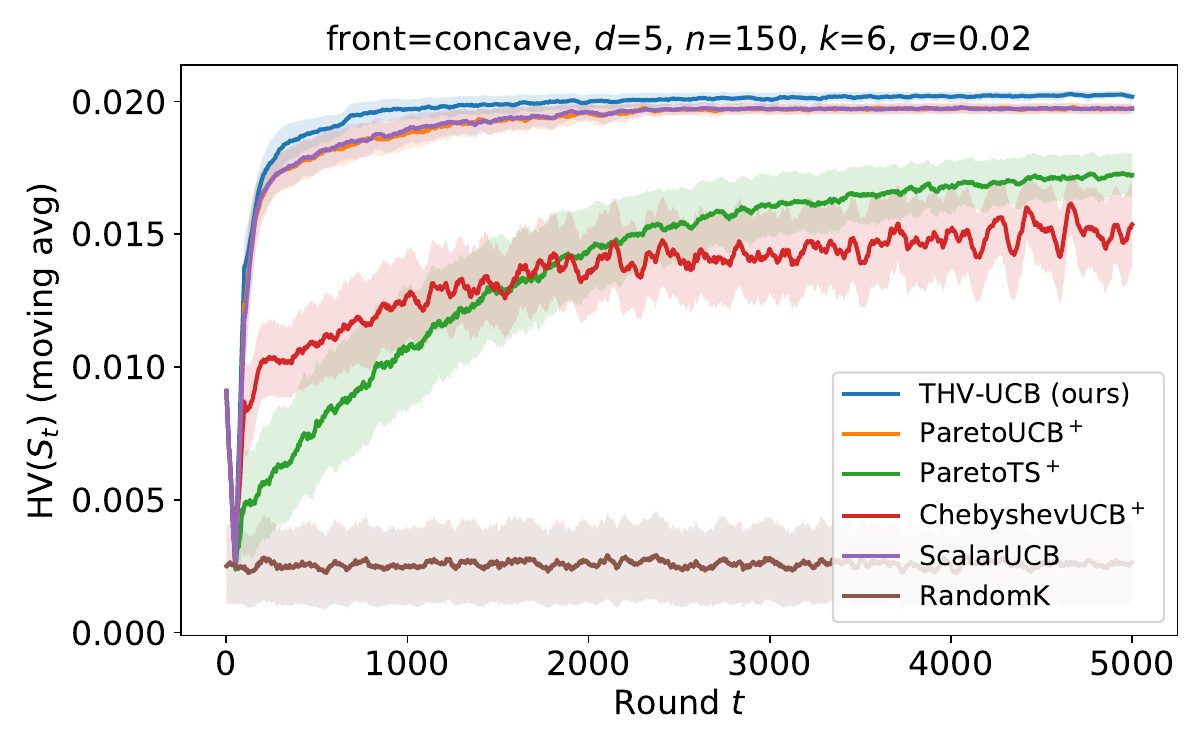}
        \caption{Concave}
    \end{subfigure}

    \vspace{0.5em}

    \begin{subfigure}[b]{0.48\textwidth}
        \includegraphics[width=\textwidth]{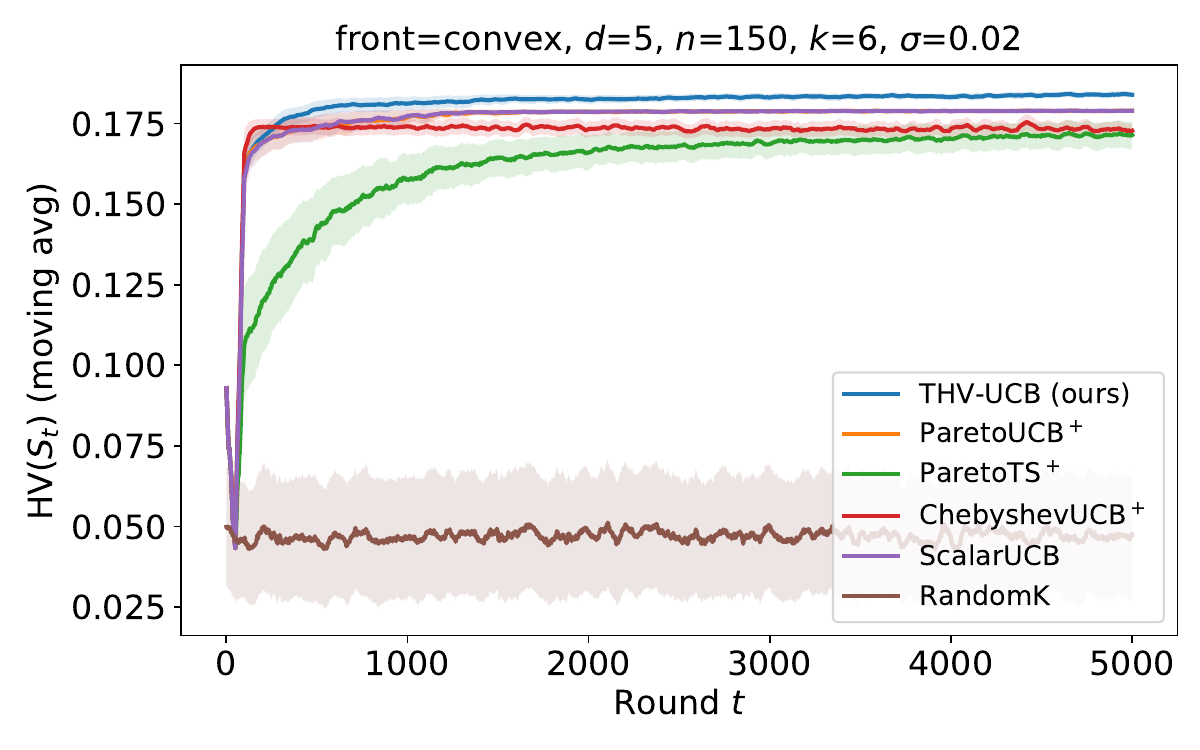}
        \caption{Convex}
    \end{subfigure}
    \hfill
    \begin{subfigure}[b]{0.48\textwidth}
        \includegraphics[width=\textwidth]{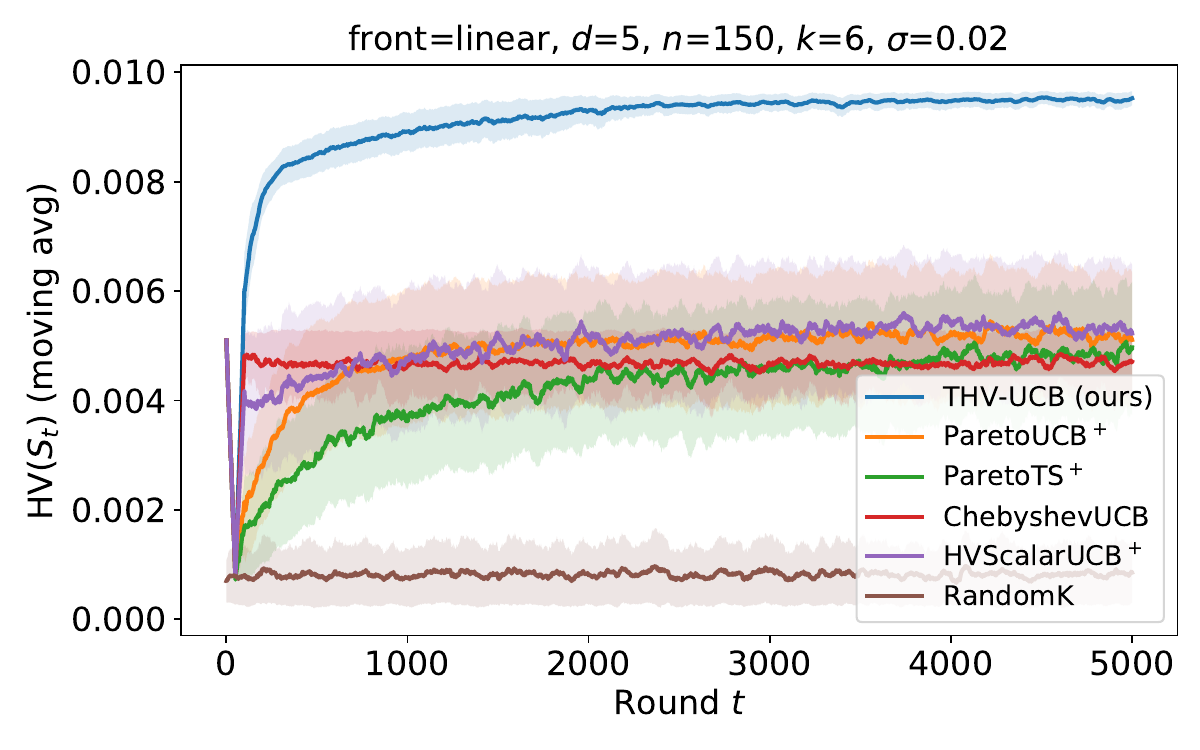}
        \caption{Linear}
    \end{subfigure}
    \caption{Hypervolume $\mathrm{HV}(S_t)$ trajectories (moving average, $w=50$) with $95\%$ CIs, over four synthetic Pareto front geometries ($d=5$, $n=150$, $k=6$,  $\sigma=0.02$, $T=5000$, 10 seeds). Each panel shows the best representative per algorithm family against THV-UCB (ours).}
    \label{fig:synthetic-fronts-d5}
\end{figure*}




\clearpage

\subsection{Statistical tests}
\begin{table*}[ht!]
\centering

\setlength{\tabcolsep}{4pt}
\resizebox{\textwidth}{!}{%
\begin{tabular}{llcccccc}
\toprule
\textbf{Setting} & \textbf{Baseline} & $\boldsymbol{\Delta}$ & \textbf{Bootstrap 95\% CI} & \textbf{Cohen's $d$} & \textbf{Wilcoxon $p$} & \textbf{Wins} \\
\midrule
$d=2$, clusters & ParetoUCB-Div    & $+0.0048$ & $[+0.0034,\, +0.0063]$ & $1.92$  & $<0.001$ & $10/10$ \\
$d=3$, concave  & ChebyshevUCB$^+$ & $+0.0169$ & $[+0.0161,\, +0.0177]$ & $12.32$ & $<0.001$ & $10/10$ \\
$d=3$, convex   & ParetoTS$^+$     & $+0.0430$ & $[+0.0418,\, +0.0442]$ & $20.45$ & $<0.001$ & $10/10$ \\
$d=4$, clusters & ChebyshevUCB$^+$ & $+0.0054$ & $[+0.0053,\, +0.0055]$ & $31.77$ & $<0.001$ & $10/10$ \\
$d=4$, concave  & ChebyshevUCB$^+$ & $+0.0062$ & $[+0.0061,\, +0.0063]$ & $44.48$ & $<0.001$ & $10/10$ \\
$d=4$, linear   & HVScalarUCB$^+$  & $+0.0145$ & $[+0.0141,\, +0.0148]$ & $22.84$ & $<0.001$ & $10/10$ \\
$d=5$, clusters & ScalarUCB        & $+0.0012$ & $[+0.0012,\, +0.0012]$ & $29.98$ & $<0.001$ & $10/10$ \\
$d=5$, concave  & ScalarUCB        & $+0.0005$ & $[+0.0005,\, +0.0005]$ & $8.09$  & $<0.001$ & $10/10$ \\
$d=5$, convex   & ScalarUCB        & $+0.0051$ & $[+0.0049,\, +0.0053]$ & $14.97$ & $<0.001$ & $10/10$ \\
$d=5$, linear   & HVScalarUCB$^+$  & $+0.0042$ & $[+0.0041,\, +0.0044]$ & $19.55$ & $<0.001$ & $10/10$ \\
\bottomrule
\end{tabular}%
}\small
\caption{Paired statistical comparisons between THV-UCB and the second-best baseline on the settings with the tightest margins (10 random seeds, paired by environment seed).
$\Delta$ denotes the mean difference in $\mathrm{HV}_{\text{last100}}$ (THV-UCB $-$ baseline).
Bootstrap 95\% CIs are computed over $B=10{,}000$ resamples.
Wilcoxon signed-rank tests are one-sided ($H_1$: THV-UCB $>$ baseline);
\textit{Wins} = number of seeds (out of 10) on which THV-UCB outperforms the baseline.}
\label{tab:stat-tests}
\end{table*}

\clearpage

\section{Notation summary}\label{app:notation}

\begin{table}[h!]
\centering
\small
\resizebox{0.75\columnwidth}{!}{
\begin{tabular}{@{}ll@{}}
\toprule
Symbol & Meaning \\
\midrule
$n$ & Number of arms \\
$T$ & Time horizon (rounds) \\
$d$ & Number of objectives (reward dimensions) \\
$k$ & Subset size selected each round \\
$[n]$ & $\{1,\dots,n\}$ \\
$[T]$ & $\{1,\dots,T\}$ \\
$S_t \subseteq [n]$ & Slate (set) of $k$ arms chosen at round $t$ \\
$\mathbf{X}_{i,t} \in [0,1]^d$ & Reward vector of arm $i$ at round $t$ \\
$\boldsymbol{\mu}_i = \mathbb{E}[\mathbf{X}_{i,t}]$ & Mean reward vector of arm $i$ \\
$\widehat{\boldsymbol{\mu}}_i(t)$ & Empirical mean of arm $i$ up to round $t$ \\
$N_i(t)$ & Number of times arm $i$ was selected up to round $t$ \\
$\mathcal{F}_t$ & Filtration generated by observations up to round $t$ \\
$\mathbf{r} \in \mathbb{R}^d$ & Reference point for hypervolume (dominated by all means) \\
$\mathrm{HV}(S)$ & Dominated hypervolume of set $S$  \\
$\mathcal{P}$ & Pareto front (set of undominated means) \\
$\preceq, \prec$ & (Strict) Pareto dominance relations \\
$\eta$ & Confidence parameter; also governs sub-Gaussianity of reward coordinates \\
$\beta_i(t)$ & Confidence radius for arm $i$ at round $t$ \\
$S^\star \in \arg\max_{|S|=k}\mathrm{HV}(S)$ & Best size-$k$ subset in hindsight \\
$V^\star = \mathrm{HV}(S^\star)$ & Benchmark hypervolume value \\
$G^\star = \{i^\star_1,\dots,i^\star_k\}$ & Greedy benchmark on true means \\
$G^\star_\ell$ & Length-$\ell$ prefix of the greedy benchmark $G^\star$ \\
$\Delta(i\mid S) = \mathrm{HV}(S\cup\{i\}) - \mathrm{HV}(S)$ & True marginal hypervolume gain \\
$\Delta_\ell(i)$ & Stage-$\ell$ gap of arm $i$ relative to $G^\star$ \\
$\Delta_{\min} = \min_{\ell\in[k]}\min_{i\neq i^\star_\ell}\Delta_\ell(i)$ & Minimum stage-wise gap \\
$C_d$ & Coordinate-wise Lipschitz constant of $\mathrm{HV}$ ($C_d \leq d$) \\
$\mathcal{E}$ & High-probability event on which all confidence intervals hold \\
$r_t = V^\star - \mathrm{HV}(S_t)$ & Instantaneous regret w.r.t.\ $V^\star$  \\
$R_T=\sum_{t=1}^T r_t$ & Cumulative regret  \\
$\alpha$ & Greedy approximation factor ($1-1/e$) \\
$\bar r_t = \alpha V^\star - \mathrm{HV}(S_t)$ & Instantaneous $\alpha$-regret \\
$\bar R_T=\sum_{t=1}^T \bar r_t$ & Cumulative $\alpha$-regret  \\
$\|\cdot\|_\infty$ & $\ell_\infty$ norm \\
\bottomrule
\end{tabular}
}
\caption{Notation summary.}
\label{tab:notation}
\end{table}